\newtheorem{lemma}{Lemma} 
\newtheorem{proposition}{Proposition} 
\newtheorem{theorem}{Theorem} 
\newtheorem{corollary}{Corollary}
\newtheorem{definition}{Definition} 
\theoremstyle{remark}
\newtheorem{example}{Example}
\def\hy{\hbox{-}\nobreak\hskip0pt} 
\newcommand{\SB}{\{\,}%
\newcommand{\SM}{\;{:}\;}%
\newcommand{\SE}{\,\}}%
\newcommand{\Card}[1]{|#1|}
\newcommand{\CCard}[1]{\|#1\|}
\let\phi=\varphi
\let\epsilon=\varepsilon
\newcommand{\UP}{\text{\normalfont{UP}}}
\newcommand{\Nat}{\mathbb{N}}
\newcommand{\AAA}{\mathsf{A}}
\newcommand{\CCC}{\mathscr{C}}
 \newcommand{\TTT}{\mathcal{T}}
\newcommand{\PP}{P}
\newcommand{\QQ}{Q}
\newcommand{\mtext}[1]{\text{\normalfont #1}}
\newcommand{\scope}{\mathit{scope}}
\newcommand{\dom}{\mathit{dom}}
\newcommand{\Prob}{\text{Pr}}
\newcommand{\True}{\mathrm{true}}
\newcommand{\False}{\mathrm{false}}
\renewcommand{\P}{\text{\normalfont P}}
\newcommand{\NP}{\text{\normalfont NP}}
\newcommand{\coNP}{\text{\normalfont coNP}}
\newcommand{\FPT}{\text{\normalfont FPT}}
\newcommand{\PR}[2]{\text{\normalfont\textsc{\bfseries #1}(#2)}}
\newcommand{\RPR}[2]{\text{\normalfont\textsc{\bfseries Rec\hy #1}(#2)}}
\newcommand{\HORN}{\mtext{\sc Horn}}
\newcommand{\TWOCNF}{\mtext{2CNF}}
\newtheorem{boldclaim}{Claim}
\newtheorem{property}{Property}
\newcommand{\set}[1]{\{ #1 \}}
\newcommand{\NV}{\textsc{NValue}\xspace}
\newcommand{\AMNV}{\textsc{AtMost-NValue}\xspace}
\newcommand{\ALNV}{\textsc{AtLeast-NValue}\xspace}
\newcommand{\EGClong}{\textsc{Extended Global Cardinality}\xspace}
\newcommand{\EGC}{\textsc{EGC}\xspace}
\newcommand{\AD}{\textsc{AllDifferent}\xspace}
\newcommand{\nbholes}{\text{\#holes}}
\newcommand{\fpt}{fixed-parameter tractable\xspace}
\newcommand{\mydef}[1]{\emph{#1}} %\marginpar{#1}}
\newcommand{\mydefalt}[2]{\emph{#1}} %\marginpar{#2}}
\newcommand{\myIff}{if and only if\xspace}
\newcommand{\etal}{\emph{et al.}}
\newcommand{\Bessiere}{Bessi{\`e}re\xspace}
\newcommand{\hac}{hyper arc consistent\xspace}
\newcommand{\HAC}{HAC\xspace}
\newcommand{\lep}{\mathsf{l}}
\newcommand{\rep}{\mathsf{r}}
\newcommand{\ivl}{\mathsf{ivl}}
\newcommand{\RedIncl}{\textbf{Red-$\subseteq$}\xspace}
\newcommand{\RedDom}{\textbf{Red-Dom}\xspace}
\newcommand{\RedUnit}{\textbf{Red-Unit}\xspace}
\newcommand{\comment}[1]{\relax} %{\color{red}{[#1]}\color{black}}
\begin{document}

  \title{Guarantees and Limits of Preprocessing in\\Constraint Satisfaction and Reasoning%
\thanks{Preliminary and
    shorter versions of this paper appeared in the proceedings of IJCAI 2011 \cite{GaspersSzeider11} and AAAI 2011 \cite{Szeider11a}.}}
  
  \author{Serge Gaspers\thanks{UNSW Australia and NICTA, Sydney, Australia} \and
          Stefan Szeider\thanks{Institute of Information Systems, Vienna University
    of Technology, Austria}}
  \date{}
   
  \maketitle
   
  \begin{abstract}
    We present a first theoretical analysis of the power of
    polynomial-time preprocessing for important combinatorial problems
    from various areas in AI. We consider problems from Constraint
    Satisfaction, Global Constraints, Satisfiability, Nonmonotonic and
    Bayesian Reasoning under structural restrictions. All these
    problems involve two tasks: (i)~identifying the structure in the
    input as required by the restriction, and (ii)~using the
    identified structure to solve the reasoning task efficiently.  We
    show that for most of the considered problems, task~(i) admits a
    polynomial-time preprocessing to a problem kernel whose size is
    polynomial in a structural problem parameter of the input, in
    contrast to task~(ii) which does not admit such a reduction to a
    problem kernel of polynomial size, subject to a complexity
    theoretic assumption. As a notable exception we show that the
    consistency problem for the \textsc{AtMost-NValue} constraint
    admits a polynomial kernel consisting of a quadratic number of
    variables and domain values.  Our results provide a firm
    worst-case guarantees and theoretical boundaries for the
    performance of polynomial-time preprocessing algorithms for the
    considered problems.
    
    \bigskip\noindent \emph{Keywords:}  Fixed-Parameter Tractability;
 Kernelization;
 Constraint Satisfaction;
 Reasoning;
 Computational Complexity
  \end{abstract}

% , most of the considered problems cannot be reduced by
%     polynomial-time preprocessing to a problem kernel whose size is
%     polynomial in a structural problem parameter of the input, such as
%     induced width or backdoor size.

\pagestyle{plain}

\section{Introduction}

Many important computational problems that arise in various areas of AI
are intractable. Nevertheless, AI research has been very successful in
developing and implementing heuristic solvers that work well on
real-world instances.  An important component of virtually every solver
is a powerful polynomial-time preprocessing procedure that reduces the
problem input. For instance, preprocessing techniques for the
propositional satisfiability problem are based on Boolean Constraint
Propagation (see, e.g., \cite{EenBiere05}), CSP solvers make use of
various local consistency algorithms that filter the domains of
variables~(see, e.g., \cite{Bessiere06}); similar preprocessing methods
are used by solvers for Nonmonotonic and Bayesian reasoning problems
(see, e.g., \cite{GebserKaufmannNeumannSchaub08,BoltGaag06},
respectively).  The history of preprocessing, like applying reduction
rules to simplify truth functions, can be traced back to the 1950's
\cite{Quine59}. A natural question in this regard is how to measure the
quality of preprocessing rules proposed for a specific problem.

Until recently, no provable performance guarantees for polynomial-time
preprocessing methods have been obtained, and so preprocessing was only
subject of empirical studies.  A possible reason for the lack of
theoretical results is a certain inadequacy of the P vs NP framework for
such an analysis: if we could reduce in polynomial time an instance of
an NP-hard problem just by one bit, then we can solve the entire problem
in polynomial time by repeating the reduction step a polynomial number
of times, and $\P=\NP$ follows.

With the advent of \emph{parameterized complexity}
\cite{DowneyFellows99}, a new theoretical framework became
available that provides suitable tools to analyze the power of
preprocessing. Parameterized complexity considers a problem in a
two-dimensional setting, where in addition to the input size~$n$, a
\emph{problem parameter~$k$} is taken into consideration.  This
parameter can encode a structural aspect of the problem instance. A
problem is called \emph{fixed-parameter tractable} (FPT) if it can be
solved in time $f(k)p(n)$ where $f$ is a function of the parameter $k$
and $p$ is a polynomial of the input size $n$. Thus, for FPT problems,
the combinatorial explosion can be confined to the parameter and is
independent of the input size.  It is known that a problem is
fixed-parameter tractable if and only if every problem input can be
reduced by polynomial-time preprocessing to an equivalent input whose
size is bounded by a function of the parameter
\cite{DowneyFellowsStege99}. The reduced instance is called the
\emph{problem kernel}, the preprocessing is called 
\emph{kernelization}. The power of polynomial-time preprocessing can now
be benchmarked in terms of the size of the kernel.  Once a small kernel
is obtained, we can apply any method of choice to solve the kernel:
brute-force search, heuristics, approximation,
etc.~\cite{GuoNiedermeier07}.  Because of this flexibility a small
kernel is generally preferable to a less flexible branching-based
fixed-parameter algorithm. Thus, small kernels provide an additional
value that goes beyond bare fixed-parameter tractability.

Kernelization is an important algorithmic technique that has become the
subject of a very active field in state-of-the-art combinatorial
optimization~(see, e.g., the references in
\cite{Fellows06,GuoNiedermeier07,LokshtanovMisraSaurabh12,Rosamond10}).  Kernelization can be
seen as a \emph{preprocessing with performance guarantee} that reduces a
problem instance in polynomial time to an equivalent instance, the
\emph{kernel}, whose size is a function of the parameter
\cite{Fellows06,Fomin10,GuoNiedermeier07,LokshtanovMisraSaurabh12}.

Once a kernel is obtained, the time required to solve the instance is
a function of the parameter only and therefore independent of the input
size. While, in general, the time needed to solve an instance does not necessarily
depend on the size of the instance alone, the kernelization view is that
it preprocesses the easy parts of an instance, leaving a core instance encoding
the hard parts of the problem instance.
Naturally one aims at kernels that are as small as possible,
in order to guarantee good worst-case running times as a function of the parameter,
and the kernel size provides a performance guarantee for the preprocessing.
Some NP-hard combinatorial problems such as $k$-\textsc{Vertex Cover}
admit polynomially sized kernels, for others such as $k$-\textsc{Path} an
exponential kernel is the best one can hope for
\cite{BodlaenderDowneyFellowsHermelin09}.

As an example of a polynomial kernel, consider the $k$-\textsc{Vertex Cover}
problem, which, for a graph $G=(V,E)$ and an integer parameter $k$,
is to decide whether there is a set $S$ of at most $k$ vertices such that each edge
from $E$ has at least one endpoint in $S$.
Buss' kernelization algorithm for $k$-\textsc{Vertex Cover} (see \cite{BussGoldsmith93}) computes the set $U$ of
vertices with degree at least $k+1$ in $G$. If $|U|>k$, then reject the instance, i.e.,
output a trivial No-instance (e.g., the graph $K_2$ consisting of one edge and the parameter $0$),
since every vertex cover of size at most $k$ contains each vertex from $U$.
Otherwise, if $G \setminus U$ has more than $k (k-|U|)$ edges, then reject the instance,
since each vertex from $G \setminus U$ covers at most $k$ edges.
Otherwise, output the instance $(G \setminus (U \cup L), k-|U|)$, where $L$ is the set of
degree-0 vertices in $G\setminus U$. This instance has $O(k^2)$ vertices and edges.
Thus, Buss' kernelization algorithm gives a quadratic kernel for $k$-\textsc{Vertex Cover}.

% 
% In general the size of the kernel is exponential in the parameter, but
% many important $\NP$-hard optimization problems such as Minimum Vertex
% Cover, parameterized by solution size, admit \emph{polynomial kernels},
% see, e.g., \cite{BodlaenderDowneyFellowsHermelin09} for references.

\medskip

In previous research several NP-hard AI problems have been shown to be
fixed-parameter tractable. We list some important examples from various
areas:

\begin{enumerate}
\item \sloppypar Constraint satisfaction problems (CSP) over a fixed universe of
  values, parameterized by the induced
  width~\cite{GottlobScarcelloSideri02}.
\item Consistency and generalized arc consistency for intractable global
  constraints, parameterized by the cardinalities of certain sets of
  values \cite{BessiereEtal08}.
\item Propositional satisfiability (SAT), parameterized by the size of
  backdoors~\cite{NishimuraRagdeSzeider04-informal}.
\item Positive inference in Bayesian networks with variables of bounded
  domain size, parameterized by size of loop
  cutsets~\cite{Pearl88,BidyukDechter07}.
\item Nonmonotonic reasoning with normal logic programs, parameterized by
  feedback width~\cite{GottlobScarcelloSideri02}.
\end{enumerate}
% However, only exponential kernels are known for these fundamental AI
% problems.  Can we hope for polynomial kernels? 

\noindent All these problems involve the following two tasks. 
\begin{enumerate}
\item[(i)] \emph{Structure Recognition Task}: identify the structure
  in the input as required by the considered parameter.
\item[(ii)] \emph{Reasoning Task}: use the identified structure to
  solve a reasoning task efficiently.
\end{enumerate}

For most of the considered problems we observe the following pattern:
the Structure Recognition Task admits a polynomial kernel, in contrast
to the Reasoning Task, which does not admit a polynomial kernel,
unless the Polynomial Hierarchy collapses to its third level.

A negative exception to this pattern is the recognition problem for CSPs of
small induced width, which most likely does not admit a polynomial kernel.

A positive exception to this pattern is the \AMNV global constraint,
for which we obtain a polynomial kernel.
As in \cite{BessiereEtal08}, the parameter is the number of holes in the domains of the variables, measuring how close the domains are to being intervals.
More specifically, we
present a \emph{linear time} preprocessing algorithm that reduces an
\AMNV constraint~$C$ with $k$ holes to a consistency-equivalent \AMNV
constraint $C'$ of size polynomial in $k$. In fact, $C'$ has at most
$O(k^2)$ variables and $O(k^2)$ domain values.  We also give an
improved branching algorithm checking the consistency of $C'$ in time
$O(1.6181^k+n)$.  The combination of kernelization and branching
yields efficient algorithms for the consistency and propagation of
\mbox{(\textsc{AtMost}-)}\textsc{NValue} constraints.

\subsection*{Outline}

This article is organized as follows.
Parameterized complexity and kernelization are formally introduced in Section \ref{sec:background}.
Section \ref{sec:tools} describes the tools we use to show that certain parameterized problems do not have polynomial kernels.
Sections \ref{sec:CSP}--\ref{sec:nonmonotonic} prove kernel lower bounds for parameterized problems in the areas of constraint networks, satisfiability, global constraints, Bayesian reasoning, and nonmonotonic reasoning.
Each of these sections also gives all necessary definitions, relevant background, and related work for the considered problems.
In addition, Section \ref{sec:global} describes a polynomial kernel for the consistency problem for the \AMNV constraint parameterized by the number of holes in the variable domains, and an \FPT\ algorithm that uses this kernel as a first step.
The correctness and performance guarantees of the kernelization algorithm are only outlined in Section \ref{sec:global} and proved in detail in \ref{app:kernel}.
The conclusion, Section \ref{sec:conclusion}, broadly recapitulates the results and suggests the study of Turing kernels to overcome the shortcomings of (standard) kernels for many fundamental AI and Resoning problems.

\section{Formal Background}
\label{sec:background}

A \emph{parameterized problem} $\PP$ is a subset of $\Sigma^* \times
\Nat$ for some finite alphabet $\Sigma$. For a problem instance $(x,k)
\in \Sigma^* \times \Nat$ we call $x$ the main part and $k$ the
parameter. We assume the parameter is represented in unary.  For the
parameterized problems considered in this paper, the parameter is a
function of the main part, i.e., $k=\pi(x)$ for a function $\pi$. We
then denote the problem as $\PP(\pi)$, e.g.,  $\PR{$U$-CSP}{width}$
denotes the problem \textbf{$U$-CSP} parameterized by the width of the
given tree decomposition.

A parameterized problem $\PP$ is \emph{fixed-parameter tractable} if
there exists an algorithm that solves any input $(x,k) \in \Sigma^*
\times \Nat$ in time $O(f(k) \cdot p(\Card{x}))$ where $f$ is an
arbitrary computable function of $k$ and $p$ is a polynomial in~$|x|$.

A \emph{kernelization} for a parameterized problem $\PP \subseteq
\Sigma^* \times \Nat$ is an algorithm that, given $(x, k) \in \Sigma^*
\times \Nat$, outputs in time polynomial in $\Card{x}+k$ a pair $(x',
k') \in \Sigma^* \times \Nat$ such that 
\begin{enumerate}
\item $(x,k) \in \PP$ if and only if $(x',k') \in \PP$, and
\item $\Card{x'}+k' \leq g(k)$, where $g$ is an arbitrary computable
  function, called the \emph{size} of the kernel.
\end{enumerate}
In particular, for constant $k$ the kernel has constant size $g(k)$.  If
$g$ is a polynomial then we say that $\PP$ admits a \emph{polynomial
  kernel}.

Every fixed-parameter tractable problem admits a kernel. This can be
seen by the following argument due to Downey \etal~\cite{DowneyFellowsStege99}.
Assume we can decide instances $(x,k)$ of problem $\PP$ in time
$f(k)\Card{x}^{O(1)}$. We kernelize an instance $(x,k)$ as follows. If
$\Card{x}\leq f(k)$ then we already have a kernel of size $f(k)$.
Otherwise, if $\Card{x}> f(k)$, then $f(k)\Card{x}^{O(1)} =
\Card{x}^{O(1)}$ is a polynomial; hence we can decide the instance in
polynomial time and replace it with a small decision-equivalent instance
$(x',k')$. Thus we always have a kernel of size at most $f(k)$.
However, $f(k)$ is super-polynomial for NP-hard problems (unless
$\P=\NP$), hence this generic construction does not provide polynomial
kernels.

We understand \emph{preprocessing} for an NP-hard problem as a
\emph{polynomial-time} procedure that transforms an instance of the
problem to a (possible smaller) solution-equivalent instance of the same
problem.  Kernelization is such a preprocessing with a \emph{performance
  guarantee}, i.e., we are guaranteed that the preprocessing yields a
kernel whose size is bounded in terms of the parameter of the given
problem instance. In the literature also different forms of
preprocessing have been considered. An important one is \emph{knowledge
  compilation}, a two-phases approach to reasoning problems where in a
first phase a given knowledge base is (possibly in exponential time)
preprocessed (``compiled''), such that in a second phase various queries
can be answered in polynomial time~\cite{CadoliDoniniLiberatore02}.

\section{Tools for Kernel Lower Bounds}
\label{sec:tools}

In the sequel we will use recently developed tools to obtain kernel
lower bounds. Our kernel lower bounds are subject to the widely believed
complexity theoretic assumption $\NP \not\subseteq \coNP/\text{\normalfont poly}$.
In other words, the tools
allow us to show that a parameterized problem does not admit a
polynomial kernel unless $\NP \subseteq \coNP/\text{\normalfont poly}$.
In particular, $\NP \subseteq \coNP/\text{\normalfont poly}$ would imply the collapse of the
Polynomial Hierarchy to the third level: $\text{PH} = \Sigma_p^3$ \cite{Papadimitriou94}.

A \emph{composition algorithm} for a parameterized problem $\PP
\subseteq \Sigma^* \times \Nat$ is an algorithm that receives as input a
sequence $(x_1, k),\dots,(x_t, k) \in \Sigma^* \times \Nat$, uses time
polynomial in $\sum_{i=1}^t \Card{x_i}+k$, and outputs $(y,k') \in
\Sigma^* \times \Nat$ with (i)~$(y,k')\in \PP$ if and only if $(x_i,k)
\in \PP$ for some $1 \leq i \leq t$, and (ii)~$k'$ is polynomial in~$k$.
A parameterized problem is \emph{compositional} if it has a composition
algorithm.  With each parameterized problem $\PP\subseteq \Sigma^*
\times \Nat$ we associate a classical problem 
\[\UP[\PP]=\SB x\#1^k \SM (x,k)\in P \SE\] 
where $1$ denotes an arbitrary symbol from $\Sigma$ and
$\#$ is a new symbol not in $\Sigma$.  We call $\UP[\PP]$ the
\emph{unparameterized version} of~$\PP$.

The following result is the basis for our kernel lower bounds.

\begin{theorem}[\cite{BodlaenderDowneyFellowsHermelin09,FortnowSanthanam08}]\label{the:comp}
  Let $\PP$ be a parameterized problem whose unparameterized version is
  $\NP$-complete. If $\PP$ is compositional, then it does not admit a
  polynomial kernel unless $\NP \subseteq \coNP/\text{\normalfont poly}$.
\end{theorem}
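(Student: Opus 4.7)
The plan is to argue the contrapositive: assuming that $\PP$ both admits a composition algorithm and a polynomial kernel, I will combine the two to obtain an \emph{OR-distillation} of an $\NP$-complete language, and then invoke the Fortnow--Santhanam collapse theorem which says that any $\NP$-hard problem with such a distillation forces $\NP \subseteq \coNP/\text{\normalfont poly}$. Since $\UP[\PP]$ is $\NP$-complete by hypothesis, it is natural to take it, or any fixed $\NP$-complete language $L$ reducing to it, as the target of the distillation.

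Concretely, I would fix a polynomial-time many-one reduction from some $\NP$-complete $L$ to $\UP[\PP]$, and from $t$ instances $z_1, \dots, z_t \in \Sigma^*$ of $L$, each of length at most $n$, first produce parameterized instances $(x_1, k_1), \dots, (x_t, k_t)$ of $\PP$. By standard padding of the $z_i$ to a common length, I can arrange that all parameters coincide with a single value $k \leq q(n)$ for some polynomial $q$. Applying the composition algorithm to $(x_1, k), \dots, (x_t, k)$ yields, in time polynomial in $\sum_i \Card{x_i} + k$, a single instance $(y, k') \in \Sigma^* \times \Nat$ with $k' \leq r(k)$ polynomial in $k$ and $(y, k') \in \PP$ iff some $(x_i, k) \in \PP$. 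The length of $y$ may be large (possibly polynomial in $t$), so I then feed $(y, k')$ to the polynomial kernelization, obtaining $(y', k'')$ with $\Card{y'} + k'' \leq p(k') \leq p(r(q(n)))$, polynomial in $n$ and crucially \emph{independent of $t$}, while preserving membership in $\PP$. Composing with the reverse of the reduction (or passing to $\UP[\PP]$ directly), this is exactly an OR-distillation of $L$: a polynomial-time map from $t$ instances of size $n$ to one instance of an $\NP$ language of size $\text{poly}(n)$ whose answer is the disjunction of the original answers.

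The main obstacle, and the only nontrivial ingredient beyond careful bookkeeping of the polynomials $p, q, r$, will be the Fortnow--Santhanam step that turns an OR-distillation of an $\NP$-hard language into the collapse $\NP \subseteq \coNP/\text{\normalfont poly}$. Its proof rests on a Yao-style probabilistic/information-theoretic argument: the short distilled instance cannot ``cover'' too many YES-instances of size~$n$, so a random short advice string separates YES from NO with noticeable probability, yielding a $\coNP/\text{\normalfont poly}$ algorithm. I plan to use this as a black box from~\cite{BodlaenderDowneyFellowsHermelin09,FortnowSanthanam08}; the remainder of the proof is the reduction-plus-composition-plus-kernel construction sketched above, together with the verification that both distillation requirements (polynomial running time in $t + n$, and final size polynomial in $n$ alone) are met.
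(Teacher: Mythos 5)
First, note that the paper itself gives no proof of this theorem: it is imported verbatim from \cite{BodlaenderDowneyFellowsHermelin09,FortnowSanthanam08} and used as a black box, so there is no in-paper argument to compare yours against. Your overall route is the standard one from those references: combine the hypothesised composition algorithm and polynomial kernel into an OR-distillation of an $\NP$-complete language, and then invoke the Fortnow--Santhanam collapse. That part of the plan, including treating the probabilistic/non-uniform step as a black box, is sound, and your bookkeeping of the two distillation requirements (running time polynomial in $t+n$, output size polynomial in $n$ alone) is correct.

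There is, however, one step that does not work as written: the claim that ``by standard padding of the $z_i$ to a common length'' you can force all the reduced instances $(x_i,k_i)$ to carry a single common parameter $k$. A many-one reduction applied to equal-length inputs need not produce equal parameters (the parameter is a function of the instance, not of its length), and padding need not even preserve membership in $L$ without further argument. The composition algorithm, as defined in Section~3, really does require all $t$ inputs to share the same $k$, so this must be repaired. The standard fix, used in \cite{BodlaenderDowneyFellowsHermelin09}, is to observe that each $k_i$ is polynomially bounded in $n$, partition the $t$ instances into at most $\mathrm{poly}(n)$ groups of equal parameter, compose and then kernelize each group separately, and finally combine the resulting $\mathrm{poly}(n)$ kernels---each of size $\mathrm{poly}(n)$---into a single instance by exploiting $\NP$-completeness of the target language (the disjunction of polynomially many $\NP$ queries of polynomial size reduces back to one instance of $L$ of polynomial size). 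With that replacement your argument goes through; without it the distillation is never actually assembled.
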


Let $\PP,\QQ\subseteq \Sigma^* \times \Nat$ be parameterized
problems. We say that $\PP$ is \emph{polynomial parameter reducible} to
$\QQ$ if there exists a polynomial time computable function $K: \Sigma^*
\times \Nat \rightarrow \Sigma^* \times \Nat$ and a polynomial $p$, such
that for all $(x,k) \in \Sigma^* \times \Nat$ we have (i)~$(x,k) \in
\PP$ if and only if $K(x,k) =(x',k')  \in \QQ$, and (ii)~$k'\leq
p(k)$. The function $K$ is called a \emph{polynomial parameter
  transformation}.

The following theorem allows us to transform kernel lower bounds from
one problem to another.
\begin{theorem}[\cite{BodlaenderThomasseYeo09}]\label{the:trans}
  Let $\PP$ and $\QQ$ be  parameterized problems such that $\UP[\PP]$
  is $\NP$-complete, $\UP[\QQ]$ is in $\NP$, and there is a polynomial
  parameter transformation from $\PP$ to~$\QQ$. If $\QQ$ has a
  polynomial kernel, then $\PP$ has a polynomial kernel.
\end{theorem}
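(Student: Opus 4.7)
The plan is to chain together three polynomial-time transformations so that an arbitrary instance of $\PP$ is first sent into $\QQ$, then shrunk by the given polynomial kernel of $\QQ$, and finally sent back into $\PP$ via a reduction coming from $\NP$-completeness. Concretely, let $K$ be the polynomial parameter transformation from $\PP$ to $\QQ$ with parameter bound $p$, and let $\KKK$ be the supposed polynomial kernelization for $\QQ$ whose output size is bounded by some polynomial $q$. Given an instance $(x,k)$ of $\PP$, I would first compute $(x',k') = K(x,k)$ with $k' \leq p(k)$, and then apply $\KKK$ to obtain an instance $(x'',k'')$ of $\QQ$ with $\Card{x''}+k'' \leq q(k') \leq q(p(k))$.

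The trouble is that $(x'',k'')$ lives in $\QQ$, not in $\PP$, so I still need to transport it back. Here is where the hypotheses on the unparameterized versions do the work. Since $\UP[\QQ]\in \NP$ and $\UP[\PP]$ is $\NP$-complete, classical Karp reducibility gives a polynomial-time reduction $R$ from $\UP[\QQ]$ to $\UP[\PP]$. I would apply $R$ to the string $x''\#1^{k''}$, obtaining a string $y$ that belongs to $\UP[\PP]$ iff $(x'',k'')\in \QQ$ iff $(x,k)\in \PP$. Parsing $y$ as $y = z\#1^{k'''}$ yields an instance $(z,k''')$ of $\PP$, and since $R$ runs in polynomial time, $\Card{y}$ is polynomial in $\Card{x''}+k''$, hence polynomial in $k$; in particular both $\Card{z}$ and $k'''$ are polynomially bounded in $k$.

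Putting the three stages together, the whole procedure runs in polynomial time in $\Card{x}+k$ and outputs an instance $(z,k''')$ of $\PP$ that is equivalent to $(x,k)$ and whose total size is bounded by a polynomial in $k$. This is by definition a polynomial kernel for $\PP$. The step I expect to require the most care is the last one: one must argue that the image $y$ under $R$ can indeed be parsed as a legitimate encoding $z\#1^{k'''}$ of a parameterized instance, and that $k'''$ (read off in unary from the suffix) stays polynomial in $k$. This is routine once one remembers that $R$ is polynomial-time and that the parameter in $\UP[\PP]$ is encoded in unary, so the length of the unary suffix is automatically polynomial in $\Card{y}$, and thus in $k$; nothing beyond standard manipulation of the unparameterized encoding is needed.
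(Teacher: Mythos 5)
Your argument is the standard proof of this result: the paper itself states the theorem only with a citation to \cite{BodlaenderThomasseYeo09} and gives no proof, and your three-stage composition (apply the polynomial parameter transformation $K$, then the assumed kernelization of $\QQ$, then a Karp reduction from $\UP[\QQ]$ to $\UP[\PP]$ obtained from $\NP$-membership and $\NP$-completeness) is exactly the argument of that reference, with the correct size bound $\Card{z}+k'''\leq\Card{y}\leq\mathrm{poly}(q(p(k)))$. The only point you leave slightly implicit is the case where the string $y$ output by $R$ fails to parse as $z\#1^{k'''}$; then $y\notin\UP[\PP]$, so the kernelization just outputs a fixed constant-size no-instance of $\PP$, and with that one-line fix the proof is complete.
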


\section{Constraint Networks}
\label{sec:CSP}

\emph{Constraint networks} have proven successful in modeling everyday
cognitive tasks such as vision, language comprehension, default
reasoning, and abduction, as well as in applications such as scheduling,
design, diagnosis, and temporal and spatial reasoning \cite{Dechter10}.
A \emph{constraint network} is a triple $I = (V, U, \CCC)$ where $V$ is
a finite set of variables, $U$ is a finite universe of values, and
$\CCC=\{C_1,\dots,C_m\}$ is set of constraints. Each constraint $C_i$ is
a pair $(S_i,R_i)$ where $S_i$ is a list of variables of length $r_i$
called the \emph{constraint scope}, and $R_i$ is an $r_i$-ary relation
over $U$, called the \emph{constraint relation}. The tuples of $R_i$
indicate the allowed combinations of simultaneous values for the
variables $S_i$. A \emph{solution} is a mapping $\tau :V \rightarrow U$
such that for each $1\leq i \leq m$ and $S_i=(x_1,\dots,x_{r_i})$, we
have $(\tau(x_1),\dots,\tau(x_{r_i}))\in R_i$.  A constraint network is
\emph{satisfiable} if it has a solution.

With a constraint network $I = (V, U, \CCC)$ we associate its
\emph{constraint graph} $G=(V,E)$ where $E$ contains an edge between two
variables if and only if they occur together in the scope of a
constraint.  A \emph{width $w$ tree decomposition} of a graph $G$ is a
pair $(T,\lambda)$ where $T$ is a tree and $\lambda$ is a labeling of
the nodes of $T$ with sets of vertices of $G$ such that the following
properties are satisfied: (i)~every vertex of $G$ belongs to
$\lambda(p)$ for some node $p$ of~$T$; (ii)~every edge of $G$ is is
contained in $\lambda(p)$ for some node $p$ of~$T$; (iii)~For each
vertex $v$ of $G$ the set of all tree nodes $p$ with $v\in \lambda(p)$
induces a connected subtree of $T$; (iv)~$\Card{\lambda(p)}-1\leq w$
holds for all tree nodes~$p$.  The \emph{treewidth} of $G$ is the
smallest $w$ such that $G$ has a width $w$ tree decomposition.  The
\emph{induced width} of a constraint network is the treewidth of its
constraint graph \cite{DechterPearl89}. 
% We note that the
% problem of finding a tree decomposition of width~$w$ is $\NP$-hard but
% fixed-parameter tractable in~$w$ \cite{Bodlaender96}.

%  However, it is
% unlikely that the problem has a polynomial kernel. 

% This follows from 

Kernelization fits perfectly into the context of Constraint Processing
where preprocessing and data reduction (e.g., in terms of local
consistency algorithms, propagation, and domain filtering) are key
methods \cite{Bessiere06,HoeveKatriel06}.

Let $U$ be a fixed universe containing at least two elements.  We
consider the following parameterized version of the constraint
satisfaction problem (CSP).  
\begin{quote}
  $\PR{$U$-CSP}{width}$

  \emph{Instance:} A constraint network $I=(V,U,\CCC)$ and a width $w$ tree
  decomposition of the constraint graph of $I$.

  \emph{Parameter:} The integer $w$.

  \emph{Question:} Is $I$ satisfiable?
\end{quote}

Associated with this problem is also the task of recognizing instances
of small treewidth. We state this problem in form of the following
decision problem.

\begin{quote}
  $\RPR{$U$-CSP}{width}$

  \emph{Instance:} A constraint network $I=(V,U,\CCC)$ and an integer
  $w\geq 0$.

  \emph{Parameter:} The integer $w$.

  \emph{Question:} Does $I$ admit a tree decomposition of width $\leq
  w$?
\end{quote}

It is well known that $\PR{$U$-CSP}{width}$ is fixed-parameter tractable
over any fixed universe
$U$~\cite{DechterPearl89,GottlobScarcelloSideri02} (for generalizations
see \cite{SamerSzeider10a}).  We contrast this
classical result and show that it is unlikely that $\PR{$U$-CSP}{width}$
admits a polynomial kernel, even in the simplest case where $U=\{0,1\}$.
\begin{theorem}\label{the:csp}
  $\PR{$\{0,1\}$-CSP}{width}$ does not admit a polynomial kernel
  unless $\NP \subseteq \coNP/\text{\normalfont poly}$.
\end{theorem}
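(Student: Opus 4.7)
The plan is to apply the composition theorem (Theorem~\ref{the:comp}). For this, I need NP-completeness of the unparameterized version and a composition algorithm. NP-completeness is immediate: a SAT instance is a $\{0,1\}$-CSP and can be equipped with the trivial tree decomposition whose single bag holds all variables, which gives a polynomial reduction from SAT; membership in NP is by guess-and-check.

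For compositionality, given a sequence $(I_1, w), \ldots, (I_t, w)$ sharing parameter $w$, I would distinguish two regimes. If $t > 2^w$, then $\sum_i |x_i| \ge t > 2^w$, so each $I_i$ can be decided in time $2^{O(w)}|x_i|$ by standard dynamic programming on its given tree decomposition; the composition returns a trivial yes- or no-instance according to the overall disjunction. If $t \le 2^w$, I would build one composed network. Take disjoint copies of the variables of each $I_i$, add $s = \lceil \log_2 t \rceil \le w$ fresh selector variables $y_1, \ldots, y_s$, and for each $i$ an activation variable $a_i$ together with a single gadget constraint on $\{a_i, y_1, \ldots, y_s\}$ that forces $a_i = 1$ exactly when $(y_1,\ldots,y_s)$ spells the binary encoding $b_i$ of $i$. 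Each original constraint $(S,R)$ of $I_i$ is replaced by a gated constraint on $S \cup \{a_i\}$ whose relation equals $R$ on the slice $a_i = 1$ and is vacuous on the slice $a_i = 0$. Then the composed network is satisfiable iff some $I_i$ is. A tree decomposition is obtained by joining the individual decompositions at a root bag containing $\{y_1, \ldots, y_s\}$, adding $a_i$ to every bag of the subtree for $I_i$, and attaching a node for the defining gadget of each $a_i$; the resulting width is at most $w+1$, polynomial (indeed linear) in $w$.

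The main technical point will be keeping the composed instance polynomial in size in the low-$t$ regime: gating each original constraint by just the single activation variable $a_i$, rather than by all $s$ selector variables, keeps the scope growth to one per constraint and the representation growth mild, while the $t$ definitional gadgets for the $a_i$'s together contribute $O(t \cdot 2^s) = O(t^2)$ bits, polynomial in the input. With compositionality and NP-completeness of the unparameterized version established, Theorem~\ref{the:comp} delivers the claimed lower bound that $\PR{$\{0,1\}$-CSP}{width}$ admits no polynomial kernel unless $\NP \subseteq \coNP/\text{\normalfont poly}$.
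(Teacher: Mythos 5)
Your overall strategy (Theorem~\ref{the:comp} via a composition algorithm, with NP-completeness of the unparameterized problem via the trivial one-bag decomposition) is sound, and your case distinction on $t$ versus $2^w$ with a logarithmic selector over $y_1,\dots,y_s$ is a legitimate alternative to the paper's construction, which instead threads a path $b_0,\dots,b_t$ through cut vertices $a_1,\dots,a_t$ and thereby avoids the case distinction entirely. However, two steps of your low-$t$ construction fail as written. First, the gated relation that ``equals $R$ on the slice $a_i=1$ and is vacuous on the slice $a_i=0$'' cannot be produced in polynomial time: constraint relations are given as explicit lists of tuples, and a constraint may have arity $r$ up to $w+1$ while $R$ contains only a few tuples, so the vacuous slice alone requires listing $2^{r}$ tuples --- exponential in the input size (the bound $t\le 2^w$ gives you no polynomial control over $2^{w+1}$, since it is a lower bound on $2^w$, not an upper bound). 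This is exactly the point the paper's construction is engineered around: on the deselected slice it admits only the single tuple $(1,\dots,1)$, so each relation grows by one tuple, and a deselected instance is satisfied by setting all of its variables to~$1$. You need the same escape-tuple device; ``mild representation growth'' does not hold for a literally vacuous slice.

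Second, when $t$ is not a power of two there are assignments to $y_1,\dots,y_s$ that encode no index $i\le t$; your gadgets then force every $a_i$ to $0$, every gated constraint becomes vacuous, and the composed network is satisfiable regardless of whether any $I_i$ is --- breaking the ``only if'' direction of the composition. You must either pad the sequence to exactly $2^s$ instances (repeating, say, $I_1$) or add one constraint on $\{y_1,\dots,y_s\}$ whose relation is precisely $\{b_1,\dots,b_t\}$; this has at most $2^s\le 2t$ tuples and its scope already lies inside your root and gadget bags, so neither the size nor the width bound is harmed. With these two repairs (and your otherwise correct width-$(w+1)$ decomposition and polynomial-time handling of the case $t>2^w$), the argument goes through.
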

\begin{proof}
  We show that $\PR{$\{0,1\}$-CSP}{width}$ is compositional.  Let
  $(I_i,T_i)$, $1\leq i \leq t$, be a given sequence of instances of
  $\PR{$\{0,1\}$-CSP}{width}$ where $I_i=(V_i,\{0,1\},\CCC_i)$ is a
  constraint network and $T_i$ is a width $w$ tree decomposition of the
  constraint graph of $I_i$.  We may assume, w.l.o.g., that $V_i \cap
  V_j =\emptyset$ for $1\leq i < j \leq t$ (otherwise we can simply
  change the names of variables).  We form a new constraint network
  $I=(V,\{0,1\},\CCC)$ as follows.  We put $V=\bigcup_{i=1}^t V_i \cup
  \{a_1,\dots,a_t,b_0,\dots,b_t\}$ where $a_i,b_i$ are new variables. We
  define the set $\CCC$ of constraints in three groups.

  \begin{enumerate}
  \item 
  For each $1\leq i \leq t$ and each constraint
  $C=((x_1,\dots,x_r),R)\in \CCC_i$ we add to $\CCC$ a new constraint
  $C'=((x_1,\dots,x_r,a_i),R'))$ where $R'=\SB (u_1,\dots,u_r,0) \SM
  (u_1,\dots,u_r)\in R \SE \cup \{(1,\dots,1)\}$. 

  \item We add $t$ ternary constraints $C_1^*,\dots,C_t^*$ where
  $C_i^*=((b_{i-1},b_{i},a_i),R^*)$ and $R^*=\{(0,0,1)$, $(0,1,0)$,
  $(1,1,1)\}$.

  \item Finally, we add two unary constraints $C^0=((b_0),(0))$ and
  $C^1=((b_t),(1))$ which force the values of $b_0$ and $b_t$ to $0$ and
  $1$, respectively.  
\end{enumerate}

  Let $G, G_i$ be the constraint graphs of $I$ and $I_i$, respectively.
Fig.~\ref{fig:CG} shows an illustration of~$G$ for $t=4$.
\begin{figure}[tbp]
  \centering
 \begin{tikzpicture}
    \small
    \draw
    (0,0)   node (b0) {$b_0$}
    (1,0)   node (b1) {$b_1$}
    (2,0)   node (b2) {$b_2$}
    (3,0)   node (b3) {$b_3$}
    (4,0)   node (b4) {$b_4$}

    (0.5,.8)   node (a1) {$a_1$}
    (1.5,.8)   node (a2) {$a_2$}
    (2.5,.8)   node (a3) {$a_3$}
    (3.5,.8)   node (a4) {$a_4$}

    (0.5-.3,1.8)   coordinate (x1) {}
    (0.5+.3,1.8)   coordinate (y1) {}
    (0.5,1.8)   node (z1) {$\dots$}
    (0.5,2.1)   node (V1) {$V_1$}

    (1.5-.3,1.8)   coordinate (x2) {}
    (1.5+.3,1.8)   coordinate (y2) {}
    (1.5,1.8)   node (z2) {$\dots$}
    (1.5,2.1)   node (V2) {$V_2$}

    (2.5-.3,1.8)   coordinate (x3) {}
    (2.5+.3,1.8)   coordinate (y3) {}
    (2.5,1.8)   node (z3) {$\dots$}
    (2.5,2.1)   node (V3) {$V_3$}

    (3.5-.3,1.8)   coordinate (x4) {}
    (3.5+.3,1.8)   coordinate (y4) {}
    (3.5,1.8)   node (z4) {$\dots$}
    (3.5,2.1)   node (V4) {$V_4$}

    ;

    \draw (b0)--(b1)--(b2)--(b3)--(b4)
    (b0)--(a1)--(b1)--(a2)--(b2)--(a3)--(b3)--(a4)--(b4)
    (a1)--(x1) (a1)--(y1)
    (a2)--(x2) (a2)--(y2)
    (a3)--(x3) (a3)--(y3)
    (a4)--(x4) (a4)--(y4)

    ;

 \end{tikzpicture}%
 \caption{Constraint graph $G$.} \label{fig:CG}
 \end{figure}
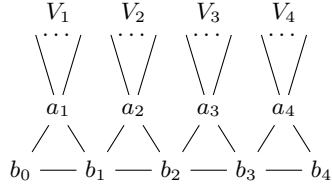
We observe that $a_1,\dots,a_t$ are cut vertices of~$G$. Removing these
vertices separates $G$ into independent parts $P,G_1',\dots,G_t'$ where
$P$ is the path $b_0,b_1,\dots,b_t$, and $G_i'$ is isomorphic to $G_i$.
By standard techniques (see, e.g., \cite{Kloks94}), we can put the given
width $w$ tree decompositions $T_1,\dots,T_t$ of $G_1',\dots,G_t'$ and
the trivial width~1 tree decomposition of $P$ together to a width $w+1$
tree decomposition $T$ of~$G$. Clearly $(I,T)$ can be obtained from
$(I_i,T_i)$, $1\leq i \leq t$, in polynomial time. 

We claim that $I$ is satisfiable if and only if at least one of the
$I_i$ is satisfiable.  
This claim can be verified by means of the
following observations: The constraints in groups (2) and (3) provide
that for any satisfying assignment there will be some $0\leq i \leq t-1$
such that $b_0,\dots,b_i$ are all set to 0 and $b_{i+1},\dots,b_t$ are
all set to $1$; consequently $a_i$ is set to $0$ and all $a_j$ for
$j\neq i$ are set to~1. The constraints in group (1) provide that if we
set $a_i$ to $0$, then we obtain from $C'$ the original constraint $C$;
if we set $a_i$ to $1$ then we obtain a constraint that can be satisfied
by setting all remaining variables to~$1$. We conclude that
$\PR{$\{0,1\}$-CSP}{width}$ is compositional.

 In order to apply Theorem~\ref{the:comp}, we need
  to establish that the unparameterized version of
  $\PR{$\{0,1\}$-CSP}{width}$ is $\NP$-complete. Deciding whether a
  constraint network~$I$ over the universe $\{0,1\}$ is satisfiable is
  well-known to be $\NP$-complete (say by reducing 3-SAT). To a
  constraint network $I$ on $n$ variables we can always add a trivial
  width $w=n-1$ tree decomposition of its constraint graph (taking a
  single tree node $t$ where $\lambda(t)$ contains all variables of
  $I$). Hence $\UP[\PR{$\{0,1\}$-CSP}{width}]$ is $\NP$-complete.
\end{proof}

Let us turn now to the recognition problem $\RPR{$U$-CSP}{width}$. By
Bodlaender's Theorem~\cite{Bodlaender96}, the problem is
fixed-parameter tractable.  However, the problem is unlikely to admit
a polynomial kernel. In fact, Bodlaender et
al.~\cite{BodlaenderDowneyFellowsHermelin09} showed that the related
problem of testing whether a graph has treewidth at most $w$ does not
have a polynomial kernel (taking $w$ as the parameter), unless a
certain ``AND-conjecture'' fails. In turn, Drucker~\cite{Drucker12} showed
that a failure of the AND-conjecture implies $\NP \subseteq
\coNP/\text{\normalfont poly}$. The combination of these two results
relates directly to $\RPR{$U$-CSP}{width}$.

\begin{proposition}
  $\RPR{$\{0,1\}$-CSP}{width}$ does not admit a polynomial kernel
  unless $\NP \subseteq \coNP/\text{\normalfont poly}$.
\end{proposition}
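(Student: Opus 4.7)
The plan is to obtain this lower bound as a corollary of the treewidth lower bound already invoked in the paragraph preceding the proposition, by giving a polynomial parameter transformation from the parameterized problem of recognizing graphs of treewidth at most $w$ to $\RPR{$\{0,1\}$-CSP}{width}$, and then appealing to Theorem \ref{the:trans}. Write $\text{TW}$ for the parameterized problem whose instance is $(G,w)$ with $G$ a graph and whose question is whether $G$ has treewidth at most $w$. The combination of Bodlaender et al.\ \cite{BodlaenderDowneyFellowsHermelin09} and Drucker \cite{Drucker12} cited above says precisely that $\text{TW}$ admits no polynomial kernel unless $\NP\subseteq \coNP/\text{\normalfont poly}$, so it suffices to pass this non-kernelizability through a polynomial parameter transformation.

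For the transformation, I would take an instance $(G,w)$ of $\text{TW}$ with $G=(V,E)$ and output the instance $(I,w)$ of $\RPR{$\{0,1\}$-CSP}{width}$ where $I=(V,\{0,1\},\CCC)$ and $\CCC$ contains one binary constraint $((u,v),\{0,1\}^2)$ for every edge $\{u,v\}\in E$. This is computable in linear time and preserves the parameter, so the polynomial bound on the image parameter is trivial. The constraint graph of $I$ is $G$ by construction; therefore $I$ admits a tree decomposition of width at most $w$ if and only if $G$ has treewidth at most $w$, yielding a valid polynomial parameter transformation from $\text{TW}$ to $\RPR{$\{0,1\}$-CSP}{width}$.

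To close the argument I would verify the two hypotheses of Theorem \ref{the:trans}: the unparameterized version of $\text{TW}$ is $\NP$-complete by the classical result of Arnborg, Corneil and Proskurowski, and the unparameterized version of $\RPR{$\{0,1\}$-CSP}{width}$ is in $\NP$ since one can guess a tree decomposition and check in polynomial time that every constraint scope is contained in some bag, that the bags form a valid tree decomposition, and that every bag has size at most $w+1$. Applying Theorem \ref{the:trans} then shows that a polynomial kernel for $\RPR{$\{0,1\}$-CSP}{width}$ would entail one for $\text{TW}$, and hence $\NP\subseteq \coNP/\text{\normalfont poly}$. I do not anticipate any real obstacle here: since ``the constraint graph'' is literally a graph and every graph arises as the constraint graph of a $\{0,1\}$-CSP with binary full-relation constraints, the transformation is essentially trivial, and the proposition is in substance just the observation that treewidth recognition embeds into CSP-width recognition with the parameter preserved.
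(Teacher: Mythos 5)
Your proof is correct and follows exactly the route the paper intends: the paper states the proposition without an explicit proof, treating it as an immediate consequence of the Bodlaender et al.\ and Drucker results via the observation that every graph arises as the constraint graph of a $\{0,1\}$-CSP, which is precisely the polynomial parameter transformation you spell out. Your version merely makes explicit the verification of the hypotheses of Theorem~\ref{the:trans} that the paper leaves implicit.
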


\section{Satisfiability}
The \emph{propositional satisfiability problem} (SAT) was the first
problem shown to be NP-hard \cite{Cook71}. Despite its hardness, SAT
solvers are increasingly leaving their mark as a general-purpose tool in
areas as diverse as software and hardware verification, automatic test
pattern generation, planning, scheduling, and even challenging problems
from algebra \cite{GomesKautzSabharwalSelman08}.  SAT solvers are
capable of exploiting the hidden structure present in real-world problem
instances.  The concept of \emph{backdoors}, introduced by Williams
\etal~\cite{WilliamsGomesSelman03a}, provides a means for making the
vague notion of a hidden structure explicit.  Backdoors are defined with
respect to a ``sub-solver'' which is a polynomial-time algorithm that
correctly decides the satisfiability for a class $\CCC$ of CNF
formulas. More specifically, Gomes
\etal~\cite{GomesKautzSabharwalSelman08} define a \emph{sub-solver} to
be an algorithm $\AAA$ that takes as input a CNF formula $F$ and has the
following properties: 
\begin{enumerate}
\item \emph{Trichotomy}: $\AAA$ either rejects the
input $F$, or determines $F$ correctly as unsatisfiable or satisfiable;
\item \emph{Efficiency}: $\AAA$ runs in polynomial time;
\item \emph{Trivial Solvability}: $\AAA$ can determine if $F$ is
trivially satisfiable (has no clauses) or trivially unsatisfiable
(contains only the empty clause); 
\item  \emph{Self-Reducibility}: if
$\AAA$ determines $F$, then for any variable $x$ and value
$\epsilon\in\{0,1\}$, $\AAA$ determines $F[x=\epsilon]$.  $F[\tau]$
denotes the formula obtained from $F$ by applying the partial assignment
$\tau$, i.e., satisfied clauses are removed and false literals are
removed from the remaining clauses.
\end{enumerate}

We identify a sub-solver $\AAA$ with the class $\CCC_\AAA$ of CNF
formulas whose satisfiability can be determined by $\AAA$.  A
\emph{strong} \emph{$\AAA$-backdoor set} (or \emph{$\AAA$-backdoor}, for
short) of a CNF formula $F$ is a set $B$ of variables such that for each
possible truth assignment $\tau$ to the variables in $B$, the
satisfiability of $F[\tau]$ can be determined by sub-solver $\AAA$ in
time $O(n^c)$. The smaller the backdoor $B$, the more useful it is
for satisfiability solving, which makes the size of the backdoor a natural parameter to consider
(see \cite{GaspersSzeider12} for a survey on the parameterized complexity of backdoor problems).
If we know an $\AAA$-backdoor of size $k$, we can
decide the satisfiability of $F$ by running $\AAA$ on $2^k$ instances
$F[\tau]$, yielding a time bound of $O(2^k n^c)$. Hence SAT decision is
fixed-parameter tractable in the backdoor size~$k$ for any sub-solver
$\AAA$.  Hence the following problem is clearly fixed-parameter
tractable for any sub-solver~$\AAA$.
\begin{quote}
  $\PR{SAT}{$\AAA$-backdoor}$

  \emph{Instance:} A CNF formula $F$, and an $\AAA$-backdoor $B$ of $F$
  of size $k$. 

 \emph{Parameter:} The integer $k$. 

 \emph{Question:} Is
  $F$ satisfiable?
\end{quote}
We also consider for every subsolver $\AAA$ the associated recognition
problem.
\begin{quote}
  $\RPR{SAT}{$\AAA$-backdoor}$

  \emph{Instance:} A CNF formula $F$, and an integer $k \geq 0$. 

 \emph{Parameter:} The integer $k$. 

 \emph{Question:} Does $F$ have an $\AAA$\hy backdoor of size at most
 $k$?
\end{quote}

With the problem $\PR{SAT}{$\AAA$-backdoor}$ we are concerned with the
question of whether instead of trying all $2^k$ possible partial
assignments we can reduce the instance to a polynomial kernel.  We
will establish a very general result that applies to all possible
sub-solvers.
\begin{theorem}\label{the:backdoor}
  $\PR{SAT}{$\AAA$-backdoor}$ does not admit a polynomial kernel for any
  sub-solver $\AAA$ unless $\NP \subseteq \coNP/\text{\normalfont poly}$.
\end{theorem}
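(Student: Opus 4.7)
The plan is to prove Theorem~\ref{the:backdoor} by giving a composition algorithm for $\PR{SAT}{$\AAA$-backdoor}$ and then invoking Theorem~\ref{the:comp}. The unparameterized version is in $\NP$ by guess-and-check of a satisfying assignment, and it is $\NP$-complete because ordinary SAT reduces to it by taking $B$ to be the entire variable set of the input formula: after every variable has been assigned, each clause is either removed or reduced to the empty clause, so the Trivial Solvability axiom alone guarantees that $\AAA$ decides the resulting formula correctly. This reduction works uniformly for \emph{every} sub-solver~$\AAA$, which is exactly what the theorem demands.

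For the composition, we are given instances $(F_1,B_1,k),\dots,(F_t,B_t,k)$. If $t>2^k$ we fall back on brute force: each $B_i$ contributes at least $k$ bits to the encoding, so $\sum_i|F_i|\geq tk$, and deciding each $F_i$ by enumerating all $2^k$ assignments of $B_i$ and invoking $\AAA$ takes $2^k|F_i|^{O(1)}$ time, totalling polynomial in the input size; we then output a trivial yes- or no-instance of $\PR{SAT}{$\AAA$-backdoor}$ with parameter~$0$. Otherwise $t\leq 2^k$, and we first pad $t$ up to $2^s$ where $s=\lceil\log t\rceil\leq k$ by duplicating $F_1$. We then rename variables so that every $B_i$ becomes the same set $\{b_1,\dots,b_k\}$, while the non-backdoor variables $V(F_i)\setminus B_i$ are renamed to fresh names that are mutually disjoint across instances. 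Finally we introduce fresh selector variables $y_1,\dots,y_s$ and build $F$ by replacing each clause $C\in F_i$ with $C\vee D_i$, where $D_i$ is the disjunction of literals on $y_1,\dots,y_s$ that is falsified exactly by the binary encoding $\sigma_i$ of~$i$.

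To check correctness, pick any truth assignment $\tau:B\to\{0,1\}$ where $B:=\{b_1,\dots,b_k,y_1,\dots,y_s\}$. The selector part of $\tau$ matches exactly one $\sigma_i$, so clauses originating from $F_j$ with $j\neq i$ are satisfied and removed, while those from $F_i$ reduce to $F_i[\tau|_{\{b_1,\dots,b_k\}}]$, which $\AAA$ decides because $\{b_1,\dots,b_k\}$ is the renamed $B_i$. Hence $B$ is an $\AAA$-backdoor of $F$ of size $k+s\leq 2k$, and the same analysis yields that $F$ is satisfiable iff some $F_i$ is. The new parameter $2k$ is polynomial in $k$, Theorem~\ref{the:comp} applies, and the kernel lower bound follows. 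The one subtle point I expect to verify carefully is the variable renaming that fuses all backdoors into one common set while keeping the remaining variables independent; the $t>2^k$ shortcut and the selector-based OR-gadget are routine, and crucially neither step uses any property of $\AAA$ beyond what the sub-solver axioms already provide, which is what guarantees that the argument goes through for every~$\AAA$.
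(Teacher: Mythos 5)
Your strategy differs from the paper's: you build a composition algorithm and invoke Theorem~\ref{the:comp}, whereas the paper gives a one-line polynomial parameter transformation from $\PR{SAT}{vars}$ (already known to be compositional) and invokes Theorem~\ref{the:trans}: map $(F,n)$ to $(F,V)$ where $V$ is the full variable set of $F$; by Trivial Solvability alone, $V$ is an $\AAA$-backdoor for \emph{every} sub-solver $\AAA$, since $F[\tau]$ for a total assignment $\tau$ is either empty or consists only of empty clauses. Your NP-completeness paragraph contains exactly this observation, so you already have all the ingredients for the short proof; you then set it aside and take the harder road.

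On that harder road there is a genuine gap, and it sits precisely where you said you would need to be careful: the renaming that fuses all backdoors $B_1,\dots,B_t$ into one common set $\{b_1,\dots,b_k\}$. A sub-solver $\AAA$ is an arbitrary algorithm constrained only by Trichotomy, Efficiency, Trivial Solvability and Self-Reducibility; none of these axioms forces the class $\CCC_\AAA$ of formulas it determines to be closed under renaming of variables. For instance, the algorithm that determines $F$ exactly when $F$ is trivially satisfiable, trivially unsatisfiable, or Horn with every variable name beginning with the letter $z$, satisfies all four axioms (Self-Reducibility holds because this syntactic condition is preserved under partial assignments), yet renaming a single variable destroys membership in $\CCC_\AAA$. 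Consequently, after your renaming the set $\{b_1,\dots,b_k\}$ need no longer be an $\AAA$-backdoor of the renamed $F_i$, and your set $B=\{b_1,\dots,b_k,y_1,\dots,y_s\}$ need not be an $\AAA$-backdoor of $F$, so the composition may not even output a valid instance. The renaming cannot simply be dropped, since the union of the unrenamed backdoors has size up to $tk\le k2^k$, which is not polynomial in $k$. Note that the paper's composition proof for $\PR{3SAT}{$\CCC$-backdoor}$ with $\CCC\in\{\HORN,\TWOCNF\}$ performs the same w.l.o.g.\ renaming, but there it is licensed by the fact that those classes are syntactically defined and renaming-invariant; for an arbitrary $\AAA$ it is not. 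The remaining pieces of your argument (the $t>2^k$ shortcut, the padding, the selector gadget, the satisfiability equivalence) are fine for general CNF, but the theorem as stated quantifies over all sub-solvers, so you should fall back on the paper's transformation from $\PR{SAT}{vars}$, which sidesteps renaming entirely.
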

\begin{proof}
  We will devise polynomial parameter transformations from the following
  parameterized problem which is known to be compositional
  \cite{FortnowSanthanam08} and therefore  unlikely to admit a
  polynomial kernel.
  \begin{quote}
    $\PR{SAT}{vars}$

    \emph{Instance:} A propositional formula $F$ in CNF on $n$
    variables.

    \emph{Parameter:} The number $n$ of variables.

    \emph{Question:} Is $F$ satisfiable?
  \end{quote}
  Let $F$ be a CNF formula and $V$ the set of all variables of~$F$. Due
  to trivial solvability (Property~3) of a sub-solver, $V$ is an
  $\AAA$-backdoor set for any $\AAA$.  Hence, by mapping $(F,n)$ (as an
  instance of $\PR{SAT}{vars}$) to $(F,V)$ (as an instance of
  $\PR{SAT}{$\AAA$-backdoor}$) provides a (trivial) polynomial parameter
  transformation from $\PR{SAT}{vars}$ to $\PR{SAT}{$\AAA$-backdoor}$.
  Since the unparameterized versions of both problems are clearly
  NP-complete, the result follows by Theorem~\ref{the:trans}.
\end{proof}

Let us denote by $r$CNF the class of CNF formulas where each clause has at most $r$ literals,
and by $\HORN$ the class of CNF formulas where each clause has at most one positive literal.
Sub-solvers for $\HORN$ and $2$CNF follow from \cite{DowlingGallier84} and \cite{Krom67}, respectively.

Let $\PR{3SAT}{$\pi$}$ (where $\pi$ is an arbitrary parameterization)
denote the problem $\PR{SAT}{$\pi$}$ restricted to 3CNF formulas. In
contrast to $\PR{SAT}{vars}$, the parameterized problem
$\PR{3SAT}{vars}$ has a trivial polynomial kernel: if we remove
duplicate clauses, then any 3CNF formula on $n$ variables contains at
most $O(n^3)$ clauses, and so is a polynomial kernel.  Hence the easy
proof of Theorem~\ref{the:backdoor} does not carry over to
$\PR{3SAT}{$\AAA$-backdoor}$. We therefore consider the cases
$\PR{3SAT}{$\HORN$-backdoor}$ and $\PR{3SAT}{$\TWOCNF$-backdoor}$
separately, these cases are important since the detection of $\HORN$ and
$\TWOCNF$-backdoors is fixed-parameter tractable
\cite{NishimuraRagdeSzeider04-informal}.

\begin{theorem}
  Neither $\PR{3SAT}{$\HORN$-backdoor}$ nor
  $\PR{3SAT}{$\TWOCNF$-backdoor}$ admit a polynomial kernel unless $\NP \subseteq \coNP/\text{\normalfont poly}$.
\end{theorem}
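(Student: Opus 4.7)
I plan to prove both lower bounds via polynomial parameter transformations from $\PR{SAT}{vars}$ to the two backdoor problems. Since $\PR{SAT}{vars}$ is compositional \cite{FortnowSanthanam08}, Theorem~\ref{the:trans} will yield the result, provided the unparameterized versions are $\NP$-complete (which is immediate, as $3$SAT is $\NP$-complete and the set of all variables is always a trivial backdoor).

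The main obstacle, as already flagged in the text, is that the trivial transformation used in Theorem~\ref{the:backdoor} breaks down for $3$CNF: taking $B=V$ would give $k'=n$, but $3$CNF on $n$ variables already has only $O(n^3)$ clauses and hence a polynomial kernel. So I need a reduction that, starting from an arbitrary CNF formula $F$ on $n$ variables, produces a $3$CNF formula $F'$ whose $\HORN$- and $\TWOCNF$-backdoor can still be bounded polynomially in $n$ — i.e., the auxiliary variables introduced during the CNF-to-$3$CNF conversion must \emph{not} be needed in the backdoor.

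The plan is the standard Tseitin-style conversion: replace each long clause $\ell_1 \vee \ell_2 \vee \cdots \vee \ell_m$ (with $m\geq 4$) by the chain
\[
(\ell_1\vee\ell_2\vee y_1),\quad (\neg y_1\vee\ell_3\vee y_2),\quad \ldots,\quad (\neg y_{m-3}\vee\ell_{m-1}\vee\ell_m),
\]
where the $y_j$ are fresh auxiliary variables, one fresh chain per clause. Clauses of length $\leq 3$ are kept unchanged. The resulting $3$CNF $F'$ has size linear in $|F|$, and $F'$ is satisfiable iff $F$ is, by the standard argument. The key observation is that the set $V$ of original variables of $F$ is a backdoor for both sub-solvers: once every $x\in V$ is assigned, each chain clause either becomes satisfied (and is removed), or reduces to a clause consisting only of $y$-literals of the forms $(y_1)$, $(\neg y_{m-3})$, or $(\neg y_{j-1}\vee y_j)$. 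Each such reduced clause has at most one positive literal, so the reduced formula is in $\HORN$; and each has at most two literals, so it is in $\TWOCNF$. Hence $V$ is simultaneously a $\HORN$-backdoor and a $\TWOCNF$-backdoor of $F'$ of size $n$.

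This gives the polynomial parameter transformations $(F,n)\mapsto(F',V)$ with $k'=n=k$, satisfying both conditions of Theorem~\ref{the:trans}. Since $\PR{SAT}{vars}$ has no polynomial kernel unless $\NP\subseteq\coNP/\text{poly}$, the same holds for $\PR{3SAT}{\HORN\text{-backdoor}}$ and $\PR{3SAT}{\TWOCNF\text{-backdoor}}$. The one subtlety I would double-check is the sub-solver \emph{Trichotomy} and \emph{Self-Reducibility} conditions: the $\HORN$ and $\TWOCNF$ sub-solvers from \cite{DowlingGallier84,Krom67} satisfy these on the reduced formulas (both classes are closed under partial assignment), so $V$ is genuinely a backdoor in the formal sense of Gomes \etal\ rather than merely a set whose assignment produces an easy class.
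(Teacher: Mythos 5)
Your proposal is correct, but it takes a genuinely different route from the paper. The paper proves the theorem by showing directly that $\PR{3SAT}{$\CCC$-backdoor}$ is \emph{compositional} for $\CCC\in\{\HORN,\TWOCNF\}$ and invoking Theorem~\ref{the:comp}: given $t$ instances sharing a backdoor of size $k$, it either solves them all outright in polynomial time (when $t>2^k$) or, when $t\le 2^k$, builds a single instance using $s=\lceil\log_2 t\rceil\le k$ selector variables $Y$ that are injected into ``connection clauses'' $(\neg x_{j-1}\vee x_j)$, so that each assignment to $Y$ activates one of the $t$ formulas; the new backdoor is $Y\cup B_1$, of size at most $2k$. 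You instead give a single polynomial parameter transformation from $\PR{SAT}{vars}$ and invoke Theorem~\ref{the:trans}: the standard clause-splitting into 3CNF introduces auxiliary chain variables, and your key observation is that after any assignment to the original variables every surviving chain clause is a unit clause or a binary implication $(\neg y_{j-1}\vee y_j)$, hence simultaneously $\HORN$ and $\TWOCNF$; so the original variable set remains a backdoor of size $n$, which is exactly the source parameter. Both arguments are sound (and, amusingly, both hinge on the same implication-chain gadget, used for different purposes). Your route is shorter, reuses the same $\PR{SAT}{vars}$ source already used in Theorem~\ref{the:backdoor}, and in fact applies to any sub-solver whose class is closed under partial assignment and contains all formulas made of unit clauses and binary Horn implications, which is slightly more general than the two named classes. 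What the paper's composition buys is a self-contained demonstration of the composition technique on the backdoor problem itself; logically the two proofs deliver the same conclusion under the same assumption $\NP \subseteq \coNP/\text{\normalfont poly}$. The only points worth making explicit in a final write-up are the ones you already flag: the empty clause and unit clauses arising from the untouched short clauses still lie in both classes (so Trichotomy/Trivial Solvability are unproblematic), and membership of the unparameterized target problem in $\NP$ uses the polynomial-time checkability of the $\HORN$/$\TWOCNF$ backdoor property.
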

\begin{proof}
  Let $\CCC\in \{\HORN,\TWOCNF\}$.  We show that
  $\PR{3SAT}{$\CCC$-backdoor}$ is compositional.  Let $(F_i,B_i)$,
  $1\leq i \leq t$, be a given sequence of instances of
  $\PR{3SAT}{$\CCC$-backdoor}$ where $F_i$ is a 3CNF formula and $B_i$
  is a $\CCC$-backdoor set of $F_i$ of size~$k$.  We distinguish two
  cases. 

  Case 1: $t>2^k$.  Let $\CCard{F_i}:=\sum_{C\in F_i} \Card{C}$ and
  $n:=\max_{i=1}^t \CCard{F_i}$.  Whether $F_i$ is satisfiable or not
  can be decided in time $O(2^kn)$ since the satisfiability of a Horn or
  2CNF formula can be decided in linear time. We can check whether at
  least one of the formulas $F_1,\dots,F_t$ is satisfiable in time
  $O(t2^k n) = O(t^2n)$ which is polynomial in $t+n$.  If some $F_i$
  is satisfiable, we output $(F_i,B_i)$; otherwise we output $(F_1,B_1)$
  ($F_1$ is unsatisfiable). Hence we have a composition algorithm.

  Case 2: $t\leq 2^k$. This case is more involved.  We construct a new
  instance $(F,B)$ of $\PR{3SAT}{$\CCC$-backdoor}$ as follows. 

  Let $s=\lceil \log_2 t \rceil$.  Since $t\leq 2^k$, $s\leq k$ follows.

  Let $V_i$ denote the set of variables of $F_i$. We may assume,
  w.l.o.g., that $B_1=\dots=B_t$ and that $V_i\cap V_j =B_1$ for all
  $1\leq i < j \leq t$ since otherwise we can change names of variable
  accordingly.  In a first step we obtain from every $F_i$ a CNF formula
  $F_i'$ as follows. For each variable $x\in V_i\setminus B_1$ we take
  $s+1$ new variables $x_0,\dots,x_s$. We replace each positive occurrence
  of a variable $x\in V_i\setminus B_1$ in $F_i$ with the literal $x_0$
  and each negative occurrence of $x$ with the literal $\neg x_s$.  
  
  We add
  all clauses of the form $(\neg x_{j-1} \vee x_j)$ for $1\leq j \leq s$;
  we call these clauses ``\emph{connection clauses}.'' Let $F_i'$ be the
  formula obtained from $F_i$ in this way. We observe that $F_i'$ and
  $F_i$ are SAT-equivalent, since the connection clauses form an
  implication chain. Since the connection clauses are both
  Horn and 2CNF, $B_1$ is also a $\CCC$-backdoor of $F_i'$.

  For an illustration of this construction see
  Example~\ref{ex:formulas} below.

  We take a set $Y=\{y_1,\dots,y_s\}$ of new variables. Let
  $C_1,\dots,C_{2^s}$ be the sequence of all $2^s$ possible clauses
  (modulo permutation of literals within a clause) containing each
  variable from $Y$ either positively or negatively.
  Consequently we can write
  $C_i$ as $(\ell^i_1 \vee \dots \vee \ell^i_s)$ where $\ell_j^i\in
  \{y_j,\neg y_j\}$.
  \par
  For $1\leq i \leq t$ we add to each connection clause $(\neg x_{j-1} \vee
  x_j)$
  of $F_i'$ the literal $\ell^i_j\in C_i$. Let $F_i''$ denote the 3CNF
  formula obtained from $F_i'$ this way.
   
  For $t< i \leq 2^s$ we define 3CNF formulas $F_i''$ as follows.  If
  $s\leq 3$ then $F_i''$ consists just of the clause~$C_i$.  If $s>3$
  then we take new variables $z^i_2,\dots,z^i_{s-2}$ and let $F_i''$
  consist of the clauses $(\ell^i_1 \vee \ell^i_2 \vee \neg z^i_2)$,
  $(\ell^i_3 \vee z^i_2 \vee \neg z^i_3), \dots, (\ell^i_{s-2} \vee 
  z^i_{s-3} \vee \neg z^i_{s-2})$, $(\ell^i_{s-1} \vee \ell^i_{s} \vee 
  z^i_{s-2})$.  Finally, we let $F$ be the 3CNF formula containing all
  the clauses from $F_1'',\dots,F_{2^s}''$.  Any assignment $\tau$ to
  $Y\cup B_1$ that satisfies $C_i$  can be extended
  to an assignment that satisfies $F_i''$ since such assignment
  satisfies at least one connection clause $(x_{j-1} \vee x_j \vee
  \ell^i_j)$ and so the chain of implications from from $x_o$ to $x_s$
  is broken. 

  It is not difficult to verify the following two claims. (i)~$F$ is
  satisfiable if and only if at least one of the formulas $F_i$ is
  satisfiable.  (ii)~$B=Y \cup B_1$ is a $\CCC$-backdoor of $F$.  Hence
  we have also a composition algorithm in Case~2, and thus
  $\PR{3SAT}{$\CCC$-backdoor}$ is compositional. Clearly
  $\UP[\PR{3SAT}{$\CCC$-backdoor}]$ is $\NP$\hy complete, hence the
  result follows from~Theorem~\ref{the:comp}.
\end{proof}

\begin{example}\label{ex:formulas}
  We illustrate the constructions of this proof with a running
  example, where we let $s=2$, $t=4$, $i=2$, and $B_1=\{b\}$. 

  \noindent Assume that 
  we have 
  \begin{quote}
  $F_i=
  (x \vee \neg u \vee v) \wedge (\neg x \vee u \vee v) \wedge (\neg x \vee
  \neg u)$.  
  \end{quote}

  \noindent From this we obtain the following formula, containing four
  connection clauses

  \begin{quote}
   $F_i'= (x_0 \vee \neg u_2 \vee
  v) \wedge (\neg x_2 \vee u_0 \vee v) \wedge (\neg x_2 \vee \neg u_2) \wedge$

  \quad $(\neg x_0 \vee x_1) \wedge (\neg x_1 \vee x_2) \wedge (\neg u_0 \vee
  u_1) \wedge (\neg u_1 \vee u_2)$.
  \end{quote}
 Now assume $C_i=(y_1 \vee \neg y_2)$. We add to the
connection clauses literals from $C_i$ and we obtain
\begin{quote}
   $F_i''= (x_0 \vee \neg u_2 \vee
  v) \wedge (\neg x_2 \vee u_0 \vee v) \wedge (\neg x_2 \vee \neg u_2) \wedge$

  \quad $(\neg x_0 \vee x_1 \vee y_1) \wedge (\neg x_1 \vee x_2 \vee
  \neg y_2) \wedge (\neg u_0 \vee
  u_1 \vee y_1) \wedge (\neg u_1 \vee u_2 \vee \neg y_2)$.
\end{quote}

\noindent Assigning $y_1$ to
false and $y_2$ to true reduces $F_i''$  to $F_i'$. The other  three
possibilities of assigning truth values to $y_1,y_2$ break the
connection clauses and make the formula trivially satisfiable. \hfill $\dashv$
\end{example}

We now turn to the recognition problem $\RPR{SAT}{$\AAA$-backdoor}$,
in particular for $\AAA\in \{\HORN,\allowbreak \TWOCNF\}$ for which, as mentioned
above, the problem is known to be fixed-parameter
tractable~\cite{NishimuraRagdeSzeider04-informal}. Here we are able to
obtain positive results.

\begin{proposition}
  Both $\RPR{SAT}{$\HORN$-backdoor}$ and
  $\RPR{SAT}{$\TWOCNF$-backdoor}$ admit polynomial kernels, with 
  a linear and quadratic number of variables, respectively.
\end{proposition}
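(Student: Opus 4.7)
The plan is to reduce $\RPR{SAT}{$\HORN$-backdoor}$ to \textsc{Vertex Cover} and $\RPR{SAT}{$\TWOCNF$-backdoor}$ to $3$-\textsc{Hitting Set}, so that the classical linear-vertex kernel for the former and the quadratic-vertex kernel for the latter translate into backdoor-detection kernels with $O(k)$ and $O(k^2)$ variables, respectively.

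For the Horn case, I would first establish the combinatorial characterization: $B \subseteq \var(F)$ is a $\HORN$-backdoor of $F$ if and only if every clause of $F$ has at most one positive literal whose variable lies in $\var(F) \setminus B$. The easy direction follows from the fact that a partial assignment $\tau$ on $B$ either satisfies a clause (removing it) or removes exactly the $B$-literals from it; for the converse, given a clause $C$ with two positive literals $p, q$ whose variables lie outside $B$, the assignment that falsifies every $C$-literal on $\var(C)\cap B$ leaves a clause with at least two positive literals in $F[\tau]$. Hence the $\HORN$-backdoors of $F$ are exactly the vertex covers of the \emph{positive co-occurrence graph} $G_F$ on $\var(F)$, whose edges are pairs of variables appearing positively together in some clause of $F$. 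I would then apply a linear-vertex \textsc{Vertex Cover} kernelization (for instance, Nemhauser--Trotter) to obtain either a trivial no-instance or a graph $G'$ with $\leq 2k'$ vertices and a new parameter $k' \leq k$. The output kernel is the 2CNF formula $F'$ containing one clause $(u \vee v)$ for each edge of $G'$: its positive co-occurrence graph is exactly $G'$, so its $\HORN$-backdoors are precisely the vertex covers of $G'$, and $F'$ uses at most $2k$ variables.

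The 2CNF case is parallel, with a weaker characterization: $B$ is a $\TWOCNF$-backdoor of $F$ if and only if $|\var(C) \setminus B| \leq 2$ for every clause $C$, proved by the same ``falsify all $B$-literals of $C$'' argument. Equivalently, $B$ is a hitting set of the $3$-uniform hypergraph $H_F$ whose hyperedges are all $3$-subsets of $\var(C)$ ranging over clauses $C$ with $|\var(C)| \geq 3$. I would then apply a known $O(k^2)$-vertex kernel for $3$-\textsc{Hitting Set} (via the sunflower lemma, say) to reduce $H_F$ to a hypergraph $H'$ with $O(k^2)$ vertices, and convert $H'$ back to a 3CNF formula by including one clause $(u \vee v \vee w)$ per hyperedge of $H'$.

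The main point requiring care in both constructions is the back-conversion from the reduced graph or hypergraph to a CNF formula: one has to check that the associated (hyper)graph of the newly produced formula is exactly the reduced kernel (hyper)graph, with no spurious (hyper)edges introduced. This is automatic because the fresh clauses have the minimum possible width (2 in the Horn case, 3 in the 2CNF case), so each clause contributes exactly one edge or hyperedge to the associated structure. Thus the principal conceptual step is the combinatorial characterization of backdoors in each case; once that is in place, off-the-shelf kernels for \textsc{Vertex Cover} and $3$-\textsc{Hitting Set} deliver the desired variable counts.
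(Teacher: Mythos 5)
Your proposal is correct and follows essentially the same route as the paper: model $\HORN$-backdoors as vertex covers of the positive co-occurrence graph and $\TWOCNF$-backdoors as hitting sets of the $3$-uniform hypergraph of clause-variable triples, apply the standard linear-vertex \textsc{Vertex Cover} kernel and quadratic-vertex $3$-\textsc{Hitting Set} kernel, and convert the reduced structures back into formulas whose associated (hyper)graphs coincide with them. The only difference is that you prove the backdoor characterizations directly, whereas the paper cites them from earlier work; your argument for them is sound.
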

\begin{proof}
  Let $(F,k)$ be the instance of $\RPR{SAT}{$\HORN$-backdoor}$.  We
  construct a graph $G(F)$ whose vertices are the variables of $F$ and
  which contains an edge between two variables $u,v$ if and only if
  both variables appear as positive literals together in a clause. It
  is well-known and easy to see that the \emph{vertex covers} of
  $G(F)$ are exactly the $\HORN$\hy backdoor sets
  of~$F$~\cite{SamerSzeider08c}. Recall that a vertex cover of a graph
  is a set of vertices that contains at least one end of each edge of
  the graph. Now, we apply the known kernelization algorithm for
  vertex covers \cite{ChenKanjXia10} to $(G(F),k)$ and obtain in
  polynomial time an equivalent instance $(G',k')$ where $G'$ has at
  most $2k$ vertices. Now it only remains to consider $G'$ as a CNF
  formula $F'$ where each edge gives rise to a binary clause on two
  positive literals. Since evidently $G(F')=G'$, we conclude that
  $(F',k')$ constitutes a polynomial kernel for
  $\RPR{SAT}{$\HORN$-backdoor}$.

\sloppypar  For $\RPR{SAT}{$\TWOCNF$-backdoor}$ we proceed similarly.  Let
  $(F,k)$ be an instance of this problem.  We
  construct a 3-uniform hypergraph $H(F)$ whose vertices are the
  variables of $F$ and which contains a hyperedge on any three variables that
  appear (positively or negatively) together in a clause of
  $F$. Again, it is well-known and easy to see that the \emph{hitting
    sets} of $H(F)$ are exactly the $\TWOCNF$\hy backdoor sets
  of~$F$~\cite{SamerSzeider08c}. Recall that a hitting set of a
  hypergraph is a set of vertices that contains at least one vertex from
  each hyperedge. Now we apply a known kernelization algorithm for the
  hitting set problem on 3-uniform hypergraphs (3HS) \cite{Abukhzam10} to
  $(H(F),k)$ and obtain in polynomial time an equivalent instance
  $(H',k')$ where $H'$ has at most $O(k^2)$ vertices. It remains
  to consider $H'$ as a CNF formula $F'$ where each hyperedge gives rise to
  a ternary clause on three positive literals. Since evidently
  $H(F')=H'$, we conclude that $(F',k')$ constitutes a polynomial kernel for
  $\RPR{SAT}{$\TWOCNF$-backdoor}$.
\end{proof}

\section{Global Constraints}
\label{sec:global}
 
Constraint programming (CP) offers a powerful framework for efficient
modeling and solving of a wide range of hard problems
\cite{RossiVanBeekWalsh06}. At the heart of efficient CP solvers are
so-called \emph{global constraints} that specify patterns that
frequently occur in real-world problems. Efficient propagation
algorithms for global constraints help speed up the solver
significantly~\cite{HoeveKatriel06}. For instance, a frequently
occurring pattern is that we require that certain variables must all
take different values (e.g., activities requiring the same resource must
all be assigned different times).  Therefore most constraint solvers
provide a global \AD constraint and algorithms for its
propagation. Unfortunately, for several important global constraints a
complete propagation is NP-hard, and one switches therefore to
incomplete propagation such as bound
consistency~\cite{BessiereEtAl04}. 

In their AAAI'08 paper, Bessi{\`e}re \etal~\cite{BessiereEtal08} showed
that a complete propagation of several intractable constraints can
efficiently be done as long as certain natural problem parameters are
small, i.e., the propagation is \emph{fixed-parameter tractable}
\cite{DowneyFellows99}.  Among others, they showed fixed-parameter
tractability of the \ALNV and \EGClong (\EGC) constraints parameterized
by the number of ``holes'' in the domains of the variables. If there are
no holes, then all domains are intervals and complete propagation is
polynomial by classical results; thus the number of holes provides a way
of \emph{scaling up} the nice properties of constraints with interval
domains.

In the sequel we bring this approach a significant step forward, picking
up a long-term research objective suggested by Bessi{\`e}re
\etal~\cite{BessiereEtal08} in their concluding remarks: whether
intractable global constraints admit a reduction to a problem kernel or
kernelization.

More formally, a global constraint is defined for a set $S$ of
variables, each variable $x\in S$ ranges over a finite domain $\dom(x)$
of values.
For a set $X$ of variables we write $\dom(X)=\bigcup_{x\in X} \dom(x)$.
An \emph{instantiation} is an assignment $\alpha: S \rightarrow \dom(S)$ such that
$\alpha(x)\in \dom(x)$ for each $x\in S$.  A global constraint defines
which instantiations are legal and which are not.
This definition is usually implicit, as opposed to classical constraints, which list all legal tuples.
Examples of global constraints include:

\begin{enumerate}
\item  The global constraint \textsc{NValue} is defined  over
a set $X$ of variables and a variable $N$ and requires from a legal instantiation $\alpha$ that
$\Card{ \SB \alpha(x) \SM x\in X \SE}=\alpha(N)$;

\item  The global constraint \AMNV is defined for fixed values of $N$ over
a set $X$ of variables and requires from a legal instantiation $\alpha$ that
$\Card{ \SB \alpha(x) \SM x\in X \SE}\le N$;

\item The global constraint
\textsc{Disjoint} is specified by two sets of
variables $X,Y$ and  requires that $\alpha(x)\neq
\alpha(y)$ for each pair $x\in X$ and $y\in Y$;

\item The global constraint \textsc{Uses} is also specified by two sets
  of variables $X,Y$ and requires that for each $x\in X$ there is some
  $y \in Y$ such that $\alpha(x)=\alpha(y)$.

\item The global constraint \EGC is specified by a set of variables $X$, a set of values $D=\dom(X)$, and a finite domain $\dom(v)\subseteq \mathbb{N}$ for each value $v\in D$, and it requires that for each $v\in D$ we
  have $\Card{ \SB \alpha(x)=v \SM x\in X \SE}\in \dom(v)$.
\end{enumerate}

A global constraint $C$ is \emph{consistent} if there is a legal
instantiation of its variables. The constraint $C$ is \emph{\hac} (\emph{\HAC{}}) if
for each variable $x\in \scope(C)$ and each value $v\in \dom(x)$, there
is a legal instantiation $\alpha$ such that $\alpha(x)=v$ (in that
case we say that $C$ supports $v$ for $x$). In the literature, \HAC is also called
\emph{domain consistent} or \emph{generalized arc consistent}.
The constraint $C$ is \emph{bound consistent} if when a variable $x\in \scope(C)$
is assigned the minimum or maximum value of its domain, there are compatible values
between the minimum and maximum domain value for all other variables in $\scope(C)$.
The main algorithmic problems
for a global constraint $C$ are the following: %The problem
\emph{Consistency}, to decide whether $C$ is consistent, and
\emph{Enforcing \HAC{}}, to remove from all domains the values that are not
supported by the respective variable.

It is clear that if \HAC can be enforced in polynomial time for a constraint
$C$, then the consistency of $C$ can also be decided in polynomial time
(we just need to see if any domain became empty).  The reverse is true
if for each $x\in \scope(C)$ and $v\in \dom(x)$, the consistency of $C\wedge (x\leftarrow v)$,
requiring $x$ to be assigned the value $v$, can be decided in polynomial time (see
\cite[Theorem 17]{HoeveKatriel06}). This is the case for most constraints of
practical use, and in particular for all constraints considered below. The
same correspondence holds with respect to fixed-parameter
tractability. Hence, we will focus mainly on Consistency.

For several important types $\TTT$ of global constraints, the problem of
deciding whether a constraint of type $\TTT$ is consistent is NP-hard. This includes
the 5 global constraints \textsc{NValue}, \AMNV, \textsc{Disjoint}, \textsc{Uses}, and \textsc{EGC}
defined above (see \cite{BessiereEtAl04}).

Each global constraint of type $\TTT$ and parameter \textsl{par} gives
rise to a parameterized problem:
  \begin{quote}
    $\TTT$-\PR{Cons}{\textsl{par}}

    \emph{Instance:} A global constraint $C$ of type $\TTT$.

    \emph{Parameter:} The integer \textsl{par}.

    \emph{Question:} Is $C$ consistent?
  \end{quote}

\Bessiere \etal~\cite{BessiereEtal08} considered $dx=\Card{\dom(X)}$ as
parameter for \textsc{NValue}, $dxy=\Card{\dom(X) \cap \dom(Y)}$ as
parameter for \textsc{Disjoint}, and $dy=\Card{\dom(Y)}$ as parameter
for \textsc{Uses}.  They showed that consistency checking is
fixed-parameter tractable for the constraints under the respective
parameterizations, i.e., the problems \textsc{NValue}\hy
$\PR{Cons}{$dx$}$, \textsc{Disjoint}\hy $\PR{Cons}{$dxy$}$, and
\textsc{Uses}\hy $\PR{Cons}{$dy$}$ are fixed-parameter tractable.

\newcommand{\holes}{\mtext{\slshape holes}}
\Bessiere \etal~\cite{BessiereEtal08} also showed that polynomial time algorithms for enforcing bounds consistency imply that the corresponding consistency problem is fixed-parameter tractable parameterized by the number of holes. This is the case for the global constraints \textsc{NValue}, \AMNV, and \EGC. 
\begin{definition}
When $D$ is totally ordered, a \mydef{hole} in a subset $D'\subseteq D$ is a couple $(u,w) \in D'
\times D'$, such that there is a $v \in D \setminus D'$ with $u<v<w$ and
there is no $v'\in D'$ with $u<v'<w$.
\end{definition}

We denote the number of holes in
the domain of a variable $x\in X$ by $\nbholes(x)$.  The parameter of
the consistency problem for \AMNV constraints is $\holes= \sum_{x\in X} \nbholes(x)$.

\subsection{Kernel Lower Bounds}
We show that it is unlikely that most of the FPT results of \Bessiere
\etal~\cite{BessiereEtal08} can be improved to polynomial
kernels.

\begin{theorem} The problems \textsc{NValue}\hy $\PR{Cons}{$dx$}$,
  \textsc{Disjoint}\hy $\PR{Cons}{$dxy$}$, \textsc{Uses}\hy
  $\PR{Cons}{$dy$}$ do not admit  polynomial kernels unless $\NP \subseteq \coNP/\text{\normalfont poly}$.
\end{theorem}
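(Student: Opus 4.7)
The plan is to prove each of the three kernel lower bounds by a polynomial parameter transformation (Theorem~\ref{the:trans}) from $\PR{SAT}{vars}$, which is compositional~\cite{FortnowSanthanam08} and thus, by Theorem~\ref{the:comp}, admits no polynomial kernel unless $\NP\subseteq\coNP/\text{\normalfont poly}$. For each of the three global constraints I will map a CNF formula $F$ on $n$ variables and $m$ clauses to an instance whose variable domains are drawn from a shared pool of $2n$ ``literal values'' $T_i, F_i$ ($i\in [n]$), so that the relevant parameter is linear in $n$ and so that satisfiability of $F$ is equivalent to consistency of the constructed constraint. The unparameterized versions are clearly in $\NP$ (one just guesses an assignment), so Theorem~\ref{the:trans} then transfers the lower bound in each case.

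For $\textsc{NValue}\hy\PR{Cons}{dx}$ I would introduce, for each $i\in[n]$, a variable $x_i$ with $\dom(x_i)=\{T_i,F_i\}$, and for each clause $C_j$ a variable $y_j$ with $\dom(y_j)=\{T_i : v_i\in C_j\}\cup\{F_i : \neg v_i \in C_j\}$, and fix $\dom(N)=\{n\}$. Since the $x_i$'s alone already realise $n$ distinct values, consistency forces each $y_j$ to reuse one of them, which is possible iff some literal of $C_j$ is satisfied by the truth assignment encoded by the $x_i$'s. Here $dx=|\dom(X)|=2n$.

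For $\textsc{Disjoint}\hy\PR{Cons}{dxy}$ I reuse the same $2n$ values as a shared pool and place the $x_i$ into $X$ (so that $T_i$ versus $F_i$ encodes the truth value of $v_i$), while for each clause $C_j$ I put into $Y$ a witness variable $z_j$ whose domain is the set of \emph{falsifying} literal values of $C_j$, namely $F_i$ for a positive literal $v_i\in C_j$ and $T_i$ for a negative literal $\neg v_i\in C_j$. The requirement $\alpha(z_j)\ne\alpha(x_i)$ for every $i$ then says exactly that the literal picked by $z_j$ evaluates to true, so every clause is satisfied. Since $\dom(X)=\dom(Y)=\{T_i,F_i : i\in[n]\}$, we have $dxy=2n$. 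For $\textsc{Uses}\hy\PR{Cons}{dy}$ the roles are swapped: variables $y_i\in Y$ with $\dom(y_i)=\{T_i,F_i\}$ encode the truth assignment, and for each clause $C_j$ a variable $x_j\in X$ has as its domain the set of \emph{satisfying} literal values of $C_j$, so that the \textsc{Uses} requirement $\alpha(x_j)\in\{\alpha(y_i):i\in[n]\}$ holds iff some literal of $C_j$ is true. Here $dy=|\dom(Y)|=2n$.

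Each construction runs in polynomial time and blows up the parameter by only a constant factor, so each qualifies as a polynomial parameter transformation from $\PR{SAT}{vars}$. The one routine but slightly delicate point is to check that the clause-witness variables cannot smuggle extra values into the solution or otherwise bypass the intended gadget: in the NValue reduction this follows from the combination of $\dom(N)=\{n\}$ with $\dom(y_j)$ being contained in the $x_i$-value pool, and in the Disjoint and Uses reductions it follows from restricting the witness domains to precisely the falsifying respectively satisfying literal values of each clause. Once this bookkeeping is done, Theorem~\ref{the:trans} yields all three lower bounds simultaneously.
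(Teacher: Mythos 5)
Your proposal is correct and follows essentially the same route as the paper: a polynomial parameter transformation from $\PR{SAT}{vars}$ using the standard literal-value encoding of Bessi\`ere \emph{et al.}, with one variable per SAT variable (domain of two literal values) and one witness variable per clause, giving parameter $2n$ in each of the three cases and concluding via Theorem~\ref{the:trans}. The only cosmetic deviation is that for \textsc{Disjoint} you give the clause witnesses the \emph{falsifying} literal values rather than the satisfying ones with a flipped interpretation of the assignment, which is an equivalent mirror image of the paper's construction.
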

\begin{proof} We devise a polynomial parameter transformation from
  $\PR{SAT}{vars}$.  We use a construction of~\Bessiere \etal~\cite{BessiereEtAl04}.
  Let $F= \{C_1,\dots,C_m\}$ be a CNF formula over variables
  $x_1,\dots,x_n$.  We consider the clauses and variables of $F$ as the
  variables of a global constraint with domains $\dom(x_i)=\{-i,i\}$,
  and $\dom(C_j)=\SB i \SM x_i\in C_j \SE \cup \SB -i \SM \neg x_i\in
  C_j \SE$. Now $F$ can be encoded as an \textsc{NValue} constraint with
  $X=\{x_1,\dots,x_n,C_1,\dots,C_m\}$ and $\dom(N)=\set{n}$.
  By the pigeonhole principle, a legal instantiation $\alpha$ for this
  constraint has $|\set{\alpha(x_i) : 1\le i\le n}|=N$. Setting
  $\alpha(x_i)=i$ corresponds to setting the variable $x_i$ of $F$
  to 1 and setting $\alpha(x_i)=-i$ corresponds to setting the variable
  $x_i$ of $F$ to 0. Now, for each $C_j \in F$, $\alpha(C_j) \in
  \set{\alpha(x_i) : 1\le i\le n}$ since only $n$ values are available
  for $\alpha$, and the literal corresponding to $\alpha(C_j)$ satisfies the
  clause $C_j$.
  %(clearly $F$ is satisfiable if and only if the constraint is consistent).
  Since $dx=2n$ we have a polynomial parameter reduction from $\PR{SAT}{vars}$
  to \textsc{NValue}\hy $\PR{Cons}{$dx$}$.  Similarly, as observed by
  \Bessiere \etal~\cite{BessiereHebrardHnichKiziltanWalsh09}, $F$ can be encoded as a
  \textsc{Disjoint} constraint with $X=\{x_1,\dots,x_n\}$ and
  $Y=\{C_1,\dots,C_m\}$ ($dxy\leq 2n$), or as a \textsc{Uses} constraint
  with $X=\{C_1,\dots,C_m\}$ and $Y=\{x_1,\dots,x_n\}$ ($dy=2n$).
  Since the unparameterized problems are clearly NP-complete,
  and $\PR{SAT}{vars}$ is known to have
  no polynomial kernel unless $\NP \subseteq \coNP/\text{\normalfont poly}$
  (as remarked in the proof of Theorem~\ref{the:backdoor}),
  the result
  follows by Theorem~\ref{the:trans}.
\end{proof}
%\noindent Further results on kernels for global constraints
%have been obtained by Gaspers and Szeider \cite{GaspersSzeider11}.

%\section{Extended Global Cardinality Constraints}

The Consistency problem for \EGC constraints is $\NP$-hard
\cite{QuimperLopezortizVanbeekGolynski04}. However, if all sets
$\dom(\cdot)$ are intervals, then consistency can be checked in polynomial
time using network flows~\cite{Regin96}.  By the result of \Bessiere
\etal~\cite{BessiereEtal08}, the Consistency problem for \EGC
constraints is fixed-parameter tractable, parameterized by the number of
holes in the sets $\dom(\cdot)$. Thus R\'{e}gin's result generalizes to
instances that are close to the interval case.

However, it is unlikely that \EGC constraints admit a polynomial kernel.
\begin{theorem}
  \EGC-$\PR{Cons}{\holes}$ does not admit a polynomial kernel unless
   $\NP \subseteq \coNP/\text{\normalfont poly}$.
%  $\NP \subseteq \coNP/\text{\normalfont poly}$.
\end{theorem}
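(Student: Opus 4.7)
The plan is to apply Theorem~\ref{the:trans} by constructing a polynomial parameter transformation from $\PR{SAT}{vars}$, which is known to be compositional~\cite{FortnowSanthanam08} and therefore has no polynomial kernel unless $\NP \subseteq \coNP/\text{\normalfont poly}$. Given a CNF formula $F$ with variables $x_1,\dots,x_n$ and clauses $C_1,\dots,C_m$, the goal is to build an \EGC instance whose consistency is equivalent to the satisfiability of $F$ and whose cardinality domains $\dom(v)$ together contain only $O(n)$ holes, independent of $m$.

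Concretely, I would use an amplification trick. Set $K:=m+1$ and take the universe of values $D=\{+1,-1,\dots,+n,-n\}$. For each $i\in\{1,\dots,n\}$ introduce $K$ copies $a_{i,1},\dots,a_{i,K}$ of the variable gadget with $\dom(a_{i,k})=\{+i,-i\}$, and for each clause $C_j$ a witness variable $w_j$ with $\dom(w_j)=\{+i:x_i\in C_j\}\cup\{-i:\neg x_i\in C_j\}$. Impose the cardinality constraints
\[
\dom(+i)\;=\;\dom(-i)\;=\;\{0\}\cup\{K,K+1,\dots,K+m\}
\]
for every $i$. Each such set has exactly one hole, namely the gap between $0$ and $K$, so the total number of holes is $2n$---linear in the parameter of $\PR{SAT}{vars}$.

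The substantive step is the equivalence between consistency of this instance and satisfiability of $F$, which I would establish by a short case analysis. Let $c_i^+,c_i^-$ denote how many of the copies $a_{i,\cdot}$ take $+i$ and $-i$, so $c_i^++c_i^-=K$. Since the $w_j$'s contribute in total at most $m<K$ to the counts, any mixed split $0<c_i^+<K$ would demand clause contributions of at least $(K-c_i^+)+c_i^+=K$, a contradiction. Hence for each $i$ either all $a_{i,k}$'s take $+i$ (and then the cardinality constraint on $-i$ forces every $w_j$ to avoid $-i$) or all take $-i$ (symmetric). This reads off a truth assignment $\tau$ from any legal instantiation, and the remaining constraints force each $w_j$ to pick a literal of $C_j$ true under $\tau$, witnessing its satisfaction. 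The converse direction is immediate. The unparameterized version of the target problem is in $\NP$ and $\NP$-hard since \EGC consistency is $\NP$-hard, so Theorem~\ref{the:trans} applies.

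The main obstacle I anticipate is getting only $O(n)$ holes: a naive one-variable-per-$x_i$ encoding would need cardinality domains on $+i$ and $-i$ whose hole structure depends on the clause-occurrence counts of $x_i$, and without amplification the constraints cannot simultaneously distinguish ``the variable gadget contributes to this value'' from ``a clause contributes to this value'' and force mutual exclusion between $+i$ and $-i$. The choice $K>m$ decouples the two kinds of contributions: clauses alone can never reach the threshold $K$, so the only way to land in $\{K,\dots,K+m\}$ is to have the variable gadget commit all its mass there.
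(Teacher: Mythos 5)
Your proposal is correct, and at the top level it follows the same route as the paper: a polynomial parameter transformation from $\PR{SAT}{vars}$ (no polynomial kernel unless $\NP \subseteq \coNP/\text{\normalfont poly}$, by compositionality) followed by Theorem~\ref{the:trans}, using the fact that the unparameterized \EGC consistency problem is in $\NP$. The only real difference is in how the reduction itself is obtained: the paper invokes, as a black box, the NP-hardness construction of Quimper \etal, which already produces an \EGC constraint whose cardinality domains are of the form $\{0,i_v\}$ with $i_v>1$ for at most $2k$ values, whereas you build a self-contained gadget from scratch. Your gadget checks out: with $K=m+1$ copies of each variable gadget and cardinality domains $\{0\}\cup\{K,\dots,K+m\}$, a mixed split $0<c_i^+<K$ forces $d_i^+\geq K-c_i^+$ and $d_i^-\geq c_i^+$, hence $d_i^++d_i^-\geq K>m$, contradicting that only $m$ witness variables exist; the all-or-nothing behaviour then forces $d_i^-=0$ (resp.\ $d_i^+=0$), which is exactly the clause-satisfaction condition. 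Each cardinality domain contributes one hole, for $2n$ holes in total, and the instance has $n(m+1)+m$ variables, so the transformation is a polynomial parameter transformation. The trade-off is the expected one: the paper's proof is shorter but rests on an external result, while yours is longer but fully self-contained and makes explicit why the hole count is independent of the number of clauses.
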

\begin{proof}
  We use the following result of Quimper \etal~\cite{QuimperLopezortizVanbeekGolynski04}: Given a CNF formula $F$ on
  $k$ variables, one can construct in polynomial time an \EGC constraint
  $C_F$ such that
  \begin{enumerate}
  \item for each value $v$ of $C_F$, $\dom(v)=\{0,i_v\}$ for an
    integer $i_v>0$,
  \item $i_v>1$ for at most $2k$ values $v$, and
  \item $F$ is satisfiable \myIff $C_F$ is consistent.
  \end{enumerate}
  Thus, the number of holes in $C_F$ is at most twice the number of
  variables of $F$.

  We observe that this result provides a polynomial parameter reduction
  from $\PR{SAT}{vars}$ to \EGC-$\PR{Cons}{\holes}$.  As remarked in the
  proof of Theorem~\ref{the:backdoor}, $\PR{SAT}{vars}$ is known to have
  no polynomial kernel unless $\NP \subseteq \coNP/\text{\normalfont poly}$.  Hence
  the theorem follows.
  \end{proof}

\subsection{A Polynomial Kernel for NValue Constraints}

Beldiceanu~\cite{Beldiceanu01} and
Bessi{\`e}re \etal~\cite{BessiereEtal06} decompose \NV constraints into
two other global constraints: \AMNV and \ALNV, which require
that %the number of values used for the variables in $X$ is at most $N$ or at least $N$,
at most $N$ or at least $N$ values are used for the variables in $X$,
respectively.
%Checking the consistency of \NV and
%\AMNV constraints is $\NP$-complete, while the Consistency problem of
%\ALNV constraints can be solved in polynomial time.
The Consistency problem is $\NP$-complete for \NV and
\AMNV constraints, and polynomial time solvable for 
\ALNV constraints.

In this subsection, we will present a polynomial kernel for \AMNV-\PR{Cons}{\holes}.

  \begin{quote}
    \AMNV-\PR{Cons}{\holes}

    \emph{Instance:} An instance $\mathcal{I} = (X,D,\dom,N)$, where $X = \set{x_1,\dots,x_n}$ is a set of variables, $D$ is a totally ordered set of values, $\dom:X\rightarrow 2^D$ is a map assigning a non-empty domain $\dom(v)\subseteq D$ to each variable $x\in X$, and an integer $N$.

    \emph{Parameter:} The integer $k = \nbholes(X)$.

    \emph{Question:} Is there a set $S\subseteq D$, $|S| \le N$, such that for every variable $x\in X$, $\dom(x) \cap S \ne \emptyset$?
  \end{quote}

\begin{theorem}\label{thm:kernel}
  The problem \AMNV-$\PR{Cons}{\holes}$ has a polynomial
  kernel. In particular, an \AMNV constraint with $k$ holes can be
  reduced in linear time to a consistency-equivalent \AMNV constraint
  with $O(k^2)$ variables and $O(k^2)$ domain values.
\end{theorem}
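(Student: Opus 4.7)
The plan is to exploit the fact that variables with interval domains are essentially ``cheap'' to cover --- any value in the interval will do --- while the variables with holes are few in number and structurally rigid. Let $H \subseteq X$ be the set of variables with at least one hole in their domain; since each such variable contributes at least one hole, $\Card{H} \le k$ and these will be kept unchanged. The remaining variables $I = X \setminus H$ have interval domains and a priori can be arbitrarily numerous, so the bulk of the work is to reduce them, along with the total number of values appearing in domains.

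The key structural object will be a set $V^* \subseteq D$ of $O(k)$ \emph{critical values}, consisting of the endpoints of all maximal intervals of domains of variables in $H$ (at most $2(\Card{H}+k) = O(k)$ such endpoints) together with one representative value chosen in each ``slab'' between two consecutive such endpoints. The slabs have the crucial property that every variable in $H$ is either entirely supported or entirely unsupported on each slab. I would then prove the main technical claim: $\mathcal{I}$ is consistent \myIff there is a hitting set $S \subseteq V^*$ with $\Card{S} \le N$. The forward direction requires an exchange argument: given any hitting set, each value can be pushed to the representative of its slab without losing any hit of an $H$-variable, and the placement of representatives is chosen so that any interval variable wide enough to straddle a slab boundary is also preserved. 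Interval variables so narrow that they fit strictly between two consecutive critical values require a dedicated reduction rule --- either their slab's representative already lies in them (in which case they are already hit by any solution selecting that representative), or they force an additional critical value to be included, which can be accounted for with $O(k)$ extra values per slab and hence $O(k^2)$ in total; this is what yields the $O(k^2)$ bound on domain values.

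With $V^*$ in hand, an interval variable $x \in I$ with $\dom(x) = [a,b]$ is characterized, for consistency purposes, by the pair $(a^*, b^*) = (\min(V^* \cap [a,b]), \max(V^* \cap [a,b]))$. Interval variables with identical $(a^*,b^*)$ are equivalent and all but one may be deleted. Since $(a^*,b^*) \in V^* \times V^*$, there are at most $O(k^2)$ equivalence classes, yielding $O(k^2)$ interval variables; adding the $\Card{H} \le k$ hole variables still gives $O(k^2)$ variables overall. For the linear-time bound, after a single sort of the domain endpoints (or a radix sort given that values can be renamed to small integers), identifying $H$, constructing $V^*$, computing each interval variable's canonical pair $(a^*,b^*)$, and bucketing by this pair can all be done in one additional pass over the input.

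The step I expect to be hardest is the correctness of the exchange argument together with the handling of ``narrow'' interval variables lying strictly inside a slab: a na\"\i ve choice of $V^*$ (just the endpoints of $H$-intervals plus one representative per slab) does not see such variables, and making $V^*$ rich enough to capture them without blowing up beyond $O(k^2)$ values requires a careful rule that groups narrow interval variables by the slab containing them and adds at most a constant number of new values per narrow-variable type. Showing that this enriched $V^*$ still admits the exchange argument --- so that every solvable instance has a solution using only values in $V^*$ --- is the crux of the proof, and is the part that I would expect to defer to the appendix.
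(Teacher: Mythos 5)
There is a genuine gap, and it sits exactly where you predicted the difficulty would be, but it is more than a missing technical lemma: the central claim of your reduction is false. You assert that $\mathcal{I}$ is consistent \myIff there is a hitting set $S\subseteq V^*$ with $\Card{S}\le N$, for a set $V^*$ built from the $O(k)$ endpoints of the hole-variables' intervals plus a bounded number of representatives per slab. Take $k=0$ and $n$ pairwise disjoint interval domains $[1,1],[2,2],\dots,[n,n]$ with $N=n$. Here $H=\emptyset$, there is a single slab, and $V^*$ has $O(1)$ values (your ``$O(k)$ extra values per slab'' for narrow variables contributes nothing when $k=0$), yet no subset of a constant-size $V^*$ can hit $n$ disjoint intervals, so the right-hand side of your equivalence fails while the instance is plainly consistent. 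The same example breaks the equivalence-class step: all $n$ intervals are ``narrow'' and would collapse to a bounded number of representatives, but with $N=n-1$ the original instance is inconsistent while any instance with $O(1)$ variables and budget $n-1$ is consistent, so deleting ``equivalent'' interval variables without touching $N$ is not consistency-preserving. More generally, a single slab can contain arbitrarily many pairwise disjoint required intervals, so no enrichment of $V^*$ by a function of $k$ alone can rescue the exchange argument.

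The missing idea is that the kernelization must \emph{spend the budget on the instance's behalf}: long hole-free stretches cannot be discarded or collapsed, they must be contracted while decrementing $N$ by the number of values they provably consume. This is precisely the role of the paper's \textbf{merge} operation --- once every ``popular leader'' has accumulated two followers, the last two followers of each are merged and $N:=N-1$, and the correctness proof exhibits a bijection-up-to-one-value between solutions of the instance and of the merged instance. Your dominated-variable and dominated-value deletions do correspond to the paper's rules \RedIncl and \RedDom and are sound, and your intuition that the hole structure contributes only $O(k)$ break points is the right one behind the paper's bound of $O(k)$ leaders; but without a budget-adjusting contraction of the interval-only parts, the variable count cannot be brought down to $O(k^2)$.
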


The proof of the theorem is based on a kernelization algorithm that we
will describe in the remaining part of this section.

% The \NV constraint was introduced by Pachet and Roy~\cite{PachetRoy99}.
% For a set of variables $X$ and a variable $N$, $\NV(X,N)$ is consistent
% if there is an assignment $\alpha$ such that exactly $\alpha(N)$
% different values are used for the variables in $X$.  \AD is the special
% case where $\dom(N)=\set{|X|}$.  

\smallskip

We say that a subset of $D$ is an \emph{interval} if it has no hole.
An \emph{interval} $I=[v_1,v_2]$ of a variable $x$ is an inclusion-wise maximal hole-free subset of its domain. Its \mydefalt{left endpoint}{left endpoint} $\lep(I)$ and \mydefalt{right endpoint}{right endpoint} $\rep(I)$ are the values $v_1$ and $v_2$, respectively.
Fig.~\ref{fig:input} gives an example of an instance and its interval representation. We assume that instances are given by a succinct description, in which the domain of a variable is given by the left and right endpoint of each of its intervals. As the number of intervals of the instance $\mathcal{I}=(X,D,\dom,N)$ is $n+k$, its size is $|\mathcal{I}|=O(n+|D|+k)$. In case $\dom$ is given by an extensive list of the values in the domain of each variable, a succinct representation can be computed in linear time.

Also, in a variant of \AMNV-\PR{Cons}{\holes} where $D$ is not part of the input, we may construct $D$ by sorting the set of all endpoints of intervals in time $O((n+k) \log(n+k))$. Since, w.l.o.g., a solution
contains only endpoints of intervals, this step does not compromise the correctness.

A greedy algorithm by Beldiceanu~\cite{Beldiceanu01} checks the
consistency of an \AMNV constraint in linear time when all domains are
intervals (i.e., $k=0$).  Further, \Bessiere \etal~\cite{BessiereEtal08}
have shown that Consistency (and Enforcing \HAC) is \fpt, parameterized
by the number of holes, for all constraints for which bound consistency
can be enforced in polynomial time.  A simple algorithm for checking the
consistency of \AMNV goes over all instances obtained from restricting
the domain of each variable to one of its intervals, and executes the
algorithm of \cite{Beldiceanu01} for each of these $2^k$ instances. The
running time of this algorithm is clearly bounded by $O(2^k \cdot
|\mathcal{I}|)$.

\begin{figure} \centering
 \begin{tikzpicture}[xscale=0.65,yscale=0.75]
  \tikzset{ivl/.style={thick},
  var/.style={midway,above=0.5pt}}
  \tikzset{ivlopt/.style={thick,red}}
 
  \foreach \x in {1,...,14} \draw (\x,-0.6) node {$\x$};
  \foreach \x in {1,...,14} \draw[thin,black!20] (\x,-0.3) -- (\x,3.3);
  
  \draw (2.5,0.5) node[rectangle,draw] {$N=6$};
  
  \draw[ivl,|-|] (0.9,3)-- node[var] {$x_1$} (2.1,3);
  \draw[ivlopt,|-] (2.9,3)-- node[var] {$x_3$} (4.1,3);
  \draw[ivl,|-|] (4.9,3)-- node[var] {$x_6$} (7.1,3);
  \draw[ivl,|-|] (7.9,3)-- node[var] {$x_9$} (9.1,3);
  \draw[ivlopt,-|] (9.9,3)-- node[var] {$x_3'$} (11.1,3);
  \draw[ivlopt,|-] (11.9,3)-- node[var] {$x_{13}$} (12.1,3);
  \draw[ivlopt,-|] (13.9,3)-- node[var] {$x_{13}'$} (14.1,3);
  \draw[ivlopt,|-] (1.9,2)-- node[var] {$x_2$} (3.1,2);
  \draw[ivl,|-|] (3.9,2)-- node[var] {$x_4$} (6.1,2);
  \draw[ivl,|-|] (6.9,2)-- node[var] {$x_7$} (8.1,2);
  \draw[ivlopt,-|] (9.9,2)-- node[var] {$x_2'$} (10.1,2);
  \draw[ivl,|-|] (10.9,2)-- node[var] {$x_{12}$} (12.1,2);
  \draw[ivl,|-|] (12.9,2)-- node[var] {$x_{15}$} (14.1,2);
  \draw[ivl,|-|] (5.9,1)-- node[var] {$x_8$} (8.1,1);
  \draw[ivl,|-|] (8.9,1)-- node[var] {$x_{10}$} (10.1,1);
  \draw[ivlopt,|-] (10.9,1)-- node[var] {$x_{11}$} (11.1,1);
  \draw[ivlopt,-|] (12.9,1)-- node[var] {$x_{11}'$} (13.1,1);
  \draw[ivlopt,|-] (4.9,0)-- node[var] {$x_5$} (6.1,0);
  \draw[ivlopt,-|] (7.9,0)-- node[var] {$x_5'$} (10.1,0);
  \draw[ivl,|-|] (11.9,0)-- node[var] {$x_{14}$} (13.1,0);
 \end{tikzpicture}
 \caption{\label{fig:input} Interval representation of an \AMNV instance $\mathcal{I}=(X,D,\dom,N)$, with $X=\set{x_1,\ldots, x_{15}}$, $N=6$, $D=\set{1,\ldots,14}$, and $\dom(x_1)=\set{1,2}, \dom(x_2)=\set{2,3,10}$, etc.}
\end{figure}

%In the realm of parameterized complexity it is then natural to ask whether \AMNV has a polynomial kernel.
% In other words, is there a preprocessing algorithm returning, in polynomial time, an equivalent instance whose size is bounded by a polynomial function of the parameter?
% Next subsection, we give a linear time kernelization
% algorithm. 

% We then prove its correctness and that the size of the
% produced instance can be bounded by $O(k^2)$. In Subsection
% \ref{subsec:FPTalgo}, we give an \FPT\ algorithm, which uses the
% kernelization algorithm, for checking the consistency of an \AMNV
% constraint in time $O(1.6181^k k^2 +|\mathcal{I}|)$. \HAC can then be
% enforced by applying this algorithm $O(|D|)$ times.

\bigskip

%\subsubsection{Kernelization Algorithm}

Let $\mathcal{I}=(X,D,\dom,N)$ be an instance for the consistency problem for \AMNV constraints. 
The algorithm is more intuitively described using the interval representation of the instance.%
%An interval $I$ is always associated to one variable.
The \emph{friends} of an interval~$I$ are the other intervals of $I$'s variable. An interval is \mydef{optional} if it has at least one friend, and \emph{required} otherwise. For a value $v\in D$, let $\ivl(v)$ denote the set of intervals containing $v$.
%An interval is \mydef{optional} ifis one of at least two intervals of a variable $v$; the \mydef{friends} of this interval are the other intervals of $v$. An interval is \mydef{required} if it is the domain of a variable; a required interval has no friends. 
%The \mydefalt{left endpoint}{left endpoint} $\lep(I)$ and \mydefalt{right endpoint}{right endpoint} $\rep(I)$ of $I$ are the values $\min I$ and $\max I$, respectively.

A \emph{solution} for $\mathcal{I}$ is a subset $S\subseteq D$ of at most $N$ values such that there exists an instantiation assigning the values in $S$ to the variables in $X$. 
The algorithm may detect for some value $v\in D$, that, if the problem has a solution, then it has a solution containing $v$. In this case, the algorithm \emph{selects} $v$, i.e., it removes all variables whose domain contains $v$, it removes $v$ from $D$, and it decrements $N$ by one.
The algorithm may detect for some value $v\in D$, that, if the problem has a solution, then it has a solution not containing $v$. In this case, the algorithm \emph{discards}~$v$, i.e., it removes $v$ from every domain and from $D$. (Note that no new holes are created since $D$ is replaced by $D\setminus \set{v}$.)
The algorithm may detect for some variable $x$, that every solution for $(X\setminus \set{x},D,\dom|_{X\setminus \set{x}},N)$ contains a value from $\dom(x)$. In that case, it \emph{removes} $x$. %: the variable $x$ and its intervals are deleted.

The algorithm sorts the intervals by increasing right endpoint (ties are broken arbitrarily). Then, it exhaustively applies the following three reduction rules.
% described below in order of their appearance. Thus, a reduction rule can only be applied when none of the previous rules applies.

\begin{description}
\item{\RedIncl:} If there are two intervals $I,I'$ such that $I'\subseteq I$ and $I'$ is required, then remove the variable of $I$ (and its intervals).
\item{\RedDom:} If there are two values $v,v'\in D$ such that $\ivl(v') \subseteq \ivl(v)$, then discard $v'$.
\item{\RedUnit:} If $|\dom(x)|=1$ for some variable $x$, then select the value in $\dom(x)$.
%\item{\RedFirst:} If the first interval $I$ is required, then remove $I$ and all intervals that intersect $\rep(I)$, and remove their friends. $N:=N-1$.
\end{description}

%When the previous reduction rule does not apply, the first interval is optional.

\noindent
In the example from Fig.~\ref{fig:input}, \RedIncl removes the variables $x_5$ and $x_8$ because $x_{10}\subseteq x_5'$ and $x_7\subseteq x_8$, \RedDom removes the values $1$ and $5$, \RedUnit selects $2$, which deletes variables $x_1$ and $x_2$, and \RedDom removes $3$ from $D$. The resulting instance is depicted in Fig.~\ref{fig:beforeScanning}.

After none of the previous rules apply, the algorithm scans the
remaining intervals from left to right (i.e., by increasing
  right endpoint). An interval that has already been scanned is either
a \mydef{leader} or a \mydef{follower} of a %non-empty
subset of leaders. Informally, for a leader $L$, if a solution contains
$\rep(L)$, then there is a solution containing $\rep(L)$ and the right
endpoint of each of its
followers. % When a new interval is scanned, an interval that intervals that have already been scanned are \emph{active} or \emph{passive}.

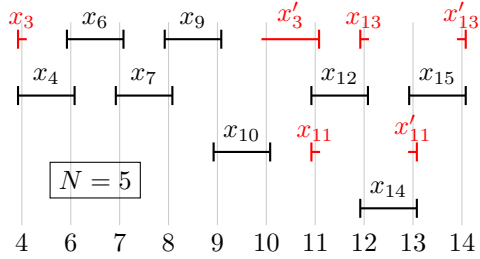
\begin{figure} \centering
 \begin{tikzpicture}[xscale=0.65,yscale=0.75]
  \tikzset{ivl/.style={thick},
  var/.style={midway,above=0.5pt}}
 \tikzset{ivlopt/.style={thick,red}}
 
  \draw (5,-0.6) node {$4$};
  \foreach \x in {6,...,14} \draw (\x,-0.6) node {$\x$};
  \foreach \x in {5,...,14} \draw[thin,black!20] (\x,-0.3) -- (\x,3.3);
  
  \draw (6.5,0.5) node[rectangle,draw] {$N=5$};
 
%  \draw[ivl,|-|] (0.9,3)-- node[var] {$x_1$} (2.1,3);
  \draw[ivlopt,|-] (4.9,3)-- node[var] {$x_3$} (5.1,3);
  \draw[ivl,|-|] (5.9,3)-- node[var] {$x_6$} (7.1,3);
  \draw[ivl,|-|] (7.9,3)-- node[var] {$x_9$} (9.1,3);
  \draw[ivlopt,-|] (9.9,3)-- node[var] {$x_3'$} (11.1,3);
  \draw[ivlopt,|-] (11.9,3)-- node[var] {$x_{13}$} (12.1,3);
  \draw[ivlopt,-|] (13.9,3)-- node[var] {$x_{13}'$} (14.1,3);
%  \draw[ivlopt,|-] (1.9,2)-- node[var] {$x_2$} (3.1,2);
  \draw[ivl,|-|] (4.9,2)-- node[var] {$x_4$} (6.1,2);
  \draw[ivl,|-|] (6.9,2)-- node[var] {$x_7$} (8.1,2);
%  \draw[ivlopt,-|] (9.9,2)-- node[var] {$x_2'$} (10.1,2);
  \draw[ivl,|-|] (10.9,2)-- node[var] {$x_{12}$} (12.1,2);
  \draw[ivl,|-|] (12.9,2)-- node[var] {$x_{15}$} (14.1,2);
%  \draw[ivl,|-|] (5.9,1)-- node[var] {$x_8$} (8.1,1);
  \draw[ivl,|-|] (8.9,1)-- node[var] {$x_{10}$} (10.1,1);
  \draw[ivlopt,|-] (10.9,1)-- node[var] {$x_{11}$} (11.1,1);
  \draw[ivlopt,-|] (12.9,1)-- node[var] {$x_{11}'$} (13.1,1);
%  \draw[ivlopt,|-] (4.9,0)-- node[var] {$x_5$} (6.1,0);
%  \draw[ivlopt,-|] (7.9,0)-- node[var] {$x_5'$} (10.1,0);
  \draw[ivl,|-|] (11.9,0)-- node[var] {$x_{14}$} (13.1,0);
 \end{tikzpicture}
 \vspace{-1pt}
 \caption{\label{fig:beforeScanning} Instance obtained from the instance of Fig.~\ref{fig:input} by exhaustively applying rules \RedIncl, \RedDom, and \RedUnit.}
\end{figure}

The algorithm scans the first intervals up to, and including, the first required interval. All these intervals become leaders.

The algorithm then continues scanning intervals one by one. Let $I$ be the interval that is currently scanned and $I_{p}$ be the last interval that was scanned. The \mydef{active} intervals are those that have already been scanned and intersect $I_{p}$. A \mydef{popular} leader is a leader that is either active or has at least one active follower.
\begin{itemize}
% \item If all active intervals intersect $I$, then \textbf{remove} $I$'s variable.
% The previous case never applies
 \item If $I$ is optional, then $I$ becomes a leader, the algorithm continues scanning intervals until scanning a required interval; all these intervals become leaders.
 \item If $I$ is required, then it becomes a follower of all popular leaders that do not intersect $I$ and that have no follower intersecting $I$. If all popular leaders have at least two followers, then set $N:=N-1$ and \textbf{merge} the second-last follower of each popular leader with the last follower of the corresponding leader; i.e., for every popular leader, the right endpoint of its second-last follower is set to the right endpoint of its last follower, and then the last follower of every popular leader is removed.
\end{itemize}

\noindent
After having scanned all the intervals, the algorithm exhaustively applies the reduction rules \RedIncl, \RedDom, and \RedUnit again.
%\begin{description}
%\item{\RedEmptyDomain:} If there is a variable $v\in V$ such that $\dom(v)=\emptyset$, then remove $v$.
%\item{\RedDoubleVal:} If there are two values that occur in the domains of the same variables, then remove one of these values from $D$ and update the map $\dom$ accordingly.
%\end{description}

In the example from Fig.~\ref{fig:beforeScanning}, the interval of variable $x_6$ is merged with $x_9$'s interval, and the interval of $x_7$ with the interval of $x_{10}$. \RedDom then removes the values~$7$ and $8$, resulting in the instance depicted in Fig.~\ref{fig:kernel}.

The correctness and performance guarantee of this kernelization algorithm
are proved in \ref{app:kernel}. In particular, for the correctness,
we prove that a solution $S$ for an instance $\mathcal{I}$ can be obtained from a solution $S'$ for an instance $\mathcal{I}'$ that is obtained from $\mathcal{I}$ by one \textbf{merge}-operation by adding to $S'$
one value that is common to all second-last followers of the popular leaders that were merged.
We can easily bound the number of leaders by $4k$ and we prove that each leader has at most $4k$ followers.
Since each interval is a leader or a follower of at least one leader, this bounds the total number
of intervals by $O(k^2)$.
Using the succinct description of the domains, the size of the kernel is $O(k^2)$.
We also give some details for a linear-time implementation of the algorithm.

\smallskip

\noindent
\emph{Remark:} Denoting $\textsf{var}(v)=\set{x\in X: v\in \dom(x)}$, Rule \RedDom can be generalized to discard any $v'\in D$ for which there exists a $v\in D$ such that $\textsf{var}(v')\subseteq \textsf{var}(v)$ at the expense of a higher running time.

\medskip
\noindent
The kernel for \AMNV-\PR{Cons}{\holes} can now be used to derive a kernel for \textsc{NValue}-\PR{Cons}{\holes}.

\begin{corollary}\label{cor:nvalue}
 The problem \textsc{NValue}-$\PR{Cons}{\holes}$ has a polynomial
  kernel. In particular, an \textsc{NValue} constraint with $k$ holes can be
  reduced in $O((|X|+|D|)^{\omega/2})$ time to a consistency-equivalent \textsc{NValue} constraint
  with $O(k^2)$ variables and $O(k^2)$ domain values, where $\omega<2.3729$ is the exponent of matrix multiplication.
\end{corollary}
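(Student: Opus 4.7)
My plan is to exploit the classical decomposition $\textsc{NValue} = \AMNV \wedge \ALNV$ of Beldiceanu~\cite{Beldiceanu01} and Bessi{\`e}re \etal~\cite{BessiereEtal06} together with Theorem~\ref{thm:kernel}. Write $m$ and $M$ for the minimum and maximum number of distinct values appearing in any legal instantiation of $X$. The first step is to establish an interpolation lemma: for every integer $v \in [m, M]$ there is a legal instantiation of $X$ using \emph{exactly} $v$ distinct values. The proof is a standard swap argument---starting from a minimum witness $\alpha_m$ and rewriting one variable at a time to agree with a maximum witness $\alpha_M$, each swap changes the number of distinct values used by at most one, so the intermediate-value property forces every integer in $[m, M]$ to be attained. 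Consequently, the \textsc{NValue} constraint is consistent iff $\dom(N) \cap [m, M] \ne \emptyset$.

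Next, I would compute $M$, which equals the size of a maximum matching in the bipartite ``variable--value'' graph on $X \cup D$ with edges $\{(x,v) : v \in \dom(x)\}$, using a fast matching algorithm based on matrix multiplication to achieve the stated $O((|X|+|D|)^{\omega/2})$ time. Setting $v_{\max} := \max(\dom(N) \cap [1, M])$ (and outputting a canonical no-instance if this intersection is empty), the monotonicity of \AMNV-consistency in its threshold reduces consistency of the original \textsc{NValue} to consistency of the \AMNV instance $(X, D, \dom, v_{\max})$. I then feed this instance into the linear-time \AMNV kernelization of Theorem~\ref{thm:kernel}, obtaining a consistency-equivalent \AMNV kernel $(X', D', \dom', N')$ with $|X'|, |D'| \in O(k^2)$.

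The final step is to repackage the \AMNV kernel as an \textsc{NValue} kernel by taking variables $X'$, values $D'$, domain map $\dom'$, and counting variable with $\dom(N) := \{1, 2, \dots, N'\}$ (trivial cases $N' \le 0$ or $X' = \emptyset$ are dispatched by canonical yes/no-instances). Applying the interpolation lemma to the kernel itself, this \textsc{NValue} constraint is consistent iff $\{1, \dots, N'\} \cap [m', M'] \ne \emptyset$ for the kernel's minimum and maximum distinct counts $m', M'$; since $m' \ge 1$ whenever $X' \ne \emptyset$, this is equivalent to $m' \le N'$, i.e., to \AMNV-consistency of the kernel, and hence by Theorem~\ref{thm:kernel} to \AMNV-consistency of $(X, D, \dom, v_{\max})$, and finally to consistency of the original \textsc{NValue}. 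The resulting kernel has $O(k^2)$ variables, $O(k^2)$ domain values, and $|\dom(N)| \le N' \le |X'| = O(k^2)$.

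The main subtlety I expect is the mismatch between ``at most $N$'' (what the \AMNV kernel preserves) and ``exactly $N$'' (what \textsc{NValue} demands); the interpolation lemma bridges this gap, and endowing the kernel's counting variable with the whole interval $\{1, \dots, N'\}$ is what lets the kernel absorb the \ALNV side at no additional structural cost. The running time is dominated by the matching computation, with the \AMNV kernelization contributing only an additive linear term.
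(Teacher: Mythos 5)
Your proposal is correct and follows essentially the same route as the paper's proof: compute the maximum number $M$ of distinct attainable values via a fast bipartite matching, truncate $\dom(N)$ to $[1,M]$, use the interpolation (intermediate-value) property of \textsc{NValue} to reduce consistency to the \AMNV instance with threshold $\max(\dom(N)\cap[1,M])$, and then invoke the kernelization of Theorem~\ref{thm:kernel}. You are merely more explicit than the paper about the swap-argument proof of interpolation and about repackaging the \AMNV kernel as an \textsc{NValue} instance with $\dom(N)=\{1,\dots,N'\}$, which the paper leaves implicit.
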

\begin{proof}
 As in \cite{BessiereEtal06}, we determine the largest possible value for $N$ if its domain were the set of all integers.
 This can be done in $O((|X|+|D|)^{\omega/2})$ time \cite{MuchaS04,VassilevskaWilliams12} by computing a maximum matching in the graph whose vertices are $X \cup D$ with an edge between
 $x\in X$ and $v\in D$ iff $v\in \dom(x)$.
 Suppose this largest possible value is $N^+$.
 Now, set $\dom(N):=\set{v\in \dom(N) : v\le N^+}$, giving a consistency-equivalent \textsc{NValue} constraint.
 Note that if this constraint has a legal instantiation $\alpha$ with $\alpha(N)\le \max(\dom(N))$, then
 it has a legal instantiation $\alpha'$ with $\alpha'(N)= \max(\dom(N))$.
 Therefore, it suffices to compute a kernel for \AMNV-$\PR{Cons}{\holes}$ with the same variables and domains and value $N = \max(\dom(N))$,
 and return it.
\end{proof}

\subsection{Improved FPT Algorithm using the Kernel}
\label{subsec:FPTalgo}

Using the kernel from Theorem \ref{thm:kernel} and the simple algorithm
described in the beginning of this section, one arrives at a $O(2^k k^2
+ |\mathcal{I}|)$ time algorithm for checking the consistency of an
\AMNV constraint. Borrowing ideas from the kernelization algorithm, we
now reduce the exponential dependency on $k$ in the running time.  The
speed-ups due to this branching algorithm and the kernelization
algorithm lead to a speed-up for enforcing \HAC for \AMNV constraints
(by Corollary \ref{cor:HAC}) and for enforcing \HAC for \NV constraints
(by the decomposition of \cite{BessiereEtal06}).

\begin{theorem}\label{thm:FPTalgo}
The Consistency problem for \AMNV constraints admits a $O(\phi^k k^2+|\mathcal{I}|)$ time algorithm, where $k$ is the number of holes in the domains of the input instance $\mathcal{I}$, and $\phi=\frac{1+\sqrt{5}}{2}<1.6181$.
\end{theorem}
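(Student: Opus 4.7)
The plan is to combine the linear-time kernelization of Theorem~\ref{thm:kernel} with a branching procedure on the kernel whose recurrence is the Fibonacci recurrence $T(k)\le T(k-1)+T(k-2)$, yielding the golden-ratio base. First, I would run the kernelization in time $O(|\mathcal{I}|)$ to obtain an equivalent \AMNV instance $\mathcal{I}'$ with $O(k^2)$ variables, $O(k^2)$ domain values, and succinct size $O(k^2)$; this already absorbs the additive $|\mathcal{I}|$ term in the target bound. All subsequent work takes place on $\mathcal{I}'$ and its descendants, and after every branching step I would re-apply the reduction rules \RedIncl, \RedDom, and \RedUnit exhaustively in $O(k^2)$ time per pass.

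The core branching rule, applicable as long as some variable $x$ has $h(x)\ge 2$ holes (equivalently, at least three intervals), picks one of its intervals $I$ and branches into two subproblems: (A) \emph{keep $I$}: restrict $\dom(x):=I$, which makes the new hole count of $x$ equal to zero and hence drops the parameter by $h(x)\ge 2$; and (B) \emph{drop $I$}: set $\dom(x):=\dom(x)\setminus I$, which, since a variable with $h$ holes has $h+1$ intervals and removing any single one of them---leftmost, rightmost, or interior---decreases its hole count by exactly one, drops the parameter by exactly $1$. This yields the recurrence $T(k)\le T(k-1)+T(k-2)+O(k^2)$, whose solution is $T(k)=O(\phi^k k^2)$ with $\phi=\frac{1+\sqrt{5}}{2}<1.6181$.

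The main obstacle is the residual case in which every variable has $h(x)\le 1$, so that each non-trivial domain is a union of exactly two intervals: the naive ``choose one of the two intervals for $x$'' branching drops $k$ by only $1$ in each branch, giving $2^k$. To recover the Fibonacci recurrence here I would ``borrow ideas from the kernelization algorithm'', as the authors hint, by exploiting the leader/follower structure constructed during the left-to-right scan of the kernel: once \RedIncl, \RedDom, and \RedUnit are exhaustively applied, any surviving $2$-interval variable $x$ must be entangled with a second hole-bearing variable $y$, in the sense that fixing one of the two intervals of $x$ forces, via the reduction rules, a corresponding restriction of $y$ and hence an additional hole to disappear. In that branch the parameter then drops by at least $2$, while in the other it drops by at least $1$, restoring the Fibonacci recurrence. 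Verifying this entanglement claim, and bounding the exact number of holes lost through the cascade of reductions, is the technically delicate part of the proof; once it is in hand, summing over the recursion tree and adding the kernelization cost yields the claimed $O(\phi^k k^2+|\mathcal{I}|)$ bound.
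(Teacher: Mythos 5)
Your first two paragraphs are sound: kernelizing first and then branching on a variable with at least two holes (restrict its domain to one interval, dropping the parameter by at least $2$, versus delete that interval, dropping it by exactly $1$) does give the Fibonacci recurrence whenever such a variable exists. But that is not where the difficulty of the theorem lies, and your third paragraph --- the residual case in which every variable has at most one hole --- contains a genuine gap. The ``entanglement'' claim you would need (that after exhaustive application of \RedIncl, \RedDom and \RedUnit, committing to one of the two intervals of a surviving two-interval variable $x$ necessarily cascades into the destruction of a hole of some \emph{other} variable) is not only unproven but false in general: take two one-hole variables whose domains lie in disjoint parts of $D$, separated by a chain of required intervals arranged so that no reduction rule fires across the gap; then either branch on $x$ eliminates only $x$'s own hole, the parameter drops by exactly $1$ in both branches, and your recurrence degenerates to $T(k)\le 2T(k-1)$, i.e.\ $2^k$. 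Since an instance (and its kernel) may consist entirely of one-hole variables, this is not a corner case but the heart of the matter.

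The paper closes this case by a different mechanism: a dominance argument rather than an extra hole loss. It always branches on the \emph{first} optional interval $I_1$ in the left-to-right order (select $\rep(I_1)$ versus delete $I_1$), and observes that when both resulting instances $\mathcal{I}_1,\mathcal{I}_2$ still have $k-1$ holes, they begin with the same interval $I_f$ and in fact coincide except for the value of $N$ once $I_1$'s friend is removed from $\mathcal{I}_1$ and the reduction rules are re-applied. Comparing the numbers $s_1,s_2$ of values selected to the left of $\rep(I_f)$ in the two branches, the branch with the smaller count dominates the other (its non-consistency implies the other's non-consistency), so only one recursive call is made in this situation. Thus the recursion either spawns two calls with parameters at most $k-1$ and $k-2$, or a single call, which is what actually yields $T(k)\le T(k-1)+T(k-2)$. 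To repair your proof you would need either a correct substitute for the entanglement claim (which the scenario above rules out as stated) or a dominance argument of this kind for the all-one-hole case.
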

\begin{proof}
The first step of the algorithm invokes the kernelization algorithm and obtains an equivalent instance $\mathcal{I}'$ with $O(k^2)$ intervals in time $O(|\mathcal{I}|)$.

Now, we describe a branching algorithm checking the consistency of
$\mathcal{I}'$. Let $I_1$ denote the first interval of~$\mathcal{I}'$
(in the ordering by increasing right endpoint). $I_1$ is optional. Let
$\mathcal{I}_1$ denote the instance obtained from $\mathcal{I}'$ by
selecting $\rep(I_1)$ and exhaustively applying Reduction Rules \RedDom
and \RedUnit. Let $\mathcal{I}_2$ denote the instance obtained from
$\mathcal{I}'$ by removing $I_1$ (if $I_1$ had exactly one friend, this
friend becomes required) and exhaustively applying Reduction Rules
\RedDom and \RedUnit. Clearly, $\mathcal{I}'$ is consistent \myIff
$\mathcal{I}_1$ or $\mathcal{I}_2$ is consistent.

Note that both $\mathcal{I}_1$ and $\mathcal{I}_2$ have at most $k-1$ holes. If either $\mathcal{I}_1$ or $\mathcal{I}_2$ has at most $k-2$ holes, the algorithm recursively checks whether at least one of $\mathcal{I}_1$ and $\mathcal{I}_2$ is consistent.
If both $\mathcal{I}_1$ and $\mathcal{I}_2$ have exactly $k-1$ holes, we note that in $\mathcal{I}'$,
\begin{enumerate}
\item $I_1$ has one friend,
\item no other optional interval intersects $I_1$, and
\item the first interval of both $\mathcal{I}_1$ and $\mathcal{I}_2$ is $I_f$, which is the third optional interval in $\mathcal{I}'$ if the second optional interval is the friend of $I_1$, and the second optional interval otherwise.
\end{enumerate}
Thus, the instance obtained from $\mathcal{I}_1$ by removing $I_1$'s friend and applying \RedDom and \RedUnit may differ from $\mathcal{I}_2$ only in $N$. Let $s_1$ and $s_2$ denote the number of values smaller than $\rep(I_f)$ that have been selected to obtain $\mathcal{I}_1$ and $\mathcal{I}_2$ from $\mathcal{I}'$, respectively. If $s_1 \le s_2$, then the non-consistency of $\mathcal{I}_1$ implies the non-consistency of $\mathcal{I}_2$. Thus, the algorithm need only recursively check whether $\mathcal{I}_1$ is consistent. On the other hand, if $s_1 > s_2$, then the non-consistency of $\mathcal{I}_2$ implies the non-consistency of $\mathcal{I}_1$. Thus, the algorithm need only recursively check whether $\mathcal{I}_2$ is consistent.

The recursive calls of the algorithm may be represented by a search tree labeled with the number of holes of the instance. As the algorithm either branches into only one subproblem with at most $k-1$ holes, or two subproblems with at most $k-1$ and at most $k-2$ holes, respectively, the number of leaves of this search tree is
%\begin{align*}
$
T(k) \le T(k-1)+T(k-2),
$
%\end{align*}
with $T(0)=T(1)=1$.
Using standard techniques in the analysis of exponential time algorithms (see, e.g., \cite[Chapter 2]{FominKratsch10} and \cite[Lemma 2.3]{Gaspers10}),
it suffices to find a value $c>1$ for the base of the exponential function bounding the running time, that we will minimize, such that
\begin{align*}
c^{k-1} + c^{k-2} &\le c^k & \text{for all } k\ge 0,
\intertext{or, equivalently, such that}
c+1 \le c^2
\end{align*}
It now suffices to find the unique positive real root of $x^2-x-1$, which is $\phi=\frac{1+\sqrt{5}}{2}<1.6181$, to determine the optimal value of $c$ for this analysis.

Since the size of the search tree is $O(\phi^k)$ and the number of operations executed at each node of the search tree is $O(k^2)$, the running time of the branching algorithm can be upper bounded by $O(\phi^k k^2)$.
\end{proof}

\noindent
For the example of Fig.~\ref{fig:kernel}, the instances $\mathcal{I}_1$
and $\mathcal{I}_2$ are computed by selecting the value $4$, and
removing the interval~$x_3$, respectively. The reduction rules select
the value $9$ for $\mathcal{I}_1$ and the values $6$ and $10$ for
$\mathcal{I}_2$. Both instances start with the interval $x_{11}$, and
the algorithm recursively solves $\mathcal{I}_1$ only, where
the values $12$ and $13$ are selected, leading to the
solution $\set{4,9,12,13}$  for the kernelized
instance, which corresponds to the solution $\set{2,4,7,9,12,13}$ for
the instance of Fig.~\ref{fig:input}.

%\smallskip
\begin{corollary}\label{cor:HAC}
\HAC for an \AMNV constraint can be enforced in time $O(\phi^k \cdot k^2 \cdot |D|+|\mathcal{I}|\cdot |D|)$, where $k$ is the number of holes in the domains of the input instance $\mathcal{I}=(X,D,\dom,N)$, and $\phi=\frac{1+\sqrt{5}}{2}<1.6181$.
\end{corollary}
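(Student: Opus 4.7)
The plan is to reduce enforcing \HAC for an \AMNV constraint to $|D|$ invocations of the Consistency algorithm from Theorem~\ref{thm:FPTalgo}, using a structural property specific to \AMNV.

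First, I would establish that for an \AMNV constraint, a value $v \in \dom(x)$ is supported for the variable $x$ \myIff there exists a solution $S$ for $\mathcal{I}$ with $v \in S$. The forward direction is immediate from the definition of a solution. For the converse, given such an $S$ with $v \in \dom(x)$, the instantiation that assigns $v$ to $x$ and to every other variable whose domain contains $v$, and an arbitrary value from $\dom(y) \cap S$ to every remaining variable $y$, is legal and realises $\alpha(x)=v$. Crucially, the right-hand side of this equivalence does not mention $x$, so whether a value is supported depends only on that value and not on the particular variable. Enforcing \HAC therefore amounts to computing the set $D^\star \subseteq D$ of values that appear in some solution of $\mathcal{I}$ and then deleting $D \setminus D^\star$ from every domain; a routine check (again via the same instantiation construction) confirms that the resulting constraint is indeed \HAC.

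Next, for each $v \in D$, I would construct the instance $\mathcal{I}_v$ by \emph{selecting} $v$, using the operation already defined for the kernelization: remove every variable whose domain contains $v$, remove $v$ from $D$, and decrement $N$. Then $v \in D^\star$ \myIff $\mathcal{I}_v$ is consistent. I also need to verify that $\mathcal{I}_v$ has at most $k$ holes so that Theorem~\ref{thm:FPTalgo} applies with the same parameter: deleting variables cannot create holes in the surviving domains, and removing $v$ from $D$ can only destroy holes, since any hole in $\mathcal{I}_v$ is witnessed by a value of $D \setminus \{v\}$, which remains a witness in $\mathcal{I}$. Constructing $\mathcal{I}_v$ takes $O(|\mathcal{I}|)$ time, and Theorem~\ref{thm:FPTalgo} decides its consistency in $O(\phi^k k^2 + |\mathcal{I}|)$ time; summing over the $|D|$ candidate values yields the claimed bound $O(\phi^k k^2 |D| + |\mathcal{I}| |D|)$.

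The one conceptually non-routine step is the opening observation. Without the independence of the support condition from the variable~$x$, the generic reduction of \HAC-enforcement to consistency (Hoeve and Katriel) would require $|X| \cdot |D|$ consistency checks, which would blow up the first term of the bound by a factor of $|X|$. Everything else is routine: the selection operation and its effect on the number of holes have already been analysed for the kernelization, and the final restriction of the domains to $D^\star$ is a single pass over $\mathcal{I}$ and so does not dominate the running time.
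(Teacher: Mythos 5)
Your proposal is correct and follows essentially the same route as the paper: the same key observation that support for a value in an \AMNV constraint is independent of the variable (so \HAC reduces to computing the set of values occurring in some solution), followed by $|D|$ invocations of the consistency algorithm of Theorem~\ref{thm:FPTalgo} on the instances obtained by \emph{selecting} each value. The paper adds only a minor shortcut (first testing consistency of $(X,D,\dom,N-1)$ to skip the per-value checks when nothing can be filtered), which does not affect the asymptotic bound, and your explicit verification that selection does not increase the number of holes is a welcome detail the paper states only for filtering.
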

\begin{proof}
We first remark that if a value $v$ can be filtered from the domain of a variable $x$ (i.e., $v$ has no support for $x$), then $v$ can be filtered from the domain of all variables, as for any legal instantiation $\alpha$ with $\alpha(x')=v$, $x'\in X\setminus \set{x}$, the assignment obtained from $\alpha$ by setting $\alpha(x):=v$ is a legal instantiation as well. % as the number of distinct values does not increase. 
Also, filtering the value $v$ creates no new holes as the set of values can be set to $D\setminus \set{v}$.

Now we enforce \HAC by applying $O(|D|)$ times
%$\min(n,|D|)$ times.
the algorithm from Theorem \ref{thm:FPTalgo}.
Assume the instance $\mathcal{I}=(X,D,\dom,N)$ is consistent. If $(X,D,\dom,N-1)$ is consistent, then no value can be filtered.
Otherwise, check, for each $v\in D$, whether the instance obtained from selecting $v$ is consistent and filter $v$ if this is not the case.
%If $n<|D|$, then check, for some $x\in X$, whether $(X\setminus \set{x},D,\dom|_{X\setminus \set{x}},N-1)$ is consistent. If so, no value can be filtered from $\dom(x)$; mark all values in $\dom(x)$ and recursively filter the instance $(X\setminus \set{x},D,\dom|_{X\setminus \set{x}},N)$. If not, then filter all unmarked values that occur only in the domain of $x$; recursively filter the ... does not seem to work
\end{proof}

Using the same reasoning as in Corollary \ref{cor:nvalue}, we now obtain the following corollary for \textsc{NValue}.

\begin{corollary}
\HAC for an \textsc{NValue} constraint can be enforced in time $O((\phi^k \cdot k^2 +(|X|+|D|)^{\omega/2})\cdot |D|)$, where $k$ is the number of holes in the domains of the input instance $\mathcal{I}=(X,D,\dom,N)$, $\phi=\frac{1+\sqrt{5}}{2}<1.6181$, and $\omega<2.3729$ is the exponent of matrix multiplication.
\end{corollary}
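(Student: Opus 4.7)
The plan is to mirror Corollary \ref{cor:HAC}, substituting each \AMNV consistency check by an \NV consistency check handled via the reduction of Corollary \ref{cor:nvalue}.

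First I would argue that enforcing \HAC for an \NV constraint reduces to $O(|D|)$ consistency checks on auxiliary \NV instances: analogously to Corollary \ref{cor:HAC}, for each candidate value $v$ one decides whether the constraint is still consistent when $v$ is required at some variable, and filters $v$ if it is not. Filtering unsupported values from $\dom(N)$ fits in the same scheme through the decomposition of \NV into \AMNV and \ALNV of \cite{BessiereEtal06}: \HAC for \ALNV is polynomial-time and is therefore absorbed into the bound, while the \AMNV side is exactly what our kernel-plus-branching machinery accelerates.

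Each single consistency check on \NV would then be carried out as in Corollary \ref{cor:nvalue}: compute $N^+$ via a maximum bipartite matching between $X$ and $D$ in $O((|X|+|D|)^{\omega/2})$ time using \cite{MuchaS04,VassilevskaWilliams12}, restrict $\dom(N)$ to $\set{v \in \dom(N) : v \le N^+}$, and invoke Theorem \ref{thm:FPTalgo} on the resulting consistency-equivalent \AMNV instance with $N = \max(\dom(N))$ in $O(\phi^k k^2 + |\mathcal{I}|)$ time. Since $|\mathcal{I}| = O(|X|+|D|+k)$ is dominated by $(|X|+|D|)^{\omega/2}$, each consistency check runs in $O(\phi^k k^2 + (|X|+|D|)^{\omega/2})$; multiplying by the $O(|D|)$ checks yields the claimed bound.

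The main point requiring care, rather than a genuine obstacle, is filtering on $\dom(N)$ itself. Using a standard argument that the set of achievable distinct-value counts forms a contiguous interval $[N^-,N^+]$, the filtering reduces to computing $N^-$ and $N^+$, both of which fit into the same matching-based time budget and so do not inflate the bound.
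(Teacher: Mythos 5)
Your proposal follows essentially the same route the paper intends: the paper's own justification is the single sentence ``using the same reasoning as in Corollary~\ref{cor:nvalue}'', i.e., run the $O(|D|)$ consistency checks of Corollary~\ref{cor:HAC}, each implemented as one matching computation to get $N^+$ followed by the branching algorithm of Theorem~\ref{thm:FPTalgo}, which is exactly your accounting and yields the stated bound.

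One caveat on your final paragraph: the interval property $[N^-,N^+]$ for the achievable numbers of distinct values is correct and is the right tool for filtering $\dom(N)$, but $N^-$ does \emph{not} fit into a ``matching-based time budget.'' Computing $N^-$ is a minimum hitting set over the variable domains and is $\NP$-hard in general (it is precisely the optimization version of \AMNV), so no polynomial matching computation can deliver it. It does, however, fit into the \emph{overall} budget: one determines $N^-$ (or, equivalently, tests each $n\in\dom(N)$ for membership in $[N^-,N^+]$) with at most $O(|D|)$ invocations of the kernel-plus-branching consistency check at cost $O(\phi^k k^2)$ each, alongside a single matching for $N^+$. With that substitution your argument goes through and matches the claimed running time.
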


\section{Bayesian Reasoning} 

\emph{Bayesian networks} (BNs) have emerged as a general
representation scheme for uncertain knowledge \cite{Pearl10}.  A BN
models a set of stochastic variables, the independencies among these
variables, and a joint probability distribution over these variables.
For simplicity we consider the important special case where the
stochastic variables are Boolean. The variables and independencies are
modeled in the BN by a directed acyclic graph $G = (V,A)$, the joint
probability distribution is given by a table $T_v$ for each node $v
\in V$ which defines a probability $T_{v|U}$ for each possible
instantiation $U=(d_1,\dots,d_s)\in \{\True,\False\}^s$ of the parents
$v_1,\dots,v_s$ of $v$ in $G$. The probability $\Prob(U)$ of a
complete instantiation $U$ of the variables of~$G$ is given by the
product of $T_{v|U}$ over all variables~$v$.  We consider the
problem
\textsc{\bfseries Positive-BN-Inference} which takes as input a
Boolean BN $(G,T)$ and a variable $v$, and asks whether
$\Prob(v=\True) > 0$.  
The problem is $\NP$-complete \cite{Cooper90}
and moves from NP to \#P if we ask to compute $\Prob(v=\text{true})$
\cite{Roth96}.  The problem can be solved in polynomial time if the BN
is \emph{singly connected}, i.e, if there is at most one undirected
path between any two variables \cite{Pearl88}. It is natural to
parametrize the problem by the number of variables one must delete in
order to make the BN singly connected (the deleted variables form a
\emph{loop cutset}). This yields the following parameterized problem.
\begin{quote}
  $\PR{Positive-BN-Inference}{loop cutset size}$

  \emph{Instance:} A Boolean BN $(G,T)$, a variable $v$, and a loop
  cutset $S$ of size $k$.

  \emph{Parameter:} The integer $k$.

  \emph{Question:} Is $\Prob(v=\True) > 0$?  

\end{quote}
Again we also state a related recognition problem.
\begin{quote}
  $\RPR{Positive-BN-Inference}{loop cutset size}$

  \emph{Instance:} A Boolean BN $(G,T)$ and an integer $k\geq 0$.

  \emph{Parameter:} The integer $k$.

  \emph{Question:} Does $(G,T)$ has a loop cutset of size $\leq k$?.
\end{quote}

 Now,
$\PR{Positive-BN-Inference}{loop cutset size}$ is easily seen to be
fixed-parameter tractable as we can determine whether
$\Prob(v=\True)>0$ by taking the maximum of $\Prob(v=\True \mid U)$
over all $2^k$ possible instantiations of the $k$ cutset variables,
each of which requires processing of a singly connected network.
However, although fixed-parameter tractable, it is unlikely that the
problem admits a polynomial kernel.
\begin{theorem}
  $\PR{Positive-BN-Inference}{loop cutset size}$ does not admit a polynomial
  kernel unless $\NP \subseteq \coNP/\text{\normalfont poly}$.
\end{theorem}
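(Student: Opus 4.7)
The plan is to apply Theorem~\ref{the:trans} by exhibiting a polynomial parameter transformation from $\PR{SAT}{vars}$, which, as already used in the proofs of Theorems~\ref{the:backdoor} and the \EGC lower bound, is known to have no polynomial kernel unless $\NP \subseteq \coNP/\text{\normalfont poly}$. The unparameterized version of $\PR{Positive-BN-Inference}{loop cutset size}$ is $\NP$\hy complete (membership is clear, hardness follows from Cooper's result together with the trivial observation that one can always attach the set of all stochastic variables as a loop cutset), so the hypotheses of Theorem~\ref{the:trans} are met once a transformation is constructed.

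Given a CNF instance $(F,n)$ with $F = C_1\wedge\dots\wedge C_m$ on variables $x_1,\dots,x_n$, I would build a Boolean Bayesian network $(G,T)$ as follows. First, introduce one root node $X_i$ for each variable $x_i$, with prior $T_{X_i} = (\tfrac12,\tfrac12)$; this guarantees that every complete assignment to the $X_i$'s has strictly positive probability. Next, for every clause $C_j$, create a node $D_j$ whose parents are exactly the nodes $X_i$ corresponding to variables occurring in $C_j$, and let $T_{D_j|U}$ be $1$ if the assignment $U$ satisfies $C_j$ and $0$ otherwise. Finally, chain the $D_j$'s by deterministic binary AND gates: let $A_2$ have parents $D_1,D_2$ with $T_{A_2|U}$ computing the AND, $A_3$ have parents $A_2,D_3$, and so on up to $A_m$; take $v := A_m$. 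The construction is clearly polynomial time.

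For correctness, observe that the clause and AND nodes are deterministic functions of the root assignment, so $\Prob(v=\True) = 2^{-n}\cdot|\SB \sigma \SM \sigma\text{ satisfies }F\SE|$, which is positive iff $F$ is satisfiable. For the cutset, let $S := \{X_1,\dots,X_n\}$. Removing $S$ from the underlying undirected graph leaves the $D_j$'s attached as leaves to the AND-chain $A_2,\dots,A_m$, which is a tree; hence $S$ is a loop cutset of size $k = n$, which equals the parameter of the source instance. Thus $(F,n)\mapsto\big((G,T,v,S),n\big)$ is a polynomial parameter transformation from $\PR{SAT}{vars}$ to $\PR{Positive-BN-Inference}{loop cutset size}$, and Theorem~\ref{the:trans} yields the desired lower bound.

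The only subtlety — and what I would regard as the main thing to verify carefully — is that the AND-chain is wired so that after deleting $S$ the remaining moralized/undirected graph really is singly connected: each $D_j$ is connected into the chain at exactly one $A_k$, and the $A_k$'s themselves form a path, so no two undirected paths exist between any pair of surviving nodes. Once this is checked, the rest is immediate from the construction and from previously established facts about $\PR{SAT}{vars}$.
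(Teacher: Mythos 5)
Your overall strategy is the same as the paper's: a polynomial parameter transformation from $\PR{SAT}{vars}$ via a Cooper-style encoding, followed by Theorem~\ref{the:trans}. However, there is a genuine gap at the step you dismiss with ``the construction is clearly polynomial time.'' You make each clause node $D_j$ a child of \emph{all} variable nodes occurring in $C_j$. The problem $\PR{SAT}{vars}$ concerns arbitrary CNF formulas, not 3CNF (indeed, $\PR{3SAT}{vars}$ has a trivial $O(n^3)$ kernel, so the source problem cannot be restricted to bounded clause width). A clause may therefore contain $\Theta(n)$ literals, and under the paper's representation of a Bayesian network---a table $T_{D_j}$ with one entry $T_{D_j|U}$ for \emph{each} instantiation $U\in\{\True,\False\}^{s}$ of the $s$ parents---your clause node would require a table with $2^{\Theta(n)}$ entries. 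The reduction is then not computable in polynomial time, so it is not a polynomial parameter transformation and Theorem~\ref{the:trans} does not apply.

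This is exactly the obstacle the paper's proof is built around: it splits every clause node with more than a constant number of parents into a chain of clause nodes, each with bounded fan-in (hence a constant-size table), with the last node of the chain feeding into the conjunction chain, and then re-verifies that the input nodes $\{u_1,\dots,u_n\}$ still form a loop cutset of the modified network. Your identified ``main subtlety'' (that the graph after deleting the variable nodes is singly connected) is correct but is not where the difficulty lies; the missing idea is the bounded-fan-in decomposition of wide clauses. Once you add that gadget and check that the cutset bound is preserved, your argument matches the paper's.
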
 
\begin{proof}  We give a polynomial parameter transformation
  from $\PR{SAT}{vars}$ and apply Theorem~\ref{the:trans}.  The
  reduction is based on the reduction from 3SAT given by
  Cooper~\cite{Cooper90}.   Let $F$ be a CNF formula on $n$ variables.  We
  construct a BN $(G,T)$ such that for a variable $v$ we have
  $\Prob(v=\True) > 0$ if and only if $F$ is satisfiable. Cooper uses
  \emph{input nodes} $u_i$ for representing variables of~$F$,
  \emph{clause nodes} $c_i$ for representing the clauses of $F$, and
  \emph{conjunction nodes} $d_i$ for representing the conjunction of the
  clauses. For instance, if $F$ has three clauses and four variables,
  then Cooper's reduction produces a BN $(G,T)$ where $G$ has the
  following shape:
  \begin{center}
  \begin{tikzpicture}[xscale=1.4,yscale=1.0]
    \small
    \tikzstyle{round}=[circle,draw,inner sep=1pt, minimum size=7mm]
    \draw 
    (0,0)   node[round] (u1) {$u_1$}
    (1,0)   node[round] (u2) {$u_2$}
    (2,0)   node[round] (u3) {$u_3$}
    (3,0)   node[round] (u4) {$u_4$}
    (.5,-1)   node[round] (c1) {$c_1$}
   (1.5,-1)   node[round] (c2) {$c_2$}
   (2.5,-1)   node[round] (c3) {$c_3$}
    (.5,-2)   node[round] (d1) {$d_1$}
   (1.5,-2)   node[round] (d2) {$d_2$}
   (2.5,-2)   node[round] (d3) {$d_3$}
;
\draw[->] (u1)--(c1);
\draw[->] (u2)--(c1);
\draw[->] (u3)--(c1);
\draw[->] (u1)--(c2);
\draw[->] (u2)--(c2);
\draw[->] (u3)--(c2);
\draw[->] (u2)--(c3);
\draw[->] (u3)--(c3);
\draw[->] (u4)--(c3);
\draw[->] (c1)--(d1);
\draw[->] (c2)--(d2);
\draw[->] (c3)--(d3);
\draw[->] (d1)--(d2);
\draw[->] (d2)--(d3);
 \end{tikzpicture}%
    
  \end{center}
  Clearly, the input nodes form a loop cutset of $G$.  However, in
  order to get a polynomial parameter transformation from
  $\PR{SAT}{vars}$ we must allow in $F$ that clauses contain an
  arbitrary number of literals, not just three.  If we apply Cooper's
  reduction directly, then a single clause node $c_i$ with many
  parents requires a table $T_{c_i}$ of exponential size. To overcome
  this difficulty we simply split clause nodes $c_i$ containing more
  than 3 literals into several clause nodes, as indicated below, where
  the last one feeds into a conjunction node $d_i$.
  \begin{center}
  \begin{tikzpicture}[xscale=1.4,yscale=1.0]
    \small
    \tikzstyle{round}=[circle,draw,inner sep=1pt, minimum size=7mm]
    \draw
    (0,0)   node[round] (u1) {$u_1$}
    (1,0)   node[round] (u2) {$u_2$}
    (2,0)   node[round] (u3) {$u_3$}
    (3,0)   node[round] (u4) {$u_4$}
    (.5,-.7)   node[round] (c1) {$c_1$}
    (1.5,-1.2)   node[round] (c2) {$c_1'$}
    (2.5,-1.5)   node[round] (c3) {$c_1''$}
    ;
    \draw[->] (u1)--(c1);
    \draw[->] (u2)--(c1);
    \draw[->] (c1)--(c2);
    \draw[->] (u3)--(c2);
    \draw[->] (c2)--(c3);
    \draw[->] (u4)--(c3);
  \end{tikzpicture}%
\end{center}
\sloppypar \noindent It remains to observe that the set of input nodes
  $E=\{u_1,\dots,u_n\}$ still form a loop cutset of the constructed BN,
  hence we have indeed a polynomial parameter transformation from
  $\PR{SAT}{vars}$ to $\PR{Positive-BN-Inference}{loop cutset
    size}$. The result follows by Theorem~\ref{the:trans}.

  \comment{Perhaps add more details to the proof} \comment{Serge: Yes,
    I think we need more details here. I'll do that.}

\end{proof}

Let us now turn to the recognition problem
$\RPR{Positive-BN-Inference}{loop cutset size}$.

\begin{proposition}\label{prop:kern-BN}
  $\RPR{Positive-BN-Inference}{loop cutset size}$ admits a polynomial
  kernel with $O(k^2)$ nodes.
\end{proposition}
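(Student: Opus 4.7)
The plan is to reduce the recognition problem to kernelizing \textsc{Feedback Vertex Set} on undirected graphs, for which a polynomial kernel with $O(k^2)$ vertices is known~\cite{Thomasse10} (alternatively, quartic kernels were given earlier by Burrage et~al.\ and by Bodlaender). The key observation is that a BN $(G,T)$ is singly connected precisely when its underlying undirected graph $G^u$ (obtained from $G$ by forgetting edge directions and removing duplicate edges) is a forest. Consequently, a set $S \subseteq V$ is a loop cutset of $(G,T)$ if and only if $S$ is a feedback vertex set of $G^u$. Crucially, this equivalence is independent of the CPT tables~$T$, so the recognition question depends only on the graph~$G$.

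Given an instance $((G,T),k)$ of $\RPR{Positive-BN-Inference}{loop cutset size}$, the kernelization algorithm proceeds as follows. First, construct the undirected graph $G^u$ in polynomial time. Second, invoke Thomass\'e's kernelization algorithm on the instance $(G^u,k)$ of \textsc{Feedback Vertex Set}; this outputs in polynomial time an equivalent instance $(H,k')$ with $|V(H)| = O(k^2)$ and $k'\le k$, such that $G^u$ has a feedback vertex set of size at most $k$ if and only if $H$ has a feedback vertex set of size at most $k'$. Third, convert $H$ back into a BN: fix any linear order of $V(H)$, orient each edge $\{u,v\}$ from the smaller to the larger endpoint to obtain a DAG $H'$, and equip each node with an arbitrary CPT (for instance, $T_{v|U}=1/2$ for every parent assignment~$U$). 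Output the pair $((H',T'),k')$.

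Correctness follows from the chain of equivalences: $(G,T)$ admits a loop cutset of size $\le k$ iff $G^u$ has an FVS of size $\le k$ iff (by Thomass\'e's kernel) $H$ has an FVS of size $\le k'$ iff the underlying undirected graph of the oriented DAG $H'$ has an FVS of size $\le k'$ iff $(H',T')$ has a loop cutset of size $\le k'$. The output instance has $|V(H')|=|V(H)|=O(k^2)$ nodes, and the CPTs contribute $O(1)$ bits per node since each node of $H'$ inherits its in-degree from $H$, which is at most $|V(H)|-1 = O(k^2)$; trivial tables can be described in constant space per node using a uniform rule. Hence the total size of the output is bounded by a polynomial in~$k$.

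The only mildly delicate point is ensuring that the target instance is a well-formed BN whose loop cutsets correspond exactly to the FVSs of the kernelized undirected graph. The topological-orientation step handles this: any acyclic orientation of $H$ yields a DAG whose underlying undirected graph is $H$ itself, so the notion of loop cutset on $H'$ coincides with the notion of feedback vertex set on $H$. There is no obstacle from the CPTs because the recognition problem makes no use of them. Thus the procedure is a valid polynomial-time kernelization producing a kernel with $O(k^2)$ nodes.
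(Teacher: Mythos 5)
Your proof is correct and follows essentially the same route as the paper: observe that loop cutsets are exactly the feedback vertex sets of the underlying undirected graph, apply a known quadratic-vertex kernelization for \textsc{Feedback Vertex Set}, and translate the result back into a BN with arbitrary tables. The only difference is the particular FVS kernel cited (the paper uses the one of Cao, Chen, and Liu), and your write-up spells out the re-orientation and CPT details that the paper leaves implicit.
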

\begin{proof}
  Let $((G,T),k)$ be an instance of $\RPR{Positive-BN-Inference}{loop
    cutset size}$. We note that loop cutsets of $(G,T)$ are just the
  so-called \emph{feedback vertex sets} of $G$. Hence we can apply a
  known kernelization algorithm for feedback vertex sets
  \cite{CaoChenLiu10} to $G$ and obtain a kernel $(G',k)$ with at most
  $O(k^2)$ many vertices. We translate this into an
  instance $(G',T',k')$ of $\RPR{Positive-BN\--Inference}{loop
    cutset size}$ by taking an arbitrary table $T'$. 
\end{proof}

% Finding a loop cutset of size at most $k$, if one
% exists, is \fpt \cite{ChenLiuLuOsullivanRazgon08}.

\section{Nonmonotonic Reasoning}
\label{sec:nonmonotonic}

\emph{Logic programming with negation} under the stable model semantics
is a well-studied form of nonmonotonic reasoning
\cite{GelfondLifschitz88,MarekTruszczynski99}.  A (normal) \emph{logic
  program}~$P$ is a finite set of rules $r$ of the form
\[h \longleftarrow a_1 \wedge \dots \wedge a_m \wedge \neg b_1 \wedge
\dots \wedge \neg b_n\] where $h,a_i,b_i$ are \emph{atoms}, where $h$
forms the head and the $a_i,b_i$ from the body of $r$.  We write
$H(r)=h$, $B^+(r)=\{a_1,\dots,a_m\}$, and
$B^-(r)=\{b_1,\dots,b_n\}$. Let~$I$ be a finite set of atoms.  The
\emph{GF reduct} $P^I$ of a logic program~$P$ under $I$ is the program
obtained from~$P$ by removing all rules $r$ with $B^-(r)\cap I\neq
\emptyset$, and removing from the body of each remaining rule $r'$ all
literals $\neg b$ with $b\in I$.  $I$ is a \emph{stable model} of $P$
if $I$ is a minimal model of $P^I$, i.e., if (i) for each rule $r\in
P^I$ with $B^+(r)\subseteq I$ we have $H(r)\in I$, and (ii) there is
no proper subset of $I$ with this property. The \emph{undirected
  dependency graph} $U(P)$ of $P$ is formed as follows. We take the
atoms of $P$ as vertices and add an edge $x-y$ between two atoms $x,y$
if there is a rule $r\in P$ with $H(r)=x$ and $y\in B^+(r)$, and we
add a path $x-u-y$ if $H(r)=x$ and $y\in B^-(r)$ ($u$ is a new vertex
of degree 2). The \emph{feedback width} of $P$ is the size of a
smallest set $V$ of atoms such that every cycle of $U(P)$ runs through
an atom in $V$ (such a set $V$ is called a \emph{feedback vertex set}).

A fundamental computational problems is \textbf{Stable Model Existence
  (SME)}, which asks whether a given normal logic program has a stable
model.  The problem is well-known to be
$\NP$-complete~\cite{MarekTruszczynski91}.  Gottlob
\etal~\cite{GottlobScarcelloSideri02} considered the following
parameterization of the problem and showed fixed-parameter
tractability (see \cite{FichteSzeider11} for
generalizations).

\begin{quote}
$\PR{SME}{feedback width}$

  \emph{Instance:} A logic program $P$ and feedback vertex set $V$ of
  size $k$.

  \emph{Parameter:} The integer $k$.

  \emph{Question:} Does $P$ have a stable model?

\end{quote}
Again we also state a related recognition problem.
\begin{quote}
$\RPR{SME}{feedback width}$

  \emph{Instance:} A logic program $P$ and an integer $k \geq 0$.

  \emph{Parameter:} The integer $k$.

  \emph{Question:} Does $P$ have a a feedback vertex set of size at
  most $k$? 
\end{quote}
We show that the result of Gottlob
\etal~\cite{GottlobScarcelloSideri02} cannot be strengthened towards a
polynomial kernel.
\begin{theorem}
  $\PR{SME}{feedback width}$ does not admit a polynomial kernel unless
  $\NP \subseteq \coNP/\text{\normalfont poly}$.
\end{theorem}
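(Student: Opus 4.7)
The plan is to give a polynomial parameter transformation from $\PR{SAT}{vars}$ to $\PR{SME}{feedback width}$ and then apply Theorem~\ref{the:trans}, exactly in the style of the earlier non-kernelizability proofs in this paper. Since $\UP[\PR{SAT}{vars}]$ is $\NP$-complete and $\UP[\PR{SME}{feedback width}]$ is in $\NP$ (SME is in $\NP$, and a feedback vertex set of size $k=|P|$ always exists), showing the reduction suffices.

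The construction I would use is the standard Niemel\"a-style encoding of CNF-SAT. Given $F$ on variables $x_1,\dots,x_n$ with clauses $C_1,\dots,C_m$, introduce for each $x_i$ a complementary atom $\bar x_i$ and the two \emph{choice rules}
\[
x_i \longleftarrow \neg \bar x_i, \qquad \bar x_i \longleftarrow \neg x_i,
\]
so that stable models of this subprogram are in bijection with truth assignments to $x_1,\dots,x_n$. For each clause $C_j=\ell_{j,1}\vee\dots\vee\ell_{j,k_j}$, introduce an atom $c_j$ and the $k_j$ rules $c_j \leftarrow \lambda_{j,i}$, where $\lambda_{j,i}=x_s$ if $\ell_{j,i}=x_s$ and $\lambda_{j,i}=\bar x_s$ if $\ell_{j,i}=\neg x_s$. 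Finally, introduce a single atom $f$ and, for every clause index $j$, the integrity-style rule
\[
f \longleftarrow \neg f \wedge \neg c_j,
\]
which forces every stable model to contain each $c_j$, i.e.\ to satisfy every clause. Correctness of this reduction is standard: stable models of the resulting program $P$ correspond exactly to satisfying assignments of $F$.

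The key step is bounding the feedback width of $P$. In $U(P)$, the choice rules for $x_i$ produce the 4-cycle $x_i - u_i - \bar x_i - u'_i - x_i$; the clause rules add only the edges $c_j - x_s$ or $c_j - \bar x_s$, which are tree-like and introduce no new cycles; and the integrity rules add, for each $j$, a 2-cycle $f - u - f$ through the auxiliary vertex created by the occurrence of $\neg f$ in the body, plus pendant paths $f - u_j - c_j$. Thus $V=\{x_1,\dots,x_n,f\}$ hits every cycle of $U(P)$, so $V$ is a feedback vertex set of size $n+1$. The map $(F,n)\mapsto ((P,V), n+1)$ is clearly computable in polynomial time and the parameter grows linearly, hence this is a polynomial parameter transformation. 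Applying Theorem~\ref{the:trans}, together with the fact that $\PR{SAT}{vars}$ has no polynomial kernel unless $\NP\subseteq\coNP/\text{\normalfont poly}$, yields the desired lower bound.

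The main obstacle is precisely the feedback-width bookkeeping: a naive encoding of each clause as a constraint $f_j\leftarrow \neg f_j\wedge \dots$ produces $m$ independent self-cycles and gives feedback width $\Omega(n+m)$, which is not polynomial in the source parameter $n$. Using a single ``violation'' atom $f$ shared across all clauses, so that only one $\neg f$-occurrence and one additional cycle are created, is what keeps the feedback vertex set linear in $n$ and makes the transformation parameter-polynomial.
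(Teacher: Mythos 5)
Your overall strategy is exactly the paper's: a polynomial parameter transformation from $\PR{SAT}{vars}$ via Niemel\"a's encoding, followed by Theorem~\ref{the:trans}. The variable gadget and clause rules also match; the only structural difference is that the paper routes all clause checks through an intermediate atom $s$ (one rule $f \leftarrow \neg f \wedge \neg s$ and rules $s \leftarrow \neg c_j$), whereas you attach the rules $f \leftarrow \neg f \wedge \neg c_j$ directly. That variant is still a correct SAT reduction. However, there is a genuine gap in your feedback-width bookkeeping, which you yourself flag as the crux of the argument.

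Your claim that the clause edges ``are tree-like and introduce no new cycles'' is false, and consequently $V=\{x_1,\dots,x_n,f\}$ is not a feedback vertex set of $U(P)$. A clause atom $c_j$ is adjacent to \emph{all} the literal atoms of its clause, and these adjacencies can close cycles that avoid every $x_i$ and $f$. Concretely, take two clauses $C_1=(\neg x_1\vee\neg x_2)$ and $C_2=(\neg x_1\vee\neg x_2\vee x_3)$: the rules $c_1\leftarrow \bar x_1$, $c_1\leftarrow \bar x_2$, $c_2\leftarrow \bar x_1$, $c_2\leftarrow \bar x_2$ yield the $4$-cycle $c_1-\bar x_1-c_2-\bar x_2-c_1$ in $U(P)$, which contains no vertex of $\{x_1,\dots,x_n,f\}$. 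In general a CNF formula can force $\Omega(m)$ such cycles that are pairwise disjoint outside the $\bar x_i$'s, so your set $V$ is simply not a feedback vertex set, and the size bound $n+1$ is not salvageable by a different choice of $n+1$ vertices of this form. The repair is exactly what the paper does: put \emph{both} literal atoms of every variable into the cutset, i.e.\ take $V=\{x_i,\bar x_i : 1\le i\le n\}$ (augmented with $f$ to kill the $f-u-f$ double edge arising from $\neg f$ in the body). Then every clause atom becomes a vertex all of whose remaining neighbours lie on the pendant path towards $f$, so no cycles survive. This gives feedback width at most $2n+1$, still linear in the source parameter, and the polynomial parameter transformation — and hence the theorem — goes through unchanged.
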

\begin{proof} We give a polynomial parameter transformation from
  $\PR{SAT}{vars}$ to $\textbf{SME}($feedback width$)$ using a
  construction of Niemel{\"a}~\cite{Niemela99}. Given a CNF formula
  $F$ on $n$ variables, we construct a logic program $P$ as
  follows. For each variable $x$ of $F$ we take two atoms $x$ and
  $\hat x$ and include the rules $(\hat x \leftarrow \neg x)$ and $(x
  \leftarrow \neg \hat x)$; for each clause $C$ of $F$ we take an atom
  $c$ and include for each positive literal $a$ of $C$ the rule $(c
  \leftarrow a)$, and for each negative literal $\neg a$ of $C$ the
  rule $(c \leftarrow \hat a)$; finally, we take two atoms $s$ and $f$
  and include the rule $(f \leftarrow \neg f \wedge \neg s)$ and for
  each clause $C$ of $F$ the rule 
  % fixed typo in next line
  $(s \leftarrow \neg c)$.  The formula $F$ is satisfiable if and only
  if $P$ has a stable model~\cite{Niemela99}. It remains to observe
  that each cycle of $U(P)$ runs through a vertex in $V=\SB x,\hat x
  \SM x \in \text{vars}(F)\SE$, hence the feedback width of $P$ is at
  most $2n$.  Hence we have a polynomial parameter transformation from
  $\PR{SAT}{vars}$ to $\PR{SME}{feedback width}$. The result follows
  by Theorem~\ref{the:trans}.
\end{proof}

Using a similar approach as for Proposition~\ref{prop:kern-BN} we can
establish the following result.

\begin{proposition}
  $\RPR{SME}{feedback width}$ admits a polynomial kernel with $O(k^2)$
  atoms.
\end{proposition}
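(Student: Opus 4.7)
My approach mirrors the proof of Proposition~\ref{prop:kern-BN}: reduce the problem to standard undirected Feedback Vertex Set on the atoms, apply a known $O(k^2)$-vertex FVS kernel, and translate the result back into a logic program. The twist is that the feedback vertex set must consist solely of atoms, whereas $U(P)$ also contains the auxiliary degree-$2$ vertices $u$ introduced for negative body literals, which are not atoms and therefore cannot be used to hit cycles.

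To handle this, I would first build an auxiliary multigraph $G$ on the atoms of $P$: keep every positive edge of $U(P)$ unchanged, and for every subdivided path with middle vertex $u$ arising from a negative literal suppress $u$ into an edge between its two atom-neighbors, possibly parallel to an existing edge. Cycles of $U(P)$ that cannot be hit without using $u$-vertices correspond bijectively to cycles in $G$; in particular a pair of parallel edges between atoms $x$ and $y$ in $G$ encodes the $3$-cycle through $u$ in $U(P)$, and conversely any atom-set hits every cycle of $U(P)$ if and only if it is a feedback vertex set of $G$ viewed as a multigraph. Invoking the $O(k^2)$-vertex kernelization for multigraph FVS from \cite{CaoChenLiu10} then yields an equivalent instance $(G',k')$ with $|V(G')|=O(k^2)$ and $k'\le k$.

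Finally, I would translate $G'$ back into a logic program $P'$ with atom set $V(G')$: realize each simple edge $xy$ by the rule $x \leftarrow y$; realize each parallel pair between $x$ and $y$ additionally by the rule $x \leftarrow \neg y$, which places both a direct edge $xy$ and a length-$2$ path via a fresh vertex $u'$ into $U(P')$, reproducing the same $2$-cycle signature on the atoms. Multiplicities larger than $2$ can be normalized down to $2$ without affecting the FVS structure. The resulting pair $(P',k')$ is then a kernel with $O(k^2)$ atoms. The only real technical nuisance is checking this round-trip correspondence tightly: that suppression faithfully captures the atom-restricted feedback vertex sets of $U(P)$ via multigraph cycles, and that the reconstruction introduces no spurious cycles through the freshly inserted $u'$-vertices.
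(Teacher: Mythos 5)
Your proposal is correct and follows essentially the same route as the paper, which merely states that the result is obtained ``using a similar approach as for Proposition~2'' (reduce to feedback vertex set, apply the known $O(k^2)$-vertex kernelization, translate back) without supplying details. The one wrinkle you identify and resolve---suppressing the auxiliary degree-$2$ vertices of $U(P)$ into possibly parallel edges so that atom-restricted feedback vertex sets of $U(P)$ correspond exactly to multigraph feedback vertex sets, and then reconstructing those multiplicities via rules $x \leftarrow y$ and $x \leftarrow \neg y$---is precisely the detail the paper leaves implicit, and your treatment of it is sound.
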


\section{Conclusion}
\label{sec:conclusion}

We have provided the first theoretical evaluation of the guarantees and
limits of polynomial-time preprocessing for hard AI problems. In
particular we have established super-polynomial kernel lower bounds
for many problems, providing firm limitations for the power of
polynomial-time preprocessing for these problems.  On the positive
side, we have developed an efficient linear-time kernelization
algorithm for the consistency problem for \AMNV constraints, and have
shown how it can be used to speed up the complete propagation of \NV
and related constraints.

Subsequent to our work, Fellows
\etal~\cite{FellowsPfandlerRosamondRummele12} investigated the
parameterized complexity and kernelization for various
parameterizations of Abductive Reasoning.  Their kernelization results
were mostly negative, showing that many parameterizations for the
Abduction problem have no polynomial kernels unless $\NP \subseteq
\coNP/\text{\normalfont poly}$. Similarly negative are the
kernelization results of B\"{a}ckstr\"{o}m
\etal~\cite{BackstromJonssonOrdyniakSzeider13} for planning problems,
parameterized by the length of the plan.

%On the negative side, we have established a theoretical
%result which indicates that \EGC constraints do not admit
%polynomial kernels. 

We conclude from these results that in contrast to many optimization
problems (see Section 1), typical AI problems do not admit polynomial
kernels.  Our results suggest the consideration of alternative
approaches.  For example, it might still be possible that some of the
considered problems admit polynomially sized Turing kernels, i.e., a
polynomial-time preprocessing to a Boolean combination of a polynomial
number of polynomial kernels. In the area of optimization, parameterized
problems are known that do not admit polynomial kernels but admit
polynomial Turing kernels~\cite{FernauEal09}.  This suggests a
theoretical and empirical study of Turing kernels for the AI problems
considered.

\appendix

\section{Appendix: Proof of Theorem~\ref{thm:kernel}}
\label{app:kernel}

In this appendix, we prove Theorem~\ref{thm:kernel} by proving the correctness of the algorithm, upper bounding the size of the kernel, and analyzing its running time.

Let $\mathcal{I}'=(X',D',\dom',N')$ be the instance resulting from applying one operation of the kernelization algorithm to an instance $\mathcal{I}=(X,D,\dom,N)$. An operation is an instruction which modifies the instance: \RedIncl, \RedDom, \RedUnit, and \textbf{merge}. %; all operations of the kernelization algorithm are typeset in boldface.
We show that there exists a solution $S$ for $\mathcal{I}$ \myIff there exists a solution $S'$ for $\mathcal{I'}$. A solution is \emph{nice} if each of its elements is the right endpoint of some interval. Clearly, for every solution, a nice solution of the same size can be obtained by shifting each value to the next right endpoint of an interval. Thus, when we construct $S'$ from $S$ (or vice-versa), we may assume that $S$ is nice.

Reduction Rule \RedIncl is sound because a solution for $\mathcal{I}$ is a solution for $\mathcal{I'}$ and vice-versa, because any solution $\mathcal{I}'$ contains a value $v$ of $I\subseteq I'$, as $I$ is required.
Reduction Rule \RedDom is correct because if $v'\in S$, then $S':=(S\setminus \set{v'})\cup \set{v}$ is a solution for $\mathcal{I}'$ and for $\mathcal{I}$.
Reduction Rule \RedUnit is obviously correct ($S=S'\cup \dom(x)$).

%For Reduction Rule \RedFirst, observe that $X$ contains some value of $I$. As $I$ is the first interval in the ordering by increasing right endpoint, every interval that intersects $I$, also intersects $\rep(I)$. Thus, the greedy choice of assigning $\rep(I)$ to every variable whose domain contains $\rep(I)$ is optimal. The algorithm thus removes all intervals that intersect $\rep(I)$ and their friends. This creates variables with empty domains, which are later removed by Reduction Rule \RedEmptyDomain.

After having applied these $3$ reduction rules, observe that the first interval is optional and contains only one value.
Suppose the algorithm has started scanning intervals.
By construction, the following properties apply to $\mathcal{I}'$.

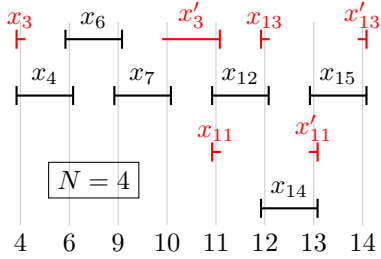
\begin{figure} \centering
\begin{tikzpicture}[xscale=0.65,yscale=0.75]
  \tikzset{ivl/.style={thick},
  var/.style={midway,above=0.5pt}}
\tikzset{ivlopt/.style={thick,red}}

  \draw (7,-0.6) node {$4$};
  \draw (8,-0.6) node {$6$};
  \foreach \x in {9,...,14} \draw (\x,-0.6) node {$\x$};
  \foreach \x in {7,...,14} \draw[thin,black!20] (\x,-0.3) -- (\x,3.3);
  
  \draw (8.5,0.5) node[rectangle,draw] {$N=4$};
 
%  \draw[ivl,|-|] (0.9,3)-- node[var] {$x_1$} (2.1,3);
  \draw[ivlopt,|-] (6.9,3)-- node[var] {$x_3$} (7.1,3);
  \draw[ivl,|-|] (7.9,3)-- node[var] {$x_6$} (9.1,3);
  \draw[ivlopt,-|] (9.9,3)-- node[var] {$x_3'$} (11.1,3);
  \draw[ivlopt,|-] (11.9,3)-- node[var] {$x_{13}$} (12.1,3);
  \draw[ivlopt,-|] (13.9,3)-- node[var] {$x_{13}'$} (14.1,3);
%  \draw[ivlopt,|-] (1.9,2)-- node[var] {$x_2$} (3.1,2);
  \draw[ivl,|-|] (6.9,2)-- node[var] {$x_4$} (8.1,2);
  \draw[ivl,|-|] (8.9,2)-- node[var] {$x_7$} (10.1,2);
%  \draw[ivlopt,-|] (9.9,2)-- node[var] {$x_2'$} (10.1,2);
  \draw[ivl,|-|] (10.9,2)-- node[var] {$x_{12}$} (12.1,2);
  \draw[ivl,|-|] (12.9,2)-- node[var] {$x_{15}$} (14.1,2);
%  \draw[ivl,|-|] (5.9,1)-- node[var] {$x_8$} (8.1,1);
%  \draw[ivl,|-|] (8.9,1)-- node[var] {$x_{10}$} (10.1,1);
  \draw[ivlopt,|-] (10.9,1)-- node[var] {$x_{11}$} (11.1,1);
  \draw[ivlopt,-|] (12.9,1)-- node[var] {$x_{11}'$} (13.1,1);
%  \draw[ivlopt,|-] (4.9,0)-- node[var] {$x_5$} (6.1,0);
%  \draw[ivlopt,-|] (7.9,0)-- node[var] {$x_5'$} (10.1,0);
  \draw[ivl,|-|] (11.9,0)-- node[var] {$x_{14}$} (13.1,0);
 \end{tikzpicture}
 \caption{\label{fig:kernel} Kernelized instance.}
\end{figure}

\begin{property}\label{obs:follower-not-intersect-leader}
 A follower does not intersect any of its leaders.
\end{property}

\begin{property}\label{obs:followers-no-intersect}
 If $I,I'$ are two distinct followers of the same leader, then $I$ and $I'$ do not intersect.
\end{property}

%\noindent
%The following claim implies the correctness of the \textbf{remove} operation.
%
%\begin{boldclaim}
% Let $A$ denote the set of active intervals when a new interval $I$ is scanned. Suppose the previously scanned interval was required. Any nice solution for $\mathcal{I}$ contains the right endpoint of some interval of $A$.
%\end{boldclaim}
%\begin{proof}
%Let $L_p$ denote the previously scanned interval that was not removed by the algorithm. The interval $L_p$ is required, as every item of the scanning phase of the algorithm ends by scanning a required interval. Thus, any nice solution intersects $I_p$ in the right endpoint of some interval of $A$.
%\end{proof}

\noindent
Before proving the correctness of the \textbf{merge} operation, let us first show that the subset of leaders of a follower is not empty.
\begin{boldclaim}\label{cl:atleastoneleader}
 Every interval that has been scanned is either a leader or a follower of at least one leader.
\end{boldclaim}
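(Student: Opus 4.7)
The plan is to prove the claim by induction on the scanning order. The statement is immediate for any interval the algorithm \emph{declares} to be a leader: the intervals in the initial prefix up to and including the first required interval, and any interval scanned inside an ``optional chunk'' (an optional interval plus everything up to and including the next required one). The only non-trivial case is a required interval $I$ processed in isolation, which the algorithm assigns as a follower of \emph{all} popular leaders $L$ satisfying (i) $L\cap I=\emptyset$ and (ii) no follower of $L$ intersects $I$. I have to show this set is nonempty.

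First I would identify $I_p$, the interval scanned immediately before $I$. A required $I$ is processed in isolation only when no optional chunk is currently open, which forces $I_p$ itself to be a required interval (otherwise the chunk would still be scanning), and by the sort order $\rep(I_p)\le\rep(I)$. I then split on whether $\rep(I_p)<\lep(I)$ or not. In the easy subcase $\rep(I_p)<\lep(I)$, the sort order guarantees that every previously scanned interval lies strictly to the left of $I$; the leader associated with $I_p$ (either $I_p$ itself if it is a leader, or the leader of which $I_p$ is a follower) is popular via the activity of $I_p$ and trivially satisfies both (i) and (ii), so the set is nonempty.

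In the hard subcase $I_p\cap I\ne\emptyset$, I would exploit the hypothesis that \RedIncl is not applicable on the reduced input: because $I_p$ is required, $I$ cannot properly contain $I_p$, and combined with $\rep(I_p)\le\rep(I)$ this yields $\lep(I_p)<\lep(I)$. If $I_p$ is a leader in the same chunk, I would trace back along the chunk to produce an older leader still popular (through an active follower chain rooted at $I_p$) whose right endpoint lies below $\lep(I)$. If $I_p$ is a follower of some leader $L$, Property \ref{obs:follower-not-intersect-leader} gives $\rep(L)<\lep(I_p)<\lep(I)$, so $L\cap I=\emptyset$, and Property \ref{obs:followers-no-intersect} forces every follower of $L$ other than $I_p$ to lie entirely to the left of $I_p$, hence to the left of $\lep(I)$; the only obstruction is that $I_p$ itself is a follower of $L$ intersecting $I$. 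To resolve this, I would climb one more step: the creation of $I_p$ as a follower of $L$ guaranteed by the induction hypothesis that some other popular leader $L'$ was available at that moment with $\rep(L')<\lep(I_p)$ and with all its followers left of $\lep(I_p)$, and I would verify that $L'$ remains popular when $I$ is scanned (through an active follower in the chain from $L'$ to $I_p$) and continues to satisfy both (i) and (ii) for $I$ because $\lep(I_p)<\lep(I)$.

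The main obstacle is formalising the ``climb-further'' step cleanly, because popular leaders and followers interact through merges, and a follower can be shared by several leaders. I would therefore maintain, by a joint induction, the stronger invariant that after every scan step there exists a popular leader whose right endpoint and every follower's right endpoint are strictly less than $\lep$ of the next required interval to be processed. Verifying that this invariant is preserved by the three operations of the scanning phase (assigning $I$ as a follower, opening a new chunk at an optional interval, and \textbf{merge}) is where the bulk of the technical work lies; Property \ref{obs:followers-no-intersect} is the key fact that keeps the \textbf{merge} operation from destroying the invariant, since siblings are pairwise disjoint and the merged interval stays within the convex hull of the two disjoint followers being combined.
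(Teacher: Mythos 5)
Your induction scheme and your identification of the hard case (a required interval $I$ whose scan-predecessor $I_p$ is also required and overlaps $I$) coincide with the paper's, as does the use of \RedIncl to get $\lep(I_p)<\lep(I)$. The gap is in the ``climb one more step'' that is supposed to settle that hard case, and it is not a formalisation issue: the step is circular. Any popular leader $L'$ that, at the moment $I_p$ was scanned, had $\rep(L')<\lep(I_p)$ and all followers strictly to the left of $\lep(I_p)$ is exactly a popular leader disjoint from $I_p$ with no follower meeting $I_p$ --- so by the algorithm's rule $I_p$ \emph{became a follower of $L'$} at that very step. When $I$ is scanned, $L'$ therefore has the follower $I_p$, which intersects $I$, and condition (ii) fails for $L'$. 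Every leader clean enough to have served as your $L'$ is contaminated by $I_p$ before $I$ arrives; the same objection shows that your strengthened invariant cannot be restored after the step that processes $I_p$ by climbing along the follower chain. The witness you need must be an interval (or the leader of one) that was \emph{blocked} from absorbing $I_p$ --- because it or one of its followers touches $I_p$ --- and yet ends strictly before $\lep(I)$; such an object must sit in the window $[\lep(I_p),\lep(I))$, and nothing in your argument produces one.

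The paper produces it from a consequence of \RedDom that your proposal never invokes: once \RedDom has been applied exhaustively, every value of $D$ is the right endpoint of some interval (otherwise its predecessor $v$ would satisfy $\ivl(v')\subseteq\ivl(v)$ and $v'$ would have been discarded). Applied to the value $\lep(I_p)$, this yields an interval $I_\ell$ with $\rep(I_\ell)=\lep(I_p)$. This $I_\ell$ touches $I_p$, so it is active and hence it, or its leader, is popular, and by Properties~\ref{obs:follower-not-intersect-leader} and~\ref{obs:followers-no-intersect} it was never in a position to absorb $I_p$ as a sibling follower; yet $\rep(I_\ell)=\lep(I_p)<\lep(I)$ keeps it, and (via \RedIncl and Property~\ref{obs:followers-no-intersect}) all followers of the relevant leader, disjoint from $I$. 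That single observation is the missing idea; the rest of your outline cannot substitute for it.
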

\begin{proof}
First, note that \RedDom ensures that each domain value in $D$ is the left endpoint of some interval and the right endpoint of some interval.
We show that when an interval $I$ is scanned it either becomes a leader or a follower of at least one leader.
By induction, assume this is the case for all previously scanned intervals.
Denote by $I_p$ the interval that was scanned prior to $I$.
If $I_p$ or $I$ is optional, then $I$ becomes a leader.
Suppose $I$ and $I_p$ are required.
We have that $\lep(I)>\lep(I_p)$, otherwise $I$ would have been removed by \RedIncl.
By Rule \RedDom, there is some interval $I_\ell$ with $\rep(I_\ell)=\lep(I_p)$.
If $I_\ell$ is a leader, $I$ becomes a follower of $I_\ell$; otherwise $I$ becomes a follower of $I_\ell$'s leader.
\end{proof}

\noindent
We will now prove the correctness of the \textbf{merge} operation.
Recall that $\mathcal{I}'$ is an instance obtained from $\mathcal{I}$ by one application of the \textbf{merge} operation.
Let $I$ denote the interval that is scanned when the \textbf{merge} operation is applied.
At this computation step, each popular leader has at least
two followers and the algorithm merges the last two followers of each
popular leader and decrements $N$ by one.
Let $F_2$ denote the set of all  intervals that are the second-last follower of a popular leader, and $F_1$ the set of all intervals that are the last follower of a popular leader before merging. Let $M$ denote the set of merged intervals. Clearly, every interval of $F_1\cup F_2\cup M$ is required as all followers are required.

\begin{lemma}\label{lem:lastfollower-commonvalue}
 Every interval in $F_1$ intersects $\lep(I)$.
\end{lemma}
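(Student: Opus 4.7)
The plan is to proceed by case analysis on whether $F=I$, and to use the exhaustive application of Rule \RedIncl before scanning (which guarantees that no required original interval is contained in any other interval of the instance) as the main technical tool. When $F=I$ the statement is immediate since $\lep(I)\in I = F$. Otherwise $F\ne I$; and because any freshly added follower is necessarily the last one (as $I$ has the largest $\rep$ among all scanned intervals), $F\ne I$ forces $I$ not to have become a follower of $L$. By the scanning rule this means that either $L$ intersects $I$ or some follower of $L$ intersects $I$.

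I would first rule out $L\cap I\ne\emptyset$. Since the merge has been triggered, $L$ has at least two followers; pick any $F'$. By Property~\ref{obs:follower-not-intersect-leader} we have $\lep(F') > \rep(L)$, and from $L\cap I\ne\emptyset$ we get $\rep(L)\geq\lep(I)$, so $\lep(F')>\lep(I)$. Together with $\rep(F')\leq\rep(I)$ (every already-scanned interval has $\rep$ at most $\rep(I)$), this gives $F'\subseteq I$. But $F'$ is required (followers always are), so \RedIncl would have removed $I$ before scanning started, a contradiction. Hence some follower $F''$ of $L$ must intersect $I$, and the same \RedIncl reasoning applied to $F''$ forces $\lep(F'')\leq\lep(I)$ (otherwise $F''\subseteq I$). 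Combined with $\rep(F'')\geq\lep(I)$, this gives $\lep(I)\in F''$.

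To finish, I would show $F''=F$. If not, then by Property~\ref{obs:followers-no-intersect}, $F$ and $F''$ are disjoint; and since $F$ is the last follower we have $\rep(F)\geq\rep(F'')$, forcing $\lep(F)>\rep(F'')\geq\lep(I)$. Together with $\rep(F)\leq\rep(I)$ this yields $F\subseteq I$, again contradicting exhaustive \RedIncl. Thus $F=F''$, and therefore $\lep(I)\in F$.

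The main subtlety to watch out for is that the current interval $F$ may be the product of earlier \textbf{merge} operations, so that its endpoints $\lep(F)$ and $\rep(F)$ can come from two distinct original required intervals. This causes no real obstacle: whenever we deduce current $F\subseteq I$, the original required interval $F_a$ contributing $\lep(F)=\lep(F_a)$ was scanned no later than the original contributing $\rep(F)$, so $\rep(F_a)\leq\rep(F)\leq\rep(I)$ and $\lep(F_a)\geq\lep(I)$, giving $F_a\subseteq I$ in the original instance --- which is the form that \RedIncl actually inspects. The same fallback also legitimises the use of $\rep(F')\leq\rep(I)$ and $\rep(F'')\leq\rep(I)$ in the earlier steps.
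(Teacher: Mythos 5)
Your proof is correct and follows essentially the same route as the paper's: both combine the rule governing when $I$ becomes a follower of a popular leader (to force the last follower's right endpoint to reach at least $\lep(I)$) with the exhaustive application of \RedIncl (to force its left endpoint to be at most $\lep(I)$). Your version spells out cases the paper's two-sentence proof glosses over --- the case $F=I$, ruling out a popular leader intersecting $I$, and followers that are products of earlier \textbf{merge} operations --- but these are refinements of the same argument rather than a different one.
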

\begin{proof}
Let $I_1\in F_1$.
By construction, $\rep(I_1)\in I$, as $I$ becomes a follower of every popular leader that has no follower intersecting $I$, and no follower has a right endpoint larger than $\rep(I)$. Moreover, $\lep(I_1)\le \lep(I)$ as no follower is a strict subset of $I$ by \RedIncl and the fact that all followers are required.
\end{proof}

The correctness of the \textbf{merge} operation will follow from the next two lemmas.

\begin{lemma}
If $S$ is a nice solution for $\mathcal{I}$, then there exists a solution $S'$ for $\mathcal{I}'$ with $S'\subseteq S$.
\end{lemma}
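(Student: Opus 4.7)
The plan is to construct $S'$ by removing a single, carefully chosen value from $S$. Since $S$ is a nice solution for $\mathcal{I}$ and the scanned interval $I$ is required, the set $S \cap I$ is non-empty. I will choose $v^* \in S \cap I$, with a preferred choice of $v^* = \lep(I)$, admissible by a preparatory niceness-preserving swap inside $S$ that uses the fact that after \RedDom every value in $D$ is a right endpoint of some interval. Then $S' := S \setminus \{v^*\}$ has $|S'| \le N - 1 = N'$, and it only remains to verify that $S'$ stabs every variable of $\mathcal{I}'$.

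The verification splits by how each variable of $\mathcal{I}'$ arose from $\mathcal{I}$. Variables whose single interval was in $F_1$ are deleted in $\mathcal{I}'$, so no coverage is needed. For a variable whose interval $J_j \in F_2$ has been replaced by the extended interval $M_j \supseteq J_j$, the original stab $t_j \in S \cap J_j$ survives in $S'$: Property~\ref{obs:followers-no-intersect} gives $\rep(J_j) < \lep(I_j)$, and the proof of Lemma~\ref{lem:lastfollower-commonvalue} gives $\lep(I_j) \le \lep(I)$, so $\rep(J_j) < \lep(I)$ and $J_j \cap I = \emptyset$. Hence $v^* \in I$ does not belong to $J_j$, and therefore $t_j \ne v^*$ and $t_j \in S' \cap J_j \subseteq S' \cap M_j$.

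The main work is the remaining case: for a variable $x$ whose domain is unchanged in $\mathcal{I}'$, show that $S \cap \dom(x)$ contains a value distinct from $v^*$. Exploiting the choice $v^* = \lep(I)$, I analyse the intervals $K \in \dom(x)$ that can contain $v^*$. $K$ cannot be unscanned (else $I \subseteq K$, contradicting \RedIncl on the required $I$); $K$ cannot be a follower of a non-popular leader (its right endpoint lies strictly below $\lep(I_p)$, and \RedIncl applied to the required $I_p$ and $I$ yields $\lep(I_p) < \lep(I) = v^*$, so $\rep(K) < v^*$, contradicting $v^* \in K$); and $K$ cannot be a follower of a popular leader (either $K \in F_1$, excluded by the case hypothesis, or $K$ and that leader's last follower $I_L \in F_1$ share the value $\lep(I)$, contradicting Property~\ref{obs:followers-no-intersect}). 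Thus $K$ must be a popular leader whose last follower $I_K$ differs from $I$ (otherwise Property~\ref{obs:follower-not-intersect-leader} forces $K \cap I = \emptyset$). Since such $K$ is untouched by the merge, $S'$ regains coverage of $x$ via another interval of $x$ when $x$ is optional, or via a second stab inside $K$ when $x$ is required. The main obstacle is exactly this last subcase, where both the validity of the preparatory swap producing $v^* = \lep(I)$ and the extraction of the additional stab must be argued from the fine structural invariants of the scan.
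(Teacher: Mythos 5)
There is a genuine gap, and it lies exactly where the real work of this lemma is: the choice of which value to delete from $S$. The paper does \emph{not} delete a value of $S\cap I$. It sets $I^-$ to be the interval of $F_2$ with the largest right endpoint, takes $t_2$ to be the largest value of $S$ in $I^-$ and $t_1$ the smallest value of $S$ in $I$, proves that $S$ contains nothing strictly between $t_2$ and $t_1$, that every interval containing $t_2$ lies in $F_1\cup F_2$, and that whenever $t_2$ lies in a second-last follower the corresponding merged interval contains $t_1$; it then returns $S'=S\setminus\{t_2\}$. Your candidate $v^*\in S\cap I$ sits at the opposite end of the merged region, and two of the coverage obligations it creates cannot be discharged. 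First, the ``preparatory niceness-preserving swap'' that is supposed to make $v^*=\lep(I)$ is a leftward move: replacing some $v\in S\cap I$ with $v>\lep(I)$ by $\lep(I)$ uncovers any interval $K$ with $\lep(I)<\lep(K)\le v\le\rep(K)$ that $S$ stabbed only at $v$; nothing in the reduction rules or scan invariants rules such a $K$ out (an unscanned interval with $\lep(K)>\lep(I)$ is not removed by \RedIncl since it need not contain all of $I$). Second, even granting $v^*=\lep(I)\in S$, the interval $I$ itself survives in $\mathcal{I}'$ whenever $I$ did not become a follower of any popular leader, and then $S\setminus\{v^*\}$ may fail to stab $I$. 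You flag the final subcase yourself as ``the main obstacle,'' which is an accurate self-assessment: the proposal is a plan, not a proof.

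Two smaller remarks. Your exclusion of popular leaders as containers of $v^*$ can actually be pushed through completely (if $K$ is a popular leader, its last follower lies in $F_1$, hence contains $\lep(I)$ by Lemma~\ref{lem:lastfollower-commonvalue}, and is disjoint from $K$ by Property~\ref{obs:follower-not-intersect-leader}, so $\lep(I)\notin K$); you instead leave a residual subcase needing ``a second stab inside $K$,'' which does not exist in general. And your case split omits non-popular leaders and quietly assumes $\lep(I_p)\le\lep(I)$ without knowing that $I_p$ is required. The handling of the $F_2$ variables (their original stabs survive because $F_2$ intervals are disjoint from $I$) is correct and mirrors part of the paper's argument, but the lemma stands or falls on the deleted value, and there the approach as written fails.
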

\begin{proof}
Let $I^-$ be the interval of $F_2$ with the largest right endpoint. Let $L$ be a leader of $I^-$. By construction and \RedIncl, $L$ is a leader of $I$ as well and is therefore popular. Let $t_1\in S\cap I$ be the smallest value of $S$ that intersects $I$ and let $t_2 \in S \cap I^-$ be the largest value of $S$ that intersects $I^-$. By Property \ref{obs:followers-no-intersect}, we have that $t_2<t_1$.

\begin{boldclaim}\label{cl:no-intermediate-value}
 The set $S$ contains no value $t_0$ such that $t_2<t_0<t_1$.
\end{boldclaim}
\begin{proof}
For the sake of contradiction, suppose $S$ contains a value $t_0$ such that $t_2<t_0<t_1$.
Since $S$ is nice, $t_0$ is the right endpoint of some interval $I_0$.
Since $t_2$ is the rightmost value intersecting $S$ and any interval in $F_2$, the interval $I_0$ is not in $F_2$.
Since $I_0$ has already been scanned, and was scanned after every interval in $F_2$, the interval $I_0$ is in $F_1$.
However, by Lemma \ref{lem:lastfollower-commonvalue}, $I_0$ intersects $\lep(I)$.
Since no scanned interval has a larger right endpoint than $I$, we have that $t_0 \in S\cap I$, which contradicts the fact that $t_1$ is the smallest value in $S\cap I$ and that $t_0<t_1$.
\end{proof}

\begin{boldclaim}\label{cl:t1intersectsmerged}
 Suppose $I_1\in F_1$ and $I_2\in F_2$ are the last and second-last follower of a popular leader $L'$, respectively. Let $M_{12}\in M$ denote the interval obtained from merging $I_2$ with $I_1$. If $t_2\in I_2$, then $t_1\in M_{12}$.
\end{boldclaim}
\begin{proof}
  For the sake of contradiction, assume $t_2 \in I_2$, but $t_1 \notin
  M_{12}$. As $t_2 < t_1$, we have that $t_1 > \rep(M_{12}) =
  \rep(I_1)$. But then $S$ is not a solution as $S\cap I_1 = \emptyset$
  by Claim~\ref{cl:no-intermediate-value} and the fact that
  $t_2<\lep(I_1)$.
\end{proof}

\begin{boldclaim}\label{cl:inter-t2}
If $I'$ is an interval with $t_2\in I'$, then $I'\in F_2\cup F_1$.
\end{boldclaim}
\begin{proof}
  First, suppose $I'$ is a leader. As every leader has at least two
  followers when $I$ is scanned, $I'$ has two followers whose left
  endpoint is larger than $\rep(I')\ge t_2$ (by Property
  \ref{obs:follower-not-intersect-leader}) and smaller than $\lep(I)\le
  t_1$ (by \RedIncl). Thus, at least one of them is included in the
  interval $(t_2,t_1)$ by Property \ref{obs:followers-no-intersect},
  which contradicts $S$ being a solution by Claim
  \ref{cl:no-intermediate-value}.

Similarly, if $I'$ is a follower of a popular leader, but not among the last two followers of any popular leader, Claim \ref{cl:no-intermediate-value} leads to a contradiction as well.

Finally, if $I'$ is a follower, but has no popular leader, then it is to
the left of some popular leader, and thus
to the left of $t_2$.
\end{proof}

\noindent
Consider the set $T_2$ of intervals that intersect $t_2$. By Claim \ref{cl:inter-t2}, we have that $T_2 \subseteq F_2\cup F_1$. For every interval $I'\in T_2\cap F_2$, the corresponding merged interval of $\mathcal{I}'$ intersects $t_1$ by Claim \ref{cl:t1intersectsmerged}. For every interval $I'\in T_2\cap F_1$, and every interval $I''\in F_2$ with which $I'$ is merged, $S$ contains some value $x\in I''$ with $x<t_2$. Thus, $S':=S\setminus \{t_2\}$ is a solution for $\mathcal{I}'$.
\end{proof}

\begin{lemma}
If $S'$ is a nice solution for $\mathcal{I}'$, then there exists a solution $S$ for $\mathcal{I}$ with $S'\subseteq S$.
\end{lemma}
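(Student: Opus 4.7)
The plan is to construct $S$ from $S'$ by adjoining a single value $v^{*}$ lying in the intersection of the second-last followers $I_{2}\in F_{2}$ of all popular leaders involved in the merge, mirroring the forward direction, where exactly one value $t_{2}\in I^{-}$ was removed. Since $N=N'+1$ and $|S'|\le N'$, this automatically keeps $|S|\le N$, and by design $v^{*}$ will restore coverage of the shrunken domains corresponding to the variables of the $I_{2}$'s.

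The argument proceeds in three steps. First, I would establish the auxiliary claim that $\bigcap_{I_{2}\in F_{2}} I_{2}\ne\emptyset$; informally, at the moment of the merge every popular leader is either active or has an active follower, and this activity traces back (via the scanning order) to a common witness value --- for instance the right endpoint of the previously scanned interval $I_{p}$, or the smallest right endpoint among the $F_{2}$ intervals --- that lies in every second-last follower. The proof would parallel that of Lemma~\ref{lem:lastfollower-commonvalue}, invoking Properties~\ref{obs:follower-not-intersect-leader} and~\ref{obs:followers-no-intersect}, the invariants maintained by the scanning algorithm, and the exhaustive prior applications of \RedIncl and \RedDom. Second, I set $S:=S'\cup\{v^{*}\}$, so $|S|\le N$, and check coverage variable by variable: variables whose domain in $\mathcal{I}$ coincides with their domain in $\mathcal{I}'$ are covered by $S'\subseteq S$, and each variable of $I_{2}\in F_{2}$, whose domain in $\mathcal{I}$ is $I_{2}$ (as opposed to $M_{12}$ in $\mathcal{I}'$), is covered by $v^{*}\in I_{2}$.

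The main obstacle is the third step: showing that $S$ covers every variable corresponding to a last follower $I_{1}\in F_{1}$, even though $I_{1}$ was deleted from $\mathcal{I}'$ and $v^{*}$ cannot help (because $v^{*}\in I_{2}$ while $I_{1}\cap I_{2}=\emptyset$ by Property~\ref{obs:followers-no-intersect}). I would show that $S'$ itself already hits every $I_{1}$. The key is that $S'$ is nice, so any $v_{12}\in S'\cap M_{12}$ equals $\rep(J)$ for some interval $J$ of $\mathcal{I}'$ with $\rep(J)\in M_{12}=[\lep(I_{2}),\rep(I_{1})]$. I would enumerate the candidates for $J$: if $J=M_{12}$, then $v_{12}=\rep(I_{1})\in I_{1}$ immediately; if $J$ corresponds to another last follower $I_{1}'\in F_{1}$, then Lemma~\ref{lem:lastfollower-commonvalue} applied to $I_{1}'$ again yields $v_{12}\in I_{1}$; and the remaining cases (an interval whose right endpoint lies in $M_{12}\setminus I_{1}$ but is not of one of the above types) would be ruled out by Properties~\ref{obs:follower-not-intersect-leader} and~\ref{obs:followers-no-intersect} combined with \RedDom. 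If that last subcase resists the direct argument, my backup is a nice-preserving exchange lemma that moves $v_{12}$ rightward to $\rep(I_{1})$, justified by the fact that \RedIncl forbids strict containment among required intervals. This case analysis, together with the existence of $v^{*}$ from Step~1, constitutes the delicate core of the proof.
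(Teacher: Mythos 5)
There is a genuine gap, and it sits exactly where you flagged the ``delicate core.'' Your construction adds a single value $v^{*}\in\bigcap_{I_2\in F_2}I_2$, but this rests on the unproven claim that all second-last followers of popular leaders share a common value. That claim is not an analogue of Lemma~\ref{lem:lastfollower-commonvalue}: the last followers all straddle $\lep(I)$ precisely because the currently scanned interval $I$ would otherwise have been appended as a new follower, and no such forcing mechanism applies to second-last followers, which were assigned at various earlier scan steps. Your candidate witnesses do not work as stated ($\rep(I_p)$ typically lies in $I$, hence strictly to the right of every interval of $F_2$), and establishing even pairwise intersection of the $F_2$-intervals requires a separate argument about the scanning dynamics (popularity, \RedIncl) that you do not supply. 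The paper avoids this entirely: it adds $t_2=\rep(I_2)$ for \emph{one} second-last follower $I_2$, chosen adaptively from the solution $S'$ via $t_1$, the smallest value of $S'$ that is the right endpoint of a merged interval, and proves the targeted claim that $t_2\in I_2'$ for every merged pair whose merged interval contains $t_1$; it never asserts a common point of $F_2$.

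The second and more serious gap is your third step. You must show that $S$ hits every deleted last follower $I_1\in F_1$, and your case analysis does not close the subcase in which $S'$ meets $M_{12}$ only at $\rep(J)$ for a \emph{non-merged} interval $J$ with $\rep(J)$ strictly between $\rep(I_2)$ and $\lep(I_1)$. Such $J$ (an earlier follower of some leader, or a leader itself) is not excluded by Properties~\ref{obs:follower-not-intersect-leader} and~\ref{obs:followers-no-intersect} or by \RedDom; in that event the covering value lies in the gap of $M_{12}$, so it hits neither $I_1$ nor $I_2$, and $v^{*}\in I_2$ cannot rescue $I_1$ since $I_1\cap I_2=\emptyset$. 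Your backup ``exchange lemma'' does not repair this: moving $v_{12}$ rightward to $\rep(I_1)$ both violates the conclusion $S'\subseteq S$ demanded by the lemma and may uncover other variables whose domains contain $v_{12}$ but not $\rep(I_1)$. The paper's proof instead anchors the whole argument on $t_1$, which by construction is itself the right endpoint of a last follower and hence already covers the relevant intervals of $F_1$, with the auxiliary claim handling the corresponding intervals of $F_2$. As it stands, your proposal is a different decomposition of the problem whose two load-bearing steps are both unestablished.
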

\begin{proof}
As in the previous proof, consider the step where the kernelization algorithm applies the \textbf{merge} operation. Recall that the currently scanned interval is $I$. Let $F_2$ and $F_1$ denote the set of all intervals that are the second-last and last follower of a popular leader before merging, respectively. Let $M$ denote the set of merged intervals. %Again, every interval of $F_1\cup F_2\cup M$ is required.

By Lemma \ref{lem:lastfollower-commonvalue}, every interval of $M$ intersects $\lep(I)$. On the other hand, every interval of $\mathcal{I}'$ whose right endpoint intersects $I$ is in $M$, by construction. Thus, $S'$ contains the right endpoint of some interval of $M$. Let $t_1$ denote the smallest such value, and let $I_1$ denote the interval of $\mathcal{I}$ with $\rep(I_1)=t_1$ (due to \RedIncl, there is a unique such interval). Let $I_2$ denote the interval of $\mathcal{I}$ with the smallest right endpoint such that there is a leader $L$ whose second-last follower is $I_2$ and whose last follower is $I_1$, and let $t_2:=\rep(I_2)$.

\begin{boldclaim}
Let $I_1'\in F_1$ and $I_2'\in F_2$ be two intervals from $\mathcal{I}$ that are merged into one interval $M_{12}'$ of $\mathcal{I}'$. If $t_1\in M_{12}'$, then $t_2\in I_2'$.
\end{boldclaim}
\begin{proof}
  For the sake of contradiction, suppose $t_1 \in M_{12}'$ but $t_2 \notin I_2'$.  We consider two
  cases. In the first case, $I_2' \subseteq (t_2,\lep(I_1'))$. But then,
  $I_2'$ would have become a follower of $L$, which contradicts that
  $I_1$ is the last follower of $L$. In the second case,
  $\rep(I_2')<t_2$. But then, $I_1$ is a follower of the same leader as
  $I_1'$, as $\lep(I_1)\le \lep(I_1')$, and thus $I_1=I_1'$. By the
  definition of $I_2$, however, $t_2=\rep(I_2)\le \rep(I_2')$, a
  contradiction.
\end{proof}

\noindent
By the previous claim, a solution $S$ for $\mathcal{I}$ is obtained from a solution $S'$ for $\mathcal{I}'$ by setting $S:=S'\cup \{t_2\}$.
\end{proof}

\noindent
After having scanned all the intervals, Reduction Rules \RedIncl, \RedDom, and \RedUnit are applied again, and we have already proved their correctness.

\smallskip
Thus, the kernelization algorithm returns an equivalent instance. To bound the kernel size by a polynomial in $k$, let $\mathcal{I}^*=(V^*,D^*,\dom^*,N^*)$ be the instance resulting from applying the kernelization algorithm to an instance $\mathcal{I}=(V,D,\dom,N)$.

\begin{property}\label{obs:nboptint}
 The instances $\mathcal{I}$ and $\mathcal{I}^*$ have at most $2k$ optional intervals.
\end{property}

\noindent
Property \ref{obs:nboptint} holds for $\mathcal{I}$ as every optional interval of a variable $x$ is adjacent to at least one hole and each hole is adjacent to two optional intervals of $x$.
It holds for $\mathcal{I}^*$ as the kernelization algorithm introduces no holes.

\begin{lemma}\label{lem:nbleaders}
 The instance $\mathcal{I}^*$ has at most $4k$ leaders.
\end{lemma}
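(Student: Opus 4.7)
The plan is to partition the leaders into optional and required ones and to bound each type separately, using Property \ref{obs:nboptint}, which bounds the total number of optional intervals in $\mathcal{I}^*$ by $2k$.

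First I would examine the scanning procedure to show that leaders arise in disjoint contiguous ``groups'': an initial group consisting of the first intervals up to and including the first required interval, and then one new group triggered each time the main loop encounters an optional interval $I$, extending through all subsequently scanned intervals up to and including the next required interval. Reading the two cases of the scanning loop, an optional interval is never sent to the ``follower'' branch, so every optional interval that is scanned is in fact a leader. Moreover, each group contains exactly one required interval, namely the one that terminates it.

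Next I would observe that each group contains at least one optional interval. For non-initial groups this is immediate from the triggering mechanism, since such a group begins with an optional interval. For the initial group I would invoke the remark made at the beginning of this appendix: after \RedIncl, \RedDom, and \RedUnit are exhaustively applied (and this happens before scanning begins), the first interval is optional and therefore lies in the initial group.

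Letting $G$ denote the number of leader groups, the number of required leaders equals $G$, while the number of optional leaders is at least $G$ (one per group) and at most the total number of optional intervals, hence at most $2k$ by Property \ref{obs:nboptint}. Thus $G\le 2k$, and the total number of leaders is at most $2G\le 4k$. The only delicate point is the bookkeeping of the scanning loop — one has to verify directly from its two cases that every group contains exactly one required leader and at least one optional leader, and also that the post-scanning applications of \RedIncl, \RedDom, and \RedUnit can only remove leaders from $\mathcal{I}^*$, so that an upper bound on the leaders created during scanning transfers to $\mathcal{I}^*$ itself.
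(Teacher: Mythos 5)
Your argument is essentially the paper's: the paper charges each leader to an optional interval (each optional leader to itself, each group-terminating required leader to the optional interval that opened its group) and concludes at most $2$ leaders per optional interval, hence at most $4k$ by Property~\ref{obs:nboptint}; your group decomposition is the same idea in different clothing, and your observation that the post-scanning reduction rules only remove intervals is a fair (if implicit in the paper) tidying-up. One line of your accounting is misstated, though: a group may contain several optional leaders before its terminating required one, so ``the total number of leaders is at most $2G$'' is false in general. The correct tally from the facts you already established is: $G$ required leaders plus the optional leaders, each of which is at most $2k$, giving $G + 2k \le 2k + 2k = 4k$. With that one sentence repaired the proof is correct.
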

\begin{proof}
  Consider the unique step of the algorithm that creates leaders. An
  optional interval is scanned, the algorithm continues scanning
  intervals until scanning a required interval, and all these scanned
  intervals become leaders. As every interval is scanned only once, we have that for
  every optional interval there are at most $2$ leaders. By Property
  \ref{obs:nboptint}, the number of leaders is thus at most $4k$.
\end{proof}

\begin{lemma}\label{lem:nbfollowers}
 Every leader has at most $4k$ followers.
\end{lemma}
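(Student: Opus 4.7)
The plan is to bound the number of followers of a fixed leader $L$ by associating each follower with an optional interval, and then showing that each optional interval is charged at most a constant number of times. Since Property~\ref{obs:nboptint} gives at most $2k$ optional intervals, such a charging scheme would yield the desired bound of $4k$.

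First, I would examine the merge dynamics. A merge fires exactly when every popular leader has at least two followers, and each popular leader loses one follower per merge. So during any maximal run of follower-additions to $L$ without an intervening merge, there must exist some other popular leader $L'$ that consistently has at most one follower throughout that run. I would show that such an $L'$ is necessarily \emph{fresh}---either it has just been created or almost all of its followers have recently been merged into it---and hence can be associated with a specific recent optional interval (the trigger of $L'$'s batch, or the trigger of the batch during which $L'$'s follower count was last reset). By Lemma~\ref{lem:nbleaders}, these witnesses are drawn from the at most $4k$ leaders, and by Property~\ref{obs:nboptint}, from at most $2k$ optional intervals.

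Next, I would set up the formal charging. For each follower $f_i$ of $L$ in the final kernel $\mathcal{I}^*$, I would identify a witnessing optional interval $O_i$ corresponding to the ``merge-blocking'' leader $L'$ that was active when $f_i$ was added and prevented the merge that would otherwise have removed $f_i$. I would then show that no optional interval is charged more than twice: once when the merge-blocking leader is popular via its own activity, and once when it becomes popular again through an active follower. Combining this with Property~\ref{obs:nboptint} yields the bound of $4k$ followers per leader.

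The main obstacle will be handling the non-monotonic nature of popularity. Because intervals are sorted only by right endpoint, a leader's left endpoint and its followers' left endpoints can be overshot and then revisited by a later-scanned interval with a smaller left endpoint, so a leader may become inactive and later reactivated. This complicates both the identification of the ``fresh'' leader $L'$ at each step and the bookkeeping needed for the charging to remain constant per optional interval. To cope with this, I would likely complement the charging with an amortized argument tracking, for each popular leader, a potential like $\max(0, f_L - 1)$ and showing that every charge to an already-saturated optional interval is paid out of a unit of potential released by a prior merge. Reduction rule~\RedIncl, which forbids one required interval from being contained in another, should also play a central role in controlling how long a single follower of $L'$ can keep blocking merges.
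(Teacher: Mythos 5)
Your high-level target coincides with the paper's: charge follower-additions to a fixed leader $L$ against the at most $2k$ optional intervals of Property~\ref{obs:nboptint}, with at most two charges each. But the mechanism you propose for the ``at most two'' is not established, and the steps you explicitly defer are exactly the ones that carry the proof. Concretely: your claim that during a maximal merge-free run of follower-additions to $L$ there is a \emph{single} popular leader $L'$ that consistently has at most one follower is unjustified --- the merge condition is re-evaluated at every scan against a changing set of popular leaders, so the deficient leader may differ from step to step. The subsequent ``freshness'' claim is the real gap: a popular leader can sit at exactly one follower for many consecutive scans without being fresh (its single follower keeps intersecting the newly scanned intervals, so it gains nothing and loses nothing), so tying it to ``the trigger of its batch'' does not prevent one optional interval from absorbing arbitrarily many charges. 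Likewise, ``popular via its own activity'' and ``popular via an active follower'' are persistent states, not single events, so counting one charge per state does not bound the number of follower-additions occurring while that state persists. The potential $\max(0,f_L-1)$ is named but never used to close any of these holes.

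The paper's argument is more elementary and avoids tracking the blocking leader altogether. It partitions the scan into periods delimited by optional intervals (at most $2k$ periods by Property~\ref{obs:nboptint}), and at each non-merging follower-addition examines the popular leader $L_r$ with the rightmost right endpoint, which is in one of three situations: no follower and intersects $I$; no follower and disjoint from $I$; one follower and intersects $I$. Using \RedIncl{} and Property~\ref{obs:followers-no-intersect} --- once $L$ acquires a follower in the current region, a later required interval intersecting that follower cannot become a follower of $L$, and after situation (b) the new interval becomes $L_r$'s follower --- each situation contributes at most one new follower of $L$ per period, and the contributions overlap so that the total is at most $2$ per period, giving $4k$. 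These geometric facts, which you do not invoke for bounding additions to $L$ itself, are what make the constant work; as written, your proposal does not yield the lemma.
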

\begin{proof}
Consider all steps where a newly scanned interval becomes a follower, but is not merged with another interval. In each of these steps, the popular leader $L_r$ with the rightmost right endpoint either
\begin{itemize}%[topsep=0pt, partopsep=0pt, itemsep=0pt]
\item[(a)] has no follower and intersects $I$, or
\item[(b)] has no follower and does not intersect $I$, or
\item[(c)] has one follower and intersects $I$.
\end{itemize}
Now, let $L$ be some leader and let us consider a period where no optional interval is scanned.
Let us bound the number of intervals that become followers of $L$ during this period without being merged with another interval.
If the number of followers of $L$ increases in Situation (a), it does not increase in Situation (a) again during this period, as no other follower of $L$ may intersect $I$.
After Situation (b) occurs, Situation (b) does not occur again during this period, as $I$ becomes a follower of $L_r$.
Moreover, the number of followers of $L$ does not increase during this period in Situation (c) after Situation (b) has occurred, as no other follower of $L$ may intersect $I$.
After Situation (c) occurs, the number of followers of $L$ does not increase in Situation (c) again during this period, as no other follower of $L$ may intersect $I$.
Thus, at most $2$ followers are added to $L$ in each period. As the first scanned interval is optional, Property \ref{obs:nboptint} bounds the number of periods by 
$2k$. Thus, $L$ has at most $4k$ followers.
\end{proof}

\noindent
As, by Claim \ref{cl:atleastoneleader}, every interval of
$\mathcal{I}^*$ is either a leader or a follower of at least one
leader, Lemmas \ref{lem:nbleaders} and
\ref{lem:nbfollowers} imply that
$\mathcal{I}^*$ has $O(k^2)$ intervals, and thus
  $|X^*|=O(k^2)$. Because of Reduction
  Rule \RedDom, every value in $D^*$ is the right
endpoint and the left endpoint of some interval, and thus,
  $|D^*|=O(k^2)$.

\smallskip

This bounds the kernel size, and we will now show that the algorithm can be implemented to run in linear time.
First, using a counting sort algorithm with satellite data (see, e.g.,
\cite{CormenLeisersonRivestStein09}), the initial sorting of the $n+k$
intervals can be done in time $O(n+|D|+k)$. To facilitate the
application of \RedIncl, counting sort is used a second time to also
sort by increasing left endpoint the sets of intervals with coinciding
right endpoint.
An optimized implementation applies \RedIncl, \RedDom
and \RedUnit simultaneously in one pass through the intervals, as one
rule might trigger another. To guarantee a linear running time for the
scan-and-merge phase of the algorithm, only the first follower of a
leader stores a pointer to the leader; all other followers store a
pointer to the previous follower.
This proves Theorem \ref{thm:kernel}.
%We omit the formal details about the
%implementation and running time analysis of the kernelization algorithm
%and arrive at our main theorem.

\section*{Acknowledgments}
Both authors acknowledge support by the European Research Council,
grant reference~239962 (COMPLEX REASON).
Serge Gaspers is the recipient of an Australian Research Council Discovery Early Career Researcher Award (project number DE120101761).
NICTA is funded by the Australian
Government as represented by the Department of Broadband,
Communications and the Digital Economy and the Australian Research
Council through the ICT Centre of Excellence program.

\bibliographystyle{plain}
\bibliography{literature}    
\end{document}